\DeclareMathOperator*{\argmin}{arg\,min}
\newtheorem{defi}{Definition}
\newcommand{\ModelName}{Oriented Line Segment Target Model}
\newcommand{\ModelNameAcro}{OLS}
\newcommand{\ProblemName}{Oriented Line Segment Coverage Problem}
\newcommand{\ProblemNameAcro}{OLSC}
\DeclareFontFamily{U}{tipa}{}
\DeclareFontShape{U}{tipa}{m}{n}{<->tipa10}{}
\newcommand{\arc@char}{{\usefont{U}{tipa}{m}{n}\symbol{62}}}%
\newcommand{\arc}[1]{\mathpalette\arc@arc{#1}}
\newcommand{\arc@arc}[2]{%
  \sbox0{$\m@th#1#2$}%
  \vbox{
    \hbox{\resizebox{\wd0}{\height}{\arc@char}}
    \nointerlineskip
    \box0
  }%
}
\title{On Realistic Target Coverage by Autonomous Drones}
\begin{document}

\author{Ahmed Saeed}
\affiliation{%
  \institution{Georgia Institute of Technology}
}
\email{ahmed.saeed@gatech.edu}

\author{Ahmed Abdelkader}
\affiliation{%
  \institution{University of Maryland, College Park}}
       \email{akader@cs.umd.edu}
% 3rd. author
\author{Mouhyemen Khan}
\affiliation{%
  \institution{Georgia Institute of Technology}}
       \email{mouhyemen.khan@gatech.edu}
\author{Azin Neishaboori}
\affiliation{%
  \institution{Carnegie Mellon University}}
       \email{azin.neishaboori@gmail.com}
% 5th. author
\author{Khaled A. Harras}
\affiliation{%
  \institution{Carnegie Mellon University}}
       \email{kharras@cs.cmu.edu}
% 6th. author
\author{Amr Mohamed}
 \affiliation{%
  \institution{Qatar University}}
       \email{amrm@qu.edu.qa}

\renewcommand{\shortauthors}{A. Saeed et al.}

%\def\@copyrightspace{\relax}
%\makeatother
%\maketitle	
%\vspace{-11cm}

\begin{abstract}
Low-cost mini-drones with advanced sensing and maneuverability enable a new class of intelligent sensing systems.
{\color{black}
To achieve the full potential of such drones, it is necessary to develop new enhanced formulations of both common and emerging sensing scenarios. Namely, several fundamental challenges in visual sensing remain unsolved including: 1) Fitting sizable targets in camera frames; 2) Effective viewpoints matching target poses; 3) Occlusion by elements in the environment, including other targets. In this paper, we introduce Argus: an autonomous system that utilizes drones to incrementally collect target information through a two-tier architecture. To tackle the stated challenges, Argus employs a novel geometric model that captures both target shapes and coverage constraints. Recognizing drones as the scarcest resource, Argus aims to minimize the number of drones required to cover a set of targets. We prove this problem is NP-hard, and even hard to approximate, before deriving a best-possible approximation algorithm along with a competitive sampling heuristic which runs up to 100x faster according to large-scale simulations. To test Argus in action, we demonstrate and analyze its performance on a prototype implementation. Finally, we present a number of extensions to accommodate more application requirements and highlight some open problems.
}
\end{abstract}

%\begin{CCSXML}
%<ccs2012>
%<concept>
%<concept_id>10003752.10010061.10010063</con%cept_id>
%<concept_desc>Theory of computation~Computational geometry</concept_desc>
%<concept_significance>500</concept_significance>
%</concept>
%<concept>
%<concept_id>10010520.10010553.10003238</concept_id>
%<concept_desc>Computer systems organization~Sensor networks</concept_desc>
%<concept_significance>500</concept_significance>
%</concept>
%</ccs2012>
%\end{CCSXML}

%\ccsdesc[500]{Theory of computation~Computational geometry}
%\ccsdesc[500]{Computer systems organization~Sensor networks}

%\printccsdesc

%\keywords{target coverage; full-view coverage; drone-based surveillance; art gallery problem}
%\keywords{target coverage; full-view coverage; drone-based surveillance; art gallery problems; visibility; approximation algorithm}

\maketitle	

%\vspace{-0.15in}
\section{Introduction}

Public spaces such as airports, train stations, shopping malls and schools, are usually monitored with the aid of security cameras mounted at key locations. Such cameras greatly help overview the area of interest and guide first responders in the event of an emergency, which can have a significant impact on crime \cite{vigne2011evaluating}.
Moreover, visual sensor systems enable the automation of complex tasks like crowd counting, event detection, object tracking, target identification, and activity recognition \cite{denman2015automatic}. The automation of these tasks has the potential of providing better solutions to several operational and security issues in public spaces (e.g., queue length estimation and perimeter protection).

{\color{black} With the increasing availability of low-cost mini-drones, visual sensors are currently finding applications beyond surveillance in disaster response \cite{disaster}, structural inspection \cite{Bircher2017}, sport streaming \cite{sports} and cinematography \cite{joubert2016towards}.}
{\color{black} We begin by examining the application needs where such compact mobile sensors can be exploited by more sophisticated sensor systems. Then, before describing the system we design to fill this gap, we review recent progress in mini-drone technologies, which is the driving force behind our work, and highlight the anticipated developments which will enable more advanced systems like the one we propose. Finally, we summarize the contributions and the structure of the paper.}

\subsection{Practical Motivation}\label{sec:motivation}
There are several theoretical and practical challenges associated with the design of effective and efficient visual sensor systems as exemplified by recent work on surveillance. Such intelligent systems with advanced features like automatic identification and recognition impose a set of requirements on video footage:
\begin{itemize}
 \item Subjects should be facing the camera \cite{blanz2005face} or within a certain viewing angle \cite{bay2006surf}.
 \item Relevant portions of subjects should be fully captured, preferably by a single camera to avoid the challenging task of stitching images from multiple viewpoints \cite{Lin_2015_CVPR}.
 \item As a prerequisite, occlusions and blind spots should be avoided by positioning cameras accordingly \cite{weinland2010making}.
\end{itemize}

An extreme approach to some of these challenges is to increase the density of deployed cameras such that any object, within the area of interest, is covered from all angles \cite{wang2011full,wang2011barrier}. This approach requires a large number of cameras, thus incurring a rather high cost \cite{7218437}. Furthermore, targets are typically modeled as mere points which results in two issues. First, mutual occlusion between targets and occlusion by obstacles in the area are not accounted for, which can create blind spots. Second, assuming sizable targets can be represented by multiple points, there is no guarantee that the target will be fully captured in the frame of at least one camera if each point is treated separately and may be covered by a different camera.
Another approach is to optimize the orientations of cameras in a static deployment to minimize occlusions, however, this does not ensure the target will be facing the camera \cite{tezcan2008self}. It is clear that modeling targets by more than mere ``blips'' can improve the quality of the collected imagery and therefore deliver more effective visual sensing systems.

{\color{black} To the best of our knowledge, no earlier work in smart surveillance tackled these challenges simultaneously. Very recently, \cite{ETH, cin_director} presented novel systems for automated cinematography using drones, taking into account framing objectives, occlusions and collision-avoidance. However, they focus on scenarios involving a known number of actors where a user specifies the desired location of each actor in the camera frame. More crucially, their experiments utilize a motion capture system where actors wear helmets equipped with tracking chips. In contrast, our system leverages weaker setups to estimate target parameters and delivers close-up views of the targets of interest without any user intervention.
% We further highlight the similarities to these works below where appropriate.
}

{\color{black}
\subsection{Next Generation Sensors}\label{sec:next_gen}
The recent years have witnessed rapid developments in mini-drone technologies. Professionals and amateurs alike now have access to a wide range of drone sizes and capabilities for fun and profit. A number of proposed drone classifications, along with multiple examples, can be found in~\cite{HASSANALIAN201799}. We are particularly interested in drones at the lower end of the spectrum, exemplified by robotic flying insects~\cite{Wood_TR08}, which are now available as open-hardware~\cite{open_hardware_Sensors17}; see~\cite{Helbling2017} for a recent review. It is remarkable that such small platforms can be equipped with high end sensors, which enables a multitude of previously unforeseen applications. Specifically, there has been an equally remarkable progress in camera sensors with unprecedented feature sizes~\cite{photography_small_SPM16}. Despite the stringent constraints of weight and power consumption on top of the complexity of the required tasks, there has been a steady progress in the capabilities of these robotic flying insects~\cite{MAV_stereo_TR17}.

We anticipate the utilization of these platforms for futuristic applications through a combination of advanced system architectures and novel user-friendly deployments. As seen in the parallel developments of the \emph{Internet of Things (IoT)}~\cite{IoT_context, IoT_cities, IoT_industry}, these sensing platforms have a great potential in virtually every application domain. With the introduction of commercial intelligent personal assistants and home automation technologies like Alexa~\cite{Alexa} and Google Home~\cite{GHome}, users are becoming more familiar with such ambient devices. We envision that such systems will soon be endowed with mobile agents that live symbiotically with users both indoors and outdoors~\cite{SymbIoT}.
}

\subsection{Overview}\label{sec:overview}
{\color{black}
In this paper, we introduce Argus: an autonomous system that tackles the challenges identified in~\ref{sec:motivation} by exploiting the rapid advancements in mini-drone technologies and their anticipated applications in surveillance, crowd monitoring \cite{finn2012unmanned}, infrastructure inspection \cite{Bircher2017} and cinematography \cite{joubert2016towards}. To go beyond traditional coverage, Argus accumulates target information by dynamically controlling the available sensors to estimate target parameters before assigning the mobile drones to eliminate blind spots and capture frontal views of the subjects of interest.}
{\color{black} The proposed two-tier architecture leverages recent progress in persistent coverage~\cite{Eyes_in_the_sky}. Argus employs a top tier subsystem to detect targets within the monitored area and maintain estimates of their states. Given this information, the lower tier subsystem is responsible for controlling the available drones to obtain high quality views of the targets of interest.
%{\color{black} The great potential of such systems catalyzes the development of algorithms to orchestrate the placement of mobile cameras under different application requirements (e.g., mobile target pursuit, camera motion planning and scheduling, and connectivity maintenance between mobile cameras).}
A crucial aspect of the lower tier is how targets are modeled and how the locations where drones are assigned to capture target footage are computed.} Argus utilizes the \emph{\ModelName} (\emph{\ModelNameAcro}), a new geometric model we develop to incorporate target pose, size, and potential occluders.
{\em With drones being the most valuable resource, we focus on the problem of drone placement to cover targets under the {\ModelNameAcro} model while minimizing the number of drones needed.}

%\emph{\ModelNameAcro} is an enhanced geometric model that allows for representation of the fine grain target information which is inaccessible through the oversimplified point model typically adopted in typical visual sensor coverage solutions. 
Intuitively, \emph{\ModelNameAcro} looks at a cross-section through the object and fits a line segment and orientation to estimate its size and pose. While still being simple, the new model is more complex than plain points and requires a more advanced system to estimate it and new algorithms to utilize it. We show that minimizing the number of drones under \emph{\ModelNameAcro} is NP-hard and even hard to approximate. Then, we proceed to develop a best-possible $O(\log{n})$-approximation algorithm, where $n$ is the number of targets. The algorithm is based on a novel spatial subdivision of the search space for camera placement by the various coverage constraints, which elucidates the treatment of the new \emph{\ModelNameAcro} model for computation. We leverage these insights to develop a more efficient coverage heuristic that almost matches the performance of the approximation algorithm while running up to 100x faster in our simulations with large numbers of targets and various target and camera parameters.

{\color{black}
We implement a fully autonomous prototype of Argus with two AR.Drone 2.0 quadcopters fitted with camera sensors and a fixed PTZ-camera.} We use the prototype to demonstrate the drastic difference in coverage quality enabled by \emph{\ModelNameAcro} compared to the traditional model of targets as mere blips on the radar.
% We also show that adopting the enhanced \emph{\ModelNameAcro} model does not introduce significant overheads, compared to the navigation and control algorithms already running in the system, when used to detect and cover up to five synthetic targets.
Our experiments with synthetic targets show that adopting the enhanced \emph{\ModelNameAcro} model does not introduce significant overheads with respect to the navigation and control algorithms already running in the system. {\color{black} Thus, higher quality coverage can be achieved through an intuitive target model suitable for real-time applications.}
%Furthermore, we compare, through simulations, the proposed approximation and heuristic drone placement algorithms under large numbers of targets and various target and camera parameters.  Our simulations show that the heuristic achieves comparable results to the approximation algorithm in terms of the number of drones while running up to 100x faster.

\subsection{Further Applications for Argus}
Although surveillance is the natural use case for Argus, the same workflow can immediately be used to plan a deployment of static cameras to cover a set of static targets (e.g., artifacts in a museum). To further demonstrate the utility of the proposed system, we discuss particularly relevant applications that have received a growing interest recently.

In structural inspection, Argus is able to represent the components to be inspected as wide objects that can occlude one another as well as provide a limited number of viewpoints that need to be visited \cite{Bircher2017}. In such scenarios, the number of targets can be very large. We analyze the scalability of Argus in Section~\ref{sec:scale}.

Cinematography, both in reality or in virtual worlds, frequently considers the planning of camera trajectories to obtain the desired shots in a given scene \cite{cin_TOG15, director_ICMR11}. Argus can generate candidate viewpoints given a description of anticipated target locations, which may even be planned by a director in a cinematographic context. Argus also accommodates additional requirements on camera placements to help optimize the shots as we discuss in Section~\ref{sec:extend}.

Argus can readily be used in several other applications, e.g., defense and public safety operations \cite{DPS} as well as disaster response \cite{disaster}. The proposed algorithms can help plan the deployment of autonomous search and rescue robots and troops to scan and secure an area while minimizing the resources allocated to each task. Since Argus takes into account target sizes and how they may be positioned and oriented within a complex environment occupied by various obstacles (e.g., buildings, terrain and foliage), besides possibly more application-specific constraints, it is able to suggest the best locations to engage the targets of interest. The efficiency of the proposed algorithms make them well-suited to the dynamic nature of such scenarios where the deployment configuration may need to be updated frequently.

\vspace{\baselineskip}
\vspace{\baselineskip}
\subsection{Summary of Contributions and Paper Organization}

The contributions of this paper are four fold:

\begin{itemize}
\item We propose and develop a fully autonomous system that controls drones to provide high quality unobstructed coverage of targets from appropriate viewpoints based on a novel \textit{\ModelName} \textit{(\ModelNameAcro)}.

{\color{black}
\item We prove that computing the minimum number of drones to cover a set of \textit{\ModelNameAcro} targets is NP-hard and even hard to approximate.}

\item We design a best-possible $O(\log{n})$-approximation algorithm and an efficient heuristic for coverage. We compare the proposed algorithms through extensive simulations.

\item We implement and analyze Argus, a complete prototype of the system, to demonstrate the superior quality of coverage the system can offer, and gauge the overhead of the proposed algorithms in action. 
\end{itemize}

{\color{black}
The rest of the paper is organized as follows. In Section~\ref{sec:system}, we describe the system architecture with an emphasis on the top tier. Then, we present the new target model and formulate the coverage problem under this model, which is the focus of the lower tier, in Section~\ref{sec:coverage_problem}. We establish the hardness of covering \textit{\ModelNameAcro} targets in Section~\ref{sec:hardness}. We proceed to analyze the coverage constraints in Section~\ref{sec:cov} before developing the coverage algorithms in Section~\ref{sec:approx}.  In Section~\ref{sec:implementation}, we report on the implementation of a full prototype of Argus, which we use together with simulations to evaluate the proposed algorithms in Section~\ref{sec:evaluation}. Recognizing scenarios where the proposed system may not be adequate, we present a number of possible extensions and other open problems in Section~\ref{sec:extend}. Finally, we present an extensive survey of related work in Section~\ref{sec:related_work} and conclude the paper in Section~\ref{sec:conclusion}.
}

\section{System Architecture}
\label{sec:system}

Argus is a fully autonomous system that aims to capture high quality video footage of identified targets of interest subject to coverage constraints. Argus employs a two-tier architecture. The top tier, used for coarse grain coverage, provides the location, width, and pose of targets and obstacles. The lower tier uses the output of the top tier to provide fine grain coverage using mobile drones; a setup we believe will become more convenient as drones get smaller, e.g., \cite{mulgaonkar2014towards}.  Having a hierarchy of surveillance systems allows each tier to be responsible for different tasks \cite{kulkarni2005senseye}; see the survey in \cite{natarajan2015multi}.

{\color{black} For the top tier, Argus leverages recent work on persistent coverage \cite{Eyes_in_the_sky, robust_cov, pers_ICRA17} to monitor an area of interest and estimate basic target information; see the surveys in~\cite{machines2010013, coop_cov_review}. Traditionally, static PTZ cameras suffice for this function, especially in indoors environments.}
{\color{black} For instance, PTZ cameras were used in \cite{chang2013automatic}} to identify the type of bags carried by subjects based on the locations determined by static cameras. Alternatively, the location and pose information of targets can be provided by non-visual means like radar and device-free localization systems (e.g., \cite{adib20143d} which can detect both the location and pose of human subjects using Wi-Fi signals). {\color{black} A more general system was described in \cite{Eyes_in_the_sky}, which accommodates heterogeneous sensors of varying degrees of freedom to achieve the required coverage. More recent proposals enable continuous adaptive coverage of changing environments~\cite{robust_cov} possibly containing obstacles \cite{pers_ICRA17}. Furthermore, these systems can be extended to take into account both energy~\cite{docking} and communication constraints~\cite{relay_networks, video_streaming}.} Using the information collected by the top tier sensors, the lower tier utilizes mobile drones to eliminate blind spots and capture frontal views of the subjects of interest.

%Figure~\ref{fig:arch} gives an overview of our proposed smart surveillance system that relies on \textit{\ProblemNameAcro} to ensure that targets are viewed from their respective angles. We consider a system where a central localization mechanism is used to locate all targets and identify their orientation. This central system does not posses the granularity need for fine grain identification and recognition. {\ProblemNameAcro} is used to compute a camera configuration to better view the localized targets. Mobile cameras are \textit{autonomously} dispatched to be positioned to better view targets. Mobile cameras equipped with processing power can perform initial surveillance video processing.

\begin{figure}[t]
\centering
 \includegraphics[width=0.90\linewidth ]{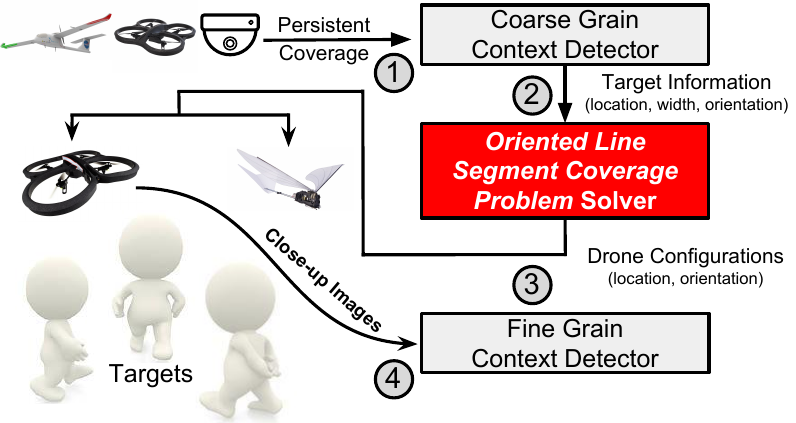}
%\vspace{-0.05in}
  \caption{Operational flow of Argus.}
%\vspace{-0.25in}
  \label{fig:arch}
\end{figure}

Figure~\ref{fig:arch} depicts the operational flow of the Argus system. {\color{black} The system consists of three main components: 1) \textit{Coarse Grain Context Detector}, 2) \textit{\ProblemNameAcro} \textit{Solver}, and 3) \textit{Fine Grain Context Detector}. 
The \textit{Coarse Grain Context Detector} is responsible for obtaining basic target information, exploiting data collected by the top tier, which the \textit{\ProblemNameAcro} \textit{Solver} uses to determine the positioning strategy of drones in the lower tier. The inputs that the \textit{\ProblemNameAcro} \textit{Solver} requires are the location, width, and pose of each target. The output of the \textit{\ProblemNameAcro} \textit{Solver} are used to move the lower tier drones to capture high quality unobstructed images of whole targets.} These images can then be further processed, through the \textit{Fine Grain Context Detector}, by different context extraction algorithms \cite{hu2004survey}. \textit{We realize that implementing each component is challenging in its own right with many open research problems. {\color{black} In particular, we propose a generic design which can be adapted to the specific application and context in question. Figure~\ref{fig:arch} hints at the different options for implementing each layer using the appropriate combination of sensors that can range from high altitude surveillance drones to flapping wing micro-drones like the one shown in the figure from~\cite{flapping_wing_image}.} Hence, we focus on the \textit{\ProblemNameAcro} \textit{Solver} and present vanilla approaches to the other components of Argus {\color{black} using typical hardware that can be found at most research labs}.}

%\subsection{Polygon Illumination and \ProblemNameAcro}

%xxx ELABORATE ON THE DETAILS OF THE MAPPING

%We are interested in variants of the art gallery problem where it is required to cover the boundary of an art gallery, possibly with holes, by placing guards anywhere in its interior. Informally, we may represent each edge in the art gallery by an {\ModelNameAcro} facing the inside of the gallery as shown in Figure \ref{fig:theorem_explanation}. It is clear that minimizing the number of \textit{camera guards} required to cover all {\ModelNameAcro} targets solves the original art gallery problem.

\section{The Coverage Problem}
\label{sec:coverage_problem}
{\color{black}
In this section, we formulate the problem of covering a set of targets under the new \textit{\ModelNameAcro} model as required by the lower tier of the Argus system. We start by defining a convenient abstract model for both sensors and targets along with the required conditions for coverage. Then, we formally define the coverage problem that aims to minimize the number of drones needed for coverage. Finally, we highlight and justify the assumptions we make on the top tier in order to provide the information needed to provide the input to the coverage problem.}

\subsection{Definitions}\label{sec:model}
{\color{black}
We define the geometric models we use to capture the coverage problem at hand; see Figure~\ref{fig:camera_p}.
}

\begin{figure}
\centering
\subfigure[AOV ($\theta$) and VD ($\alpha$). \label{fig:aov}]{\includegraphics[width=.17\linewidth ]{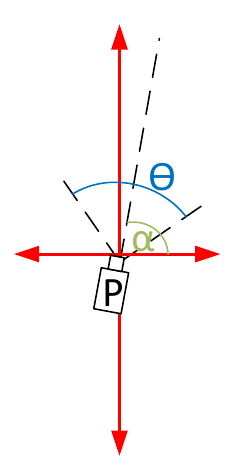}}
\hspace{0.2in}
\subfigure[Parameters controlling the shape and size of a camera's FOV (right). Violations of coverage constraints for targets T1, T2, T4 (left). Target T5 is covered but not fully covered. \label{fig:camera_target}]{\includegraphics[width=.4\linewidth]{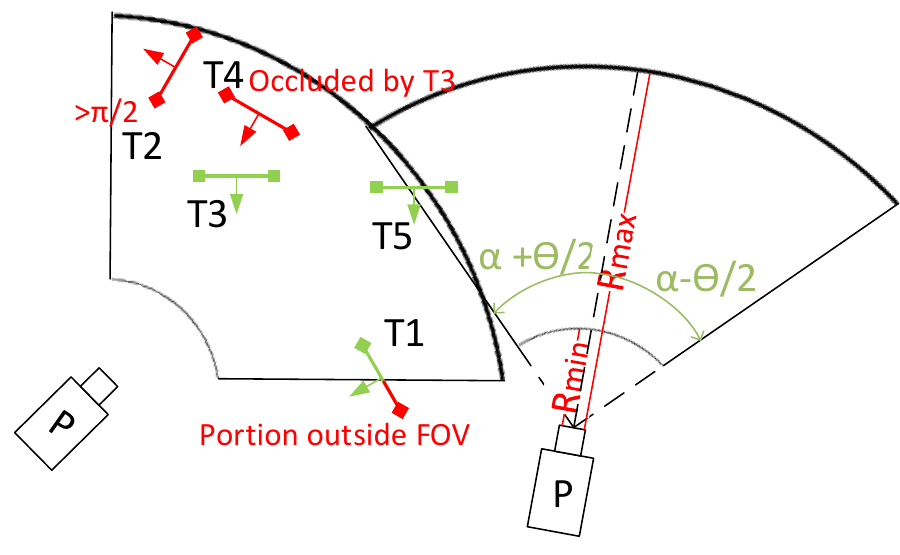}}
%\vspace{-0.04in}
\caption{Camera parameters.}
%\vspace{-0.2in}
\label{fig:camera_p}
\end{figure}

\subsubsection{Sensor Model} We think of sensors as autonomous mini-drones, equipped with cameras. The configuration of a sensor is a tuple $S_i = \langle P_i, \alpha_i, \theta, R_{min}, R_{max} \rangle$, where: $P_i$ is the location of the sensor in 2D and $\alpha_i$ is the \textbf{Viewing Direction (VD)} measured counter-clockwise from the positive x-axis (Figure~\ref{fig:aov}), $\theta$ is the \textbf{Angle of View (AOV)}, $R_{min}$ and $R_{max}$ are the minimum and maximum allowable distances between the camera and any target for acceptable viewing quality. Similar models has been used for anisotropic or directional sensors, e.g., \cite{amac2011coverage}.

\begin{defi} \emph{Field of View (FOV) (Figure~\ref{fig:camera_target}):} The unoccluded area that can be viewed by a sensor with an acceptable quality. Formally, it is the spherical frustum having the camera at $P$ as its apex with an axis at angle $\alpha$, an opening angle of $\theta$, and limited by $R_{min}$ and $R_{max}$ with any occlusions subtracted.
\end{defi}

\subsubsection{{\ModelName} ({\ModelNameAcro})} We model targets in 2D as line segments whose lengths are the width of the targets, and pose is a vector perpendicular to the line segment. Larger targets can be represented by one or more line segments representing their different aspects and their corresponding poses. Formally, the configuration of a target is the tuple $T_j = \langle P^s_j, P^e_j, \overrightarrow{D_j} \rangle$, where $P^s_j$ and $P^e_j$ are the start and end points of the line segment and $\overrightarrow{D_j}$ is the pose vector. Furthermore, we define $M_j$ as the midpoint of the target and let $W_j$ denote its width. We assume $W_j \ll R_{max} \:\: \forall \: j$.

\subsubsection{Obstacle Model} We reuse the line segment primitive to represent obstacles by the segments along their boundaries. Obstacle $O_k$ is a chain of segments $\{\langle P^s_1, P^e_1 \rangle, \langle P^s_2, P^e_2 \rangle, \dots\}$, which block visibility but, unlike targets, have no pose.

\subsubsection{Coverage Model} A sensor $S_i$ is said to fully cover a target $T_j$ if the following conditions apply: 1) $T_j$ falls in the FOV of $S_i$ which means that it is neither too far nor too close and that a line segment from $S_i$ to any point on $T_j$ does not intersect any other target or obstacle. 2) The angle between $\overrightarrow{D_j}$ and $\overrightarrow{M_j P_i}$ is $\leq \pi/2$, meaning that $S_i$ can capture frontal views of $T_j$. See Figure~\ref{fig:camera_target} for a summary.

%\vspace{-0.075in}
\begin{defi}
\label{def:fullC}
\emph{Full Coverage:} A target $T_j$ is fully covered if $\overline{P^s_jP^e_j}$ is fully contained in the FOV of some camera $S_{i^\ast}$, with $T_j$ facing $S_{i^\ast}$.
\end{defi}

\subsection{Minimizing the Number of Drones}
We formally define the coverage problem for \textit{\ModelNameAcro} targets and briefly discuss its hardness and the approaches we take to compute a solution.

%\vspace{-0.075in}
\begin{defi} \ProblemName{} {(\ProblemNameAcro)}: Let $\mathcal{T}$ be a set of $n$ oriented line segments, that may only intersect at their end points, and $\mathcal{O}$ be a set of $u$ obstacles. Find the minimum number of mobile directional visual sensors, with uniform  $\langle \theta, R_{min} ,R_{max} \rangle$, required to fully cover all segments in $\mathcal{T}$.
\end{defi}
It is necessary to establish lower-bounds on the efficiency of algorithms for such problems to better understand how to tackle them in practice. To this end, we show that \emph{\ProblemNameAcro} is NP-Hard and even hard to approximate by studying a variant of the Art Gallery Problem with an AOV $\theta < 360^\circ$ (\S~\ref{sec:hardness}).

Solving \emph{\ProblemNameAcro} requires the generation of a set of candidate camera placement configurations (i.e., location and orientation pairs) and selecting a set of configurations that cover all targets while minimizing the number of cameras. This approach relies on subdividing the search space (i.e., the plane) by the various coverage requirements of the targets in $\mathcal{T}$. These subdivisions produce a finite set of potential camera location and orientation pairs ($\mathcal{R}$) which is convenient for computation. We consider $\mathcal{R}$ to be \textit{comprehensive} if it contains at least one representative for each region of space where cameras could be placed to cover any given subset of targets. With that, \emph{\ProblemNameAcro} is reduced to picking a subset of $\mathcal{R}$ to cover all targets in $\mathcal{T}$, which is equivalent to solving the \verb|SET-COVER| problem over $(\mathcal{T}, \mathcal{R})$. Hence, applying the well-known greedy selection scheme guarantees an $O(\log{n})$-approximation of the minimum number of cameras needed to cover $\mathcal{T}$ \cite{chvatal1979greedy}, which, by our lower-bound results, is the best-possible for \emph{\ProblemNameAcro}.

\subsection{Modeling Assumptions}

{\color{black} The main assumption we make in this work is that target locations and poses can be estimated by a coarse grain coverage system. As described in Section~\ref{sec:system}, we rely on a suitable coarse grain context detector to fulfill this requirement. The \textit{Coarse Grain Context Detector} leverages established results in target detection and tracking using fixed cameras, e.g., visual sensors for pedestrian tracking \cite{chen2012we}, or other contextual sensors, e.g., device-free RF-based techniques \cite{adib20143d}. In addition, recent advances in persistent coverage~\cite{robust_cov, pers_ICRA17} greatly extend the range of application contexts where such information can be robustly collected within complex dynamic environments. The \textit{\ModelNameAcro} model} is essentially proposed to obtain close-up views of relevant targets using mobile cameras and provide fine grain surveillance as needed. This approach is a natural next step in multi-tier camera sensor networks where coarse grain information may be acquired via higher tier cameras providing low granularity coverage sufficient for detection and localization, but insufficient for identification, recognition, or activity monitoring \cite{kulkarni2005senseye}. {\color{black} We point out that for a variety of scenarios, and in particular for monitoring public spaces, there are numerous contextual hints that can be exploited to simplify the information that has to be estimated. For example, for human subjects, a median value of target width suffices for the operation of the system, without having to estimate more accurate values per target.}

\section{On the Hardness of \ProblemNameAcro{}}
\label{sec:hardness}

As discussed in the previous section, the \ProblemNameAcro{} Solver is the focus of this paper. We analyze the hardness of \ProblemNameAcro{} by relating it to polygon illumination problems. We relate the two problems by looking at the generic Omni-\ProblemNameAcro{}. Then, we leverage our earlier results on the inapproximability of polygon illumination \cite{abdelkaderinapproximability} and a reduction by Eidenbenz et al. \cite{eidenbenz2001inapproximability} to prove the inapproximability of \ProblemNameAcro{}.

\subsection{Omni-\ProblemNameAcro{} is hard}\label{hardness_easy}

We consider a special case with camera parameters fixed to $R_{min} = 0$, $R_{max} = \infty$, and $\text{AOV} = 360^\circ$. We call this problem {Omni-\ProblemNameAcro} as all cameras are now omnidirectional and their viewing direction (VD) is no longer relevant. {Omni-\ProblemNameAcro} allows for a direct reduction from the point guard art gallery problem as it is solely defined by the set of targets $\mathcal{T}$. This enables us to develop the intuition behind the proof of hardness for the general {\ProblemNameAcro}. 

\begin{figure}[!t]
\centering
\subfigure[Point Guard AGP instance $\Upsilon = \{ \upsilon_1, \upsilon_2, \dots, \upsilon_9 \}$. \label{fig:theorem_explanation_1}]{\includegraphics[width=.3\linewidth ]{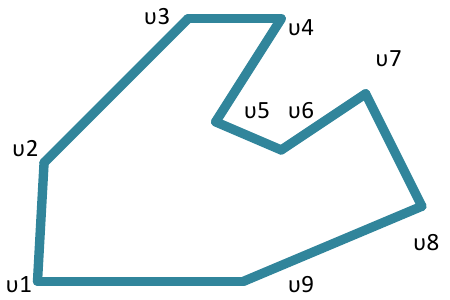}}
\hspace{0.2in}
\subfigure[The corresponding \ProblemNameAcro{} instance $\mathcal{T} = \{ T_1, T_2, \dots, T_9 \}$.]{\includegraphics[width=.3\linewidth]{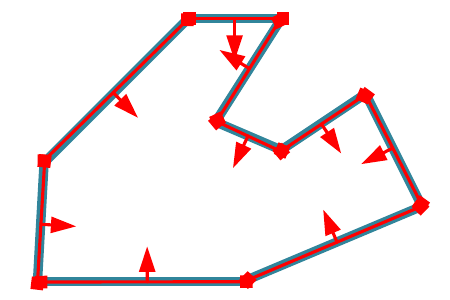}}
%\vspace{-0.05in}
\caption{Reducing the Point Guard Art Gallery Problem (PGAGP) to the {\ProblemName} {(\ProblemNameAcro)}.}
%\vspace{-0.2in}
\label{fig:theorem_explanation}
\end{figure}

We define a polygon $\Upsilon$ as an ordered sequence of $n$ vertices $\upsilon_1,\upsilon_2, ..., \upsilon_n$ where $n \geq  3$, as shown in Figure~\ref{fig:theorem_explanation_1}. $\Upsilon$ forms a closed planar region bounded by the edges $\overline{\rm \upsilon_1\upsilon_2}$, $\overline{\rm \upsilon_2\upsilon_3}$, ..., $\overline{\rm \upsilon_{n-1}\upsilon_n}, \overline{\rm \upsilon_{n}\upsilon_1}$. A simple polygon $\Upsilon$, without holes, divides the plane into three faces relative to $\Upsilon$: interior ($\Upsilon_i$), exterior ($\Upsilon_e$), and boundary ($\Upsilon_b$). A point is said to lie in $\Upsilon$ iff it belongs to $\Upsilon_i \cup \Upsilon_b$. Two points $x$ and $y$ in $\Upsilon$ are said to be mutually visible if the line segment $\overline{xy}$ lies completely in $\Upsilon$. Finally, we use $cover(x)$ to denote the subset of points in $\Upsilon_b$ visible from a point $x \in \Upsilon$.

Our proof makes use of the \emph{Point Guard Art Gallery Problem (PGAGP)}. A set of points $G \subset \Upsilon$, referred to as a guard set, is said to cover the boundary of $\Upsilon$ iff $\Upsilon_b \subset \cup_{x\in G} cover(x)$. PGAGP is to find a guard set $G^\ast$ to cover the boundary of given polygon $\Upsilon$, such that $|G^\ast|$ is minimum. PGAGP for boundary coverage was shown to be NP-hard \cite{eidenbenz2001inapproximability} even when limited to convex \cite{culberson1988covering} or star-shaped guard views \cite{lee1986computational}. 

\begin{theorem}
\label{th:omni_DDTC}
Omni-{\ProblemNameAcro} is NP-Hard.
\end{theorem}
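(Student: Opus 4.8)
The plan is to give a polynomial-time reduction from the \emph{Point Coverage Art Gallery Problem} (PCAGP), which is NP-hard for boundary coverage, to Omni-\ProblemNameAcro{}. Since Omni-\ProblemNameAcro{} is ``solely defined by the set of targets $\mathcal{T}$'', the reduction only needs to encode a polygon $\Upsilon$ as a family of oriented line segments. Given an instance $\Upsilon = \upsilon_1\upsilon_2\cdots\upsilon_n$ of PCAGP, I would create one target per edge: for each boundary edge $\overline{\upsilon_k\upsilon_{k+1}}$ introduce a target $T_k$ whose segment is exactly that edge and whose orientation $\overrightarrow{D_k}$ is the inward normal, pointing into $\Upsilon_i$. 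I would use no obstacles, relying on the fact that targets occlude one another, so the $n$ target segments collectively reproduce the whole boundary $\Upsilon_b$ and hence the visibility structure of $\Upsilon$. The edges of a simple polygon meet only at shared vertices, so the targets ``may only intersect at their end points'' as required, and the fixed camera parameters $\langle 360^\circ, 0, \infty\rangle$ of Omni-\ProblemNameAcro{} remove every distance and viewing-direction constraint, leaving only unobstructed visibility and the frontal ($<\pi/2$) condition.

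The technical heart is a characterization of which cameras fully cover a target: I would show that a camera at $c$ fully covers $T_k$ if and only if $c\in\Upsilon$ and $c$ sees all of $\overline{\upsilon_k\upsilon_{k+1}}$ in the usual art-gallery sense. For one direction, if $c\in\Upsilon$ sees the whole edge then, tracing an unobstructed segment from $c$ to the midpoint $M_k$, one arrives from the interior side of the edge's supporting line, which pins $c$ onto the (closed) interior side; a general-position/perturbation argument upgrades this to a strict inequality, so the frontal condition of Definition~\ref{def:fullC} holds. Conversely, the frontal condition forces $c$ strictly to the interior side of the edge line, and an unobstructed segment from there to $M_k$ cannot leave $\Upsilon$, so $c\in\Upsilon$, after which unobstructed visibility to the rest of the edge says exactly that $c$ sees the edge inside $\Upsilon$. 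With this equivalence, a set of cameras fully covers $\mathcal{T}$ exactly when the same points, regarded as guards, see every edge of $\Upsilon$ in its entirety, and the two minimum cardinalities coincide; this also supplies the intuition reused for the general \ProblemNameAcro{} in Section~\ref{sec:hardness}.

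The step I expect to be the main obstacle is reconciling ``each target segment lies in the FOV of a single camera'' with ``the guard set covers the boundary'', which only requires every boundary \emph{point} to be seen by \emph{some} guard, so a priori two guards could split an edge between them. I would close this gap by reducing not from arbitrary PCAGP but from the structured hard instances of Eidenbenz et al.\ (and the restricted convex / star-shaped guard-view variants cited above), in which the boundary portions that actually need guarding are individual short edges admitting only ``all-or-nothing'' visibility, so that an optimal boundary-covering guard set necessarily sees each such edge entirely; there boundary coverage and full target coverage become equivalent. The remaining loose ends are the familiar degeneracies of visibility reductions --- grazing sightlines through polygon vertices, guards on the boundary, collinearity with an edge --- which I would dispatch by a slight inflation of the targets, or a general-position perturbation of $\Upsilon$, that leaves the optimum unchanged. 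Combining the reduction with the known NP-hardness of PCAGP then yields that Omni-\ProblemNameAcro{} is NP-hard.
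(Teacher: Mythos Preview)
Your reduction is exactly the paper's: each polygon edge becomes an \ModelNameAcro{} target with the inward normal as its orientation, the orientation constraint confines cameras to the interior of $\Upsilon$, and one argues that optimal camera sets coincide with optimal guard sets. The paper dispatches this in two sentences, simply asserting that the two minima are equal.

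You are in fact more careful than the paper on one point. The gap you anticipate---that PCAGP lets two guards split an edge between them while Full Coverage (Definition~\ref{def:fullC}) demands a single camera see the whole segment---is real: the paper's own definition of PCAGP only requires $\Upsilon_b\subset\bigcup_{x\in G}x\text{--}cover$, so in general the construction yields only $\mathrm{OPT}(\text{PCAGP})\le\mathrm{OPT}(\text{Omni-\ProblemNameAcro})$, not equality. The paper's proof does not address this, and your proposed fix---restrict to the structured hard families of Eidenbenz et al., where the boundary pieces that matter are short dents with all-or-nothing visibility---is a valid and standard way to recover the equivalence on the instances that establish hardness. So your argument is the same reduction, with a justification the paper omits; the perturbation cleanup for grazing sightlines and boundary guards is likewise routine and elided in the paper.
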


\begin{proof}

We encode a given PGAGP instance $\Upsilon$ as an Omni-\emph{\ProblemNameAcro} instance $\mathcal{T}$, e.g. as shown in Figure~\ref{fig:theorem_explanation}. We map each edge $\overline{\rm \upsilon_{i-1}\upsilon_i}$ to a target $T_i$ whose $P^s_i$ is $\upsilon_{i-1}$, $P^e_i$ is $\upsilon_i$, and $\overrightarrow{D_i}$ points to the interior of $\Upsilon$. By Definition~\ref{def:fullC}, the placement of cameras is limited to the interior of $\Upsilon$. Hence, the minimum number of omnidirectional sensors required to cover all targets is exactly the minimum number of point guards required to cover the polygon. It follows that Omni-\emph{\ProblemNameAcro} is at least as hard as PGAGP.
\end{proof}

The same reduction presented above can also be used to reduce PGAGP for polygons with holes to Omni-{\ProblemNameAcro}. The edges bounding each polygonal hole are mapped to \ModelNameAcro{} targets facing the interior of the polygon.

\subsection{Polygon Illumination and \ProblemNameAcro}

%xxx ELABORATE ON THE DETAILS OF THE MAPPING

%We are interested in variants of the art gallery problem where it is required to cover the boundary of an art gallery, possibly with holes, by placing guards anywhere in its interior. Informally, we may represent each edge in the art gallery by an {\ModelNameAcro} facing the inside of the gallery as shown in Figure \ref{fig:theorem_explanation}. It is clear that minimizing the number of \textit{camera guards} required to cover all {\ModelNameAcro} targets solves the original art gallery problem.

Our study of polygon illumination is motivated by the natural reduction to our problem as described in the previous subsection. Two additional restrictions to the art gallery problem are needed to capture the {\ModelNameAcro} model. First, since we use cameras having a limited FOV, we turn our attention to the Point $\alpha$-Floodlight Illumination Problem for art galleries with Holes (PFIPH), where the art gallery is {\color{black} to be covered with $\alpha$-floodlights, i.e., floodlights of angle $\alpha$}. Second, we further require that each edge is \textit{fully covered} by at least one $\alpha$-floodlight to get a restricted variant that we call F-PFIPH. Illumination of polygons without holes by $\alpha$-floodlights where floodlights can only be placed at vertices of the polygon is NP-hard \cite{bagga1996complexity}. However, the hardness of PFIPH is an open problem \cite{urrutia2000art}. In summary, we will be dealing with the two problems below.

\begin{defi}
\label{def:pfip}
\emph{Point $\alpha$-Floodlight Illumination Problem for polygons with Holes (PFIPH):} Given a polygon $P$ with holes and angle $\alpha$, find the minimum number of $\alpha$-floodlights to illuminate the whole polygon such that $\alpha$-floodlights can be placed anywhere inside the polygon.
\end{defi}

\begin{defi}
\label{def:fpfip}
\emph{Full-coverage Point $\alpha$-Floodlight Illumination Problem for polygons with Holes (F-PFIPH):} Given a polygon $P$ with holes and angle $\alpha$, find the minimum number of $\alpha$-floodlights to illuminate the whole polygon such that $\alpha$-floodlights can be placed anywhere inside the polygon and each edge is fully illuminated by at least one $\alpha$-floodlight.
\end{defi}

It is clear that \emph{PFIPH} and \emph{F-PFIPH} are relevant to many surveillance problems, especially with the increasing interest in coverage with directional sensors \cite{amac2011coverage}. Hence, it is necessary to establish lower-bounds on the efficiency of algorithms for such problems to enable more principled approaches for tackling them in practice.

\subsection{{\ProblemNameAcro} is Hard, Even to Approximate}
%\label{sec:hardness}

We develop the Point Floodlight Gadget (PFG) that we use in combination with the gadgets of \cite{eidenbenz2001inapproximability} and \cite{abdelkaderinapproximability} for our reduction. We prove the hardness and inapproximability of \emph{PFIPH} using a reduction that immediately yields the same results for both \emph{F-PFIPH} and {\ProblemNameAcro}. With that, we settle the open problem regarding the hardness of PFIPH \cite{urrutia2000art} and establish the hardness of {\ProblemNameAcro}.

\begin{figure}[t]
\centering
\begin{minipage}[b]{0.35\linewidth}
\centering
\includegraphics[width=1\linewidth ]{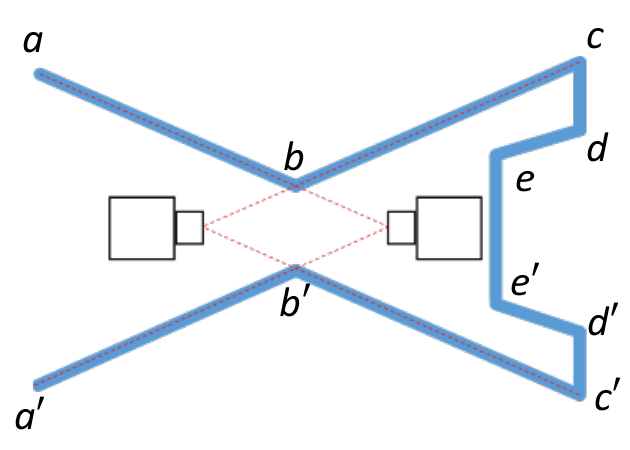}
%\vspace{-0.1in}
  \caption{Point Floodlight Gadget.}
  %\vspace{-0.2in}
  \label{fig:pcg}
\end{minipage}
\hfill
\begin{minipage}[b]{0.62\linewidth}
%\end{figure}
%\begin{figure}[t]
\centering
\includegraphics[width=1\linewidth ]{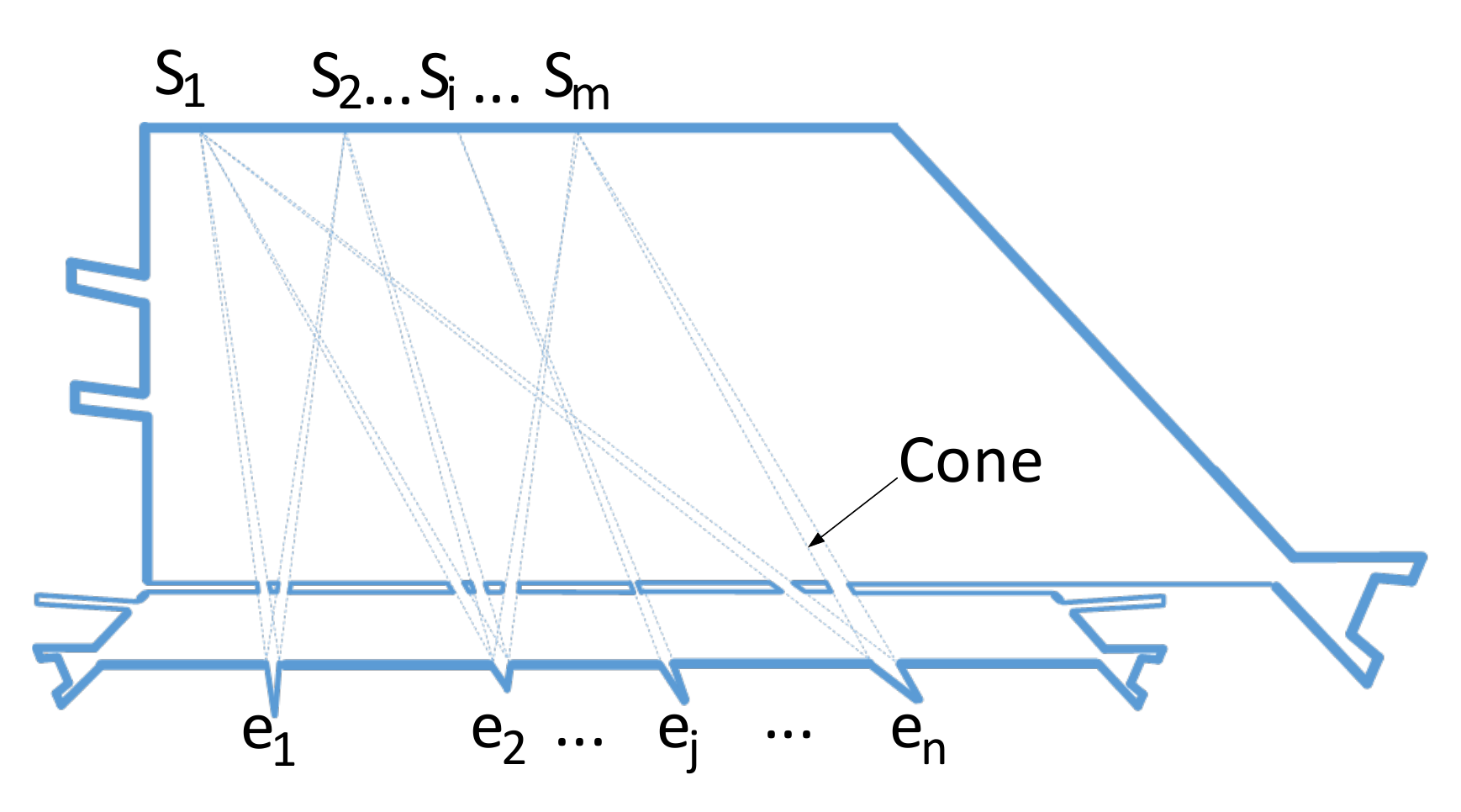}
%\vspace{-0.1in}
  \caption{Reducing SET-COVER to Point $\alpha$-Floodlight Illumination for polygons with Holes (PFIPH).}
  %\vspace{-0.25in}
 \label{fig:setcover_1}
% \end{minipage}
\end{minipage}
\end{figure}
The \emph{\textbf{Point Floodlight Gadget}} \textbf{(PFG)} (Figure~\ref{fig:pcg}) is used to force the placement of floodlights inside the polygon, rather than on its edges or vertices. This simple gadget facilitates the encoding of PFIPH constraints which leverages earlier hardness proofs for the art gallery problem (e.g., \cite{bagga1996complexity} and \cite{culberson1988covering}) by forcing floodlights to be placed inside the polygon while allowing some control on where floodlights are pointed.

%\begin{figure}[t]
%\centering
%\includegraphics[width=0.5\linewidth ]{figures/PCG.pdf}
%  \caption{Point Floodlight Gadget.}
%  \label{fig:pcg}
%\end{figure}

As shown in Figure~\ref{fig:pcg}, a PFG requires exactly two point floodlights to be fully covered. The intuition behind this design is to have a shape that can only be covered by a single camera from a unique configuration. The PFG is designed by intersecting two isosceles triangles with apex angle $\alpha_{PFG}=\min(\alpha, 60^\circ)$, an arbitrary upper bound chosen for convenience. The right triangle has two dents to force the placement of the left camera, while the left triangle acts as an interface to allow plugging the PFG into larger constructs. Such constructs also need to have dents that force the placement of the right camera. Each of the two triangles forming a PFG can be covered by a single $\alpha$-floodlight (i.e., a floodlight with angle $\alpha \geq \alpha_{PFG}$) placed at its apex.

%\begin{figure}[!t]
%\centering
%\includegraphics[width=0.7\linewidth ]{figures/new_proof_set_cover_original.pdf}
%  \caption{Reduction from SET-COVER to Point $\alpha$-Floodlight Illumination Problem for polygons with Holes (PFIPH).}
% \label{fig:setcover_1}
%\end{figure}

We use the PFG to adapt the construction in \cite{eidenbenz2001inapproximability} in order to obtain a reduction from \verb|SET-COVER| to  \emph{PFIPH}. Given a set system $(E, S)$, the \verb|SET-COVER| problem asks for the minimum number of subsets from $S$ to cover all elements in $E$, where $S \subseteq 2^E$. As shown in Figure \ref{fig:setcover_1}, the reduction naturally maps the set system to a polygon with holes such that each element in $E$ is represented by a dent on the bottom side and each subset in $S$ is a potential floodlight configuration at the top edge. A horizontal barrier with tapered corridors is added to enforce this correspondence by limiting which dents are visible from each location.

%\begin{figure}[t]
%\centering
%\begin{minipage}[b]{0.45\linewidth}

We define a \textit{cone} as the maximal convex area inside the polygon that includes one dent and one corridor. A floodlight placed above the barrier may illuminate a dent only if it belongs to one of the cones containing it. The polygon is designed with the barrier at a certain height such that at most two cones from different dents intersect above the barrier \cite{eidenbenz2001inapproximability}.
 
 We extend the construction in \cite{eidenbenz2001inapproximability} as follows: 1) Three PFGs are added to illuminate everything other than the dents. This incurs $4$ more \textit{guards} than the design in \cite{eidenbenz2001inapproximability}. 2) We set the height of the top side of the polygon to ensure that none of the guards requires an angle of view greater than $\alpha$ to cover the subset of dents assigned to it. , This height, $y_0$, is an arbitrary constant independent of the width of the polygon \cite{eidenbenz2001inapproximability}. It is clear the required changes do not violate any of the properties of the original construction and the results carry through.

\begin{theorem}
 PFIPH is NP-hard.
 \label{th:pfiph_hardness}
\end{theorem}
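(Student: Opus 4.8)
The plan is to reduce \verb|SET-COVER| to \emph{PFIPH}, following the Eidenbenz et al.\ construction \cite{eidenbenz2001inapproximability} but augmented with the Point Floodlight Gadget (PFG). Given a set system $(E,S)$, I would build the polygon-with-holes $P$ described above: the bottom side carries one dent per element of $E$, the top edge (at height $y_0$) hosts the candidate floodlight positions corresponding to members of $S$, and a horizontal barrier with tapered corridors enforces that a floodlight above the barrier sees a dent only through a \emph{cone}, with the barrier height tuned so at most two cones from distinct dents overlap. I would then attach three PFGs so that every part of $P$ \emph{except} the dents is forced to be covered by a fixed number of extra floodlights — exactly $2$ per PFG, i.e. $4$ more \emph{guards} than the Eidenbenz design (three PFGs share boundary floodlights, accounting for $6-2=4$). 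The claim to prove is the exact correspondence: $P$ can be illuminated by $k$ floodlights of angle $\alpha$ if and only if $(E,S)$ has a set cover of size $k-4$.

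\textbf{The forward direction} is routine: given a set cover $S'\subseteq S$, place one floodlight at the top-edge position of each chosen set (the geometry of the corridors guarantees such a floodlight of angle $\le\alpha$ illuminates exactly the dents of that set — here I invoke the choice of $y_0$ as a width-independent constant ensuring no guard needs angle $>\alpha$), add the $2$ floodlights per PFG at the triangle apexes to cover everything else, and verify the whole polygon is lit. \textbf{The converse} is the crux. Given an optimal floodlight set $G$ for $P$, I must argue (i) the PFGs consume at least $2$ floodlights each and those cannot help illuminate the dent region — this is exactly the PFG's stated property that it needs two floodlights from a unique configuration, so they are "wasted" on the dents; (ii) each dent must be illuminated by some floodlight lying in one of its cones, hence effectively placed near the top edge at a position corresponding to some member of $S$ (floodlights below the barrier, or on edges/vertices, are ruled out precisely by the dents built into the corridors and by the PFG-style dents — this is what the gadget buys us over the original proof); and (iii) by the "at most two overlapping cones" property, charging each such floodlight to the $\le 2$ dents it covers and then to a set in $S$ containing those dents yields a set cover of size $|G|-4$, possibly after a cleanup step replacing any floodlight that covers two dents from non-cocontained sets — but the cone-intersection bound is designed to prevent this, so each effective floodlight maps to a single legitimate set.

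\textbf{The main obstacle} I anticipate is step (ii) of the converse: rigorously showing that the PFG together with the corridor dents \emph{forces} the dent-illuminating floodlights into positions that genuinely correspond to sets in $S$, rather than some "cheating" placement that straddles a corridor wall and illuminates an unintended combination of dents — and simultaneously that no floodlight can do double duty between a PFG and a dent. This is where the careful apex-angle choice $\alpha_{PFG}=\min(\alpha,60^\circ)$ and the two-dents-per-triangle design must be exploited: the dents in the right triangle of a PFG pin one floodlight, the host structure's dents pin the other, and their viewing cones are disjoint from every dent-cone by construction. I would argue this by a visibility/convexity analysis of each cone, showing the only point seeing both of a dent's defining sub-features lies in the intended cone, and that the PFG cones are separated from the barrier region. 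Once this "rigidity" lemma is in place, the counting argument closing the reduction — and hence NP-hardness of \emph{PFIPH}, which by the same construction transfers to \emph{F-PFIPH} and \ProblemNameAcro{} — follows as in \cite{eidenbenz2001inapproximability}; I would simply note that none of the added features (PFGs, the specific $y_0$) disturb the original construction's invariants, so the reduction remains polynomial and correct.
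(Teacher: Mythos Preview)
Your proposal is correct and follows essentially the same approach as the paper: reduce \verb|SET-COVER| to \emph{PFIPH} via the Eidenbenz et al.\ construction, augmented with three PFGs (costing $4$ extra guards) and the height adjustment to $y_0$ so no guard needs angle exceeding $\alpha$, then argue the modifications preserve the original invariants. The paper's own treatment is in fact terser than yours --- it states the construction and the two modifications and simply asserts ``the required changes do not violate any of the properties of the original construction and the results carry through'' --- whereas you spell out the forward/converse directions and correctly isolate the rigidity of floodlight placement (your step (ii)) as the technical crux; your plan is a faithful and somewhat more explicit rendering of the same argument.
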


For hardness of approximation, we reduce from a restricted variant of \verb|SET-COVER| using the same procedure outlined above. \verb|RESTRICTED-SET-COVER| simply restricts \verb|SET-COVER| by requiring $|S| \leq |E|$ and is easily shown at least as hard to approximate as \verb|DOMINATING-SET| (Lemma 9, \cite{eidenbenz2001inapproximability}). Since our version of the reduction only changes the cost of a solution by a constant additive factor of $4$, we also get an equivalent of the promise problem in (Lemma 10, \cite{eidenbenz2001inapproximability}) with slightly different constants, i.e., $c+6$ instead of $c+2$. This effectively yields an approximation preserving reduction from \verb|RESTRICTED-SET-COVER| and we get a corresponding result.

\begin{theorem}\label{th:pfip_holes}
PFIPH cannot be approximated by a polynomial time algorithm with an approximation ratio of $\frac{1-\epsilon}{12}\ln{n}$ for any $\epsilon > 0$, where $n$ is the number of vertices of the input polygon, unless $NP \subseteq TIME(n^{O(\log{\log{n}})})$.
\end{theorem}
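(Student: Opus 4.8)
The plan is to transfer the logarithmic inapproximability of \texttt{SET-COVER} through the reduction already assembled above --- the Eidenbenz et al.\ polygon-with-holes construction augmented by three copies of the PFG --- and to do so in an \emph{approximation-preserving} way, exploiting the fact that this reduction disturbs the objective only by an additive constant. Concretely, I would first isolate the two structural facts the construction guarantees: (i) an instance $(E,S)$ of \texttt{SET-COVER} with $|E|=m$, $|S|=\ell$ is mapped to a polygon $P$ with holes whose vertex count $N$ (the quantity the theorem calls $n$) is a fixed polynomial in $m+\ell$; and (ii) the minimum number of $\alpha$-floodlights illuminating $P$ equals $\mathrm{OPT}(E,S)+6$, where the $+6$ accounts for the two guards inherent to Eidenbenz's construction plus the $4$ additional guards charged by the three PFGs (each PFG needing exactly two floodlights, and no floodlight useful inside a PFG being able to see any dent). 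Fact (ii) is precisely what upgrades the NP-hardness of Theorem~\ref{th:pfiph_hardness} to a hardness-of-approximation statement.

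Next I would restrict to \texttt{RESTRICTED-SET-COVER} (instances with $\ell \le m$), which by Lemma~9 of \cite{eidenbenz2001inapproximability} is at least as hard to approximate as \texttt{DOMINATING-SET}; the known $\Omega(\log m)$ inapproximability of \texttt{DOMINATING-SET} (the Lund--Yannakakis / Raz--Safra / Feige line of results invoked in \cite{eidenbenz2001inapproximability}) then yields a promise problem: distinguishing instances with optimum $\le c$ from those with optimum exceeding $c(1-\epsilon)\ln m$ is impossible in polynomial time unless $NP \subseteq TIME(n^{O(\log\log n)})$. Feeding such a promise instance into our reduction and using Fact (ii), the YES case becomes a polygon needing $\le c+6$ floodlights and the NO case one needing $> c(1-\epsilon)\ln m + 6$ floodlights --- exactly the promise problem of Lemma~10 of \cite{eidenbenz2001inapproximability} with its constant $c+2$ replaced by $c+6$. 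A hypothetical polynomial-time $\rho$-approximation for PFIPH with $\rho = \frac{1-\epsilon}{12}\ln N$ would separate the two cases (rewriting $\ln m$ as $\Theta(\ln N)$ since $N$ is a fixed polynomial in $m$, and absorbing the additive $+6$ and $+c$ terms into a slightly smaller $\epsilon$), contradicting the complexity hypothesis. The constant $\tfrac{1}{12}$ is inherited from the analogous bound in \cite{eidenbenz2001inapproximability}: our modifications change only additive lower-order terms (three extra PFGs; a fixed top-edge height $y_0$ chosen so that every subset-floodlight's required opening angle stays $\le \alpha$), not the exponent relating $m$ to $N$ nor the guard/element accounting that produces the $12$.

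The step I expect to be the real obstacle is verifying that these two augmentations leave Eidenbenz's delicate visibility structure intact, so that Fact (ii) holds as an \emph{equality} rather than merely an inequality. This means checking that (a) each PFG, plugged in through its left interface triangle and equipped with the requisite dents, forces exactly two floodlights, and that a floodlight covering any part of a PFG is geometrically prevented from also covering a dent or a subset-corridor, so the $+4$ is tight; and (b) raising the top side to height $y_0$ creates no new lines of sight --- in particular, it must preserve the property that at most two cones belonging to distinct dents overlap above the horizontal barrier, which is the crux of the element/subset correspondence in \cite{eidenbenz2001inapproximability}. Once these invariants are confirmed, the promise gap transfers verbatim, giving Theorem~\ref{th:pfip_holes}. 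Moreover, since every PFG triangle and every corridor-assigned group of dents is \emph{fully} illuminated by a single floodlight, the identical construction proves the same bound for F-PFIPH, and then the edge-to-target encoding of Section~\ref{sec:hardness} (the same used in the proof of Theorem~\ref{th:omni_DDTC}) carries it to \ProblemNameAcro{}.
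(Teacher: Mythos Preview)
Your proposal is correct and follows essentially the same route as the paper: reduce from \texttt{RESTRICTED-SET-COVER} via Lemma~9 of \cite{eidenbenz2001inapproximability}, note that the three-PFG augmentation changes the optimum only by an additive constant (yielding $c+6$ in place of $c+2$ in the analogue of Lemma~10), and conclude that the $\frac{1-\epsilon}{12}\ln n$ bound carries over since the vertex count remains polynomial in the set-system size. The paper's own argument is terser---it simply asserts that the modifications ``do not violate any of the properties of the original construction and the results carry through''---whereas you spell out the two invariants (PFG isolation and preservation of the two-cone property under the height adjustment) that actually need checking; this is a fair elaboration rather than a different method.
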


%xxx ELABORATE ON THE JUMP TO COROLLARIES

We observe that all edges in our reductions are fully covered by at least one guard. This means that the above results extend to \emph{F-PFIPH} as well. Finally, using the simple approximation-preserving reduction outlined in Figure~\ref{fig:theorem_explanation}, the same applies to {\ProblemNameAcro}. Hence, the following is a corollary of theorems \ref{th:omni_DDTC}, \ref{th:pfiph_hardness}, and \ref{th:pfip_holes}.

\begin{corollary}\label{th:OLSI}
{\ProblemNameAcro} is NP-Hard.
\end{corollary}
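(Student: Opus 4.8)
The plan is to derive the corollary from Theorems~\ref{th:omni_DDTC}, \ref{th:pfiph_hardness}, and \ref{th:pfip_holes} by polynomial reductions into {\ProblemNameAcro}. A first, essentially free observation is that the camera parameters $\langle \theta, R_{min}, R_{max}\rangle$ are part of the input, so fixing $\theta = 360^\circ$, $R_{min}=0$, $R_{max}=\infty$ makes Omni-{\ProblemNameAcro} literally a special case of {\ProblemNameAcro}, and Theorem~\ref{th:omni_DDTC} already gives NP-hardness. This, however, says nothing about realistic cameras with a limited angle of view, so the substantive route — and the one that also transfers the inapproximability bound of Theorem~\ref{th:pfip_holes} — is to reduce from F-PFIPH, which is NP-hard since the instances of Theorem~\ref{th:pfiph_hardness} are precisely ones in which every edge is fully illuminated by a single floodlight, using the edge-to-target map already sketched in Figure~\ref{fig:theorem_explanation}.

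Concretely, I would take an F-PFIPH instance, a polygon $P$ with holes together with an angle $\alpha$, and build an {\ProblemNameAcro} instance as follows: each boundary edge of $P$ (on the outer boundary and on each hole) becomes an oriented line-segment target whose endpoints are the edge's endpoints and whose orientation $\overrightarrow{D_j}$ is the inward normal, i.e.\ points into the interior of $P$; each hole additionally becomes an obstacle, namely the chain of segments forming its boundary; and the uniform camera parameters are set to $\theta = \alpha$, $R_{min}=0$, and $R_{max}$ equal to a sufficiently large value such as the diameter of $P$. The resulting targets pairwise intersect only at polygon vertices, so this is a legal {\ProblemNameAcro} instance, and the construction is clearly polynomial.

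The main step will be to show that the two optima coincide. For the forward direction, a feasible set of $k$ floodlights for F-PFIPH is reinterpreted as $k$ cameras placed at the same points with viewing direction equal to the floodlight direction: any floodlight that fully illuminates an edge $e$ must lie on the interior side of the line supporting $e$ and see all of $e$ unobstructed within angle $\alpha$, so the corresponding camera has $e$ in its FOV and, since $\overrightarrow{D_j}$ points inward, satisfies the facing condition of Definition~\ref{def:fullC}; hence all targets are covered. For the reverse direction, I would take a feasible set of $k$ cameras; since the entire boundary of $P$ consists of targets (which occlude) and each hole is an obstacle, any camera that fully covers an interior edge must lie inside $P$, because a line of sight reaching such an edge from outside would have to cross another boundary segment or a hole. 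Replacing each camera by an $\alpha$-floodlight at the same point and direction then yields $k$ floodlights that jointly illuminate every edge of $P$, and by the structure of the hard instances — the three PFGs together with the \verb|SET-COVER| corridor construction — illuminating every edge is equivalent to illuminating the whole polygon, so we recover a feasible F-PFIPH solution of size $k$. Thus the reduction preserves the optimum exactly; read together with Theorems~\ref{th:pfiph_hardness} and~\ref{th:pfip_holes} it yields both NP-hardness and the corresponding inapproximability threshold for {\ProblemNameAcro}, of which the stated corollary is the weaker half.

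I expect the reverse direction to be the delicate point: the identity ``cover all edge-targets $\equiv$ illuminate all of $P$'' fails for arbitrary polygons and must be justified from the specific gadgetry — the dents of each PFG must survive the edge-to-target translation and still force interior, uniquely-configured guards, and no camera placed outside $P$ or exploiting the finite $R_{max}$ may cover a target that a legitimate floodlight cannot. One should also verify that the angle-of-view constraint is never the binding obstruction in either direction, which follows because $\alpha_{PFG}\le 60^\circ$ and the top side of the polygon is placed at the constant height $y_0$ chosen precisely so that no guard needs an angle of view exceeding $\alpha$; hence taking $\theta=\alpha$ is without loss.
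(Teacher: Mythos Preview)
Your proposal is correct and matches the paper's approach: the corollary is immediate from Theorem~\ref{th:omni_DDTC} since Omni-{\ProblemNameAcro} is the parameter restriction $\theta=360^\circ$, $R_{min}=0$, $R_{max}=\infty$ of {\ProblemNameAcro}, and the F-PFIPH route via the edge-to-target map of Figure~\ref{fig:theorem_explanation} handles the limited-AOV case and the inapproximability bound, exactly as the paper indicates in the sentence preceding the corollary. You supply considerably more detail than the paper does and correctly flag the one nontrivial point in the reverse direction---that on the specific hard instances, covering all edge-targets must yield a valid floodlight solution---which the paper leaves implicit.
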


\begin{corollary}\label{th:OLSI_inapprox}
{\ProblemNameAcro} cannot be approximated by a polynomial time algorithm with an approximation ratio of $\frac{1-\epsilon}{12}\ln{n}$ for any $\epsilon > 0$, where $n$ is the number of vertices of the input polygon, unless $NP \subseteq TIME(n^{O(\log{\log{n}})})$.
\end{corollary}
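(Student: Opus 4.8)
The plan is to chain three approximation-preserving reductions, each already essentially in place above, and then to check that nothing is lost in the composition. First I would invoke Theorem~\ref{th:pfip_holes}: its proof reduces \verb|RESTRICTED-SET-COVER| to \emph{PFIPH} while perturbing the optimum only by the additive constant $4$ (and the promise gap only from $c+2$ to $c+6$), so the $\frac{1-\epsilon}{12}\ln{n}$ bound holds for \emph{PFIPH} under the assumption $NP\not\subseteq TIME(n^{O(\log\log n)})$. Next, as noted just before Corollary~\ref{th:OLSI}, in the PFG-based construction of Figures~\ref{fig:pcg}--\ref{fig:setcover_1} every edge --- the bottom dents, the barrier edges, the top-edge subset intervals, the PFG edges, and the boundaries of the holes --- is fully illuminated by at least one floodlight in the canonical optimal solution. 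Hence the extra requirement defining \emph{F-PFIPH} (each edge fully covered by a single floodlight) is automatically met, so the very same family of instances witnesses the same inapproximability bound for \emph{F-PFIPH}.

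Then I would apply the edge-to-target map of Figure~\ref{fig:theorem_explanation} used in the proof of Theorem~\ref{th:omni_DDTC}: given an \emph{F-PFIPH} instance $(P,\alpha)$, turn each edge $\overline{\upsilon_{i-1}\upsilon_i}$ of $P$ (outer boundary and hole boundaries alike) into an oriented line segment target $T_i=\langle \upsilon_{i-1},\upsilon_i,\overrightarrow{D_i}\rangle$ with $\overrightarrow{D_i}$ pointing into $P$, placing the hole boundaries additionally in $\mathcal{O}$, and fix the uniform camera parameters $R_{min}=0$, $R_{max}=\infty$, $\text{AOV}=\alpha$. The targets meet pairwise only at shared polygon vertices, so this is a legal \ProblemNameAcro{} instance, and by Definition~\ref{def:fullC} a camera fully covering $T_i$ is exactly an $\alpha$-floodlight whose apex lies on the interior side of $e_i$, whose cone contains all of $e_i$, and whose line of sight to $e_i$ is unobstructed; ranging over all edges, the orientation clause pins every camera to the interior of $P$ and the occlusion clause identifies ``camera sees $e_i$'' with ``floodlight illuminates $e_i$''. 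Thus a feasible \ProblemNameAcro{} solution is precisely a set of $\alpha$-floodlights inside $P$ that fully illuminate every edge, which on these instances (by the previous paragraph) is also a feasible \emph{F-PFIPH} solution, and conversely; so the two optima coincide on the reduced instances. Composing the reductions is approximation-preserving up to additive constants, the number of targets $n$ equals the number of vertices of $P$, and the restriction $|S|\le|E|$ keeps $\ln{n}$ within a constant factor of $\ln|E|$, so the lower bound of Theorem~\ref{th:pfip_holes} transfers to \ProblemNameAcro{} under the same complexity assumption; combined with the NP-hardness recorded in Corollary~\ref{th:OLSI} (itself from Theorems~\ref{th:omni_DDTC} and~\ref{th:pfiph_hardness}), this yields the statement.

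The step I expect to be the real obstacle is verifying that this last reduction is genuinely approximation-preserving rather than merely a one-way reduction. The concern is that \ProblemNameAcro{} asks only to cover the edge-targets, whereas \emph{F-PFIPH} additionally demands that the whole polygon be lit, so a priori the \ProblemNameAcro{} optimum could be strictly smaller. To rule this out I would argue directly inside the construction of \cite{eidenbenz2001inapproximability} as extended above: covering all dent-targets already forces at least $c$ floodlights, because the tapered corridors make each dent visible only from within its own cone and at most two cones from distinct dents meet above the barrier, while covering the edges of each PFG forces exactly two floodlights per gadget by its defining property; summing these contributions pins the \ProblemNameAcro{} optimum to $c+O(1)$, matching the \emph{F-PFIPH} optimum. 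Once that accounting is established, the remainder --- tracking the additive constants through the composition and confirming the $n$-versus-number-of-targets bookkeeping --- is routine.
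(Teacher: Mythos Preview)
Your proposal is correct and takes essentially the same route as the paper: carry the inapproximability of \emph{PFIPH} (Theorem~\ref{th:pfip_holes}) through \emph{F-PFIPH} to \ProblemNameAcro{} via the edge-to-target map of Figure~\ref{fig:theorem_explanation}, noting that the construction already fully covers every edge so the \emph{F-PFIPH} restriction is free. The paper's own justification is the single sentence preceding the corollaries, which simply asserts the reduction is approximation-preserving; your final paragraph---checking that the \ProblemNameAcro{} optimum on the reduced instances cannot drop below the set-cover lower bound even though \ProblemNameAcro{} demands only edge coverage rather than full polygon illumination---fills in a point the paper leaves implicit.
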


\section{Covering \ModelNameAcro{} Targets}
\label{sec:cov}
{\color{black}
Our drone placement algorithms rely on a decomposition of space by the various coverage constraints per target.
Recalling the definitions of Section~\ref{sec:coverage_problem},} we have four constraints for a camera to cover a target: range (i.e., being within $R_{min}$ and $R_{max}$ from the target), angle of view (i.e., being within the camera's FOV of width $\theta$), target pose (i.e., capturing the target from its significant perspective) and occlusion avoidance (i.e., having no target or obstacle occluding the target of interest). First, we focus on satisfying all these constraints for a single target, which allows us to develop the essential tools needed to compute drone placements. Then, we show the extension to a pair of targets using a convenient approach to covering multiple target simultaneously. Arbitrary subsets of targets can then be covered by satisfying their coverage constraints in a pairwise fashion.

\begin{figure}[!t]
\centering
\subfigure[Boundary of points no farther than $R_{max}$ from any point on the target.\label{fig:cpf_def1}]{\includegraphics[width=.325\linewidth ]{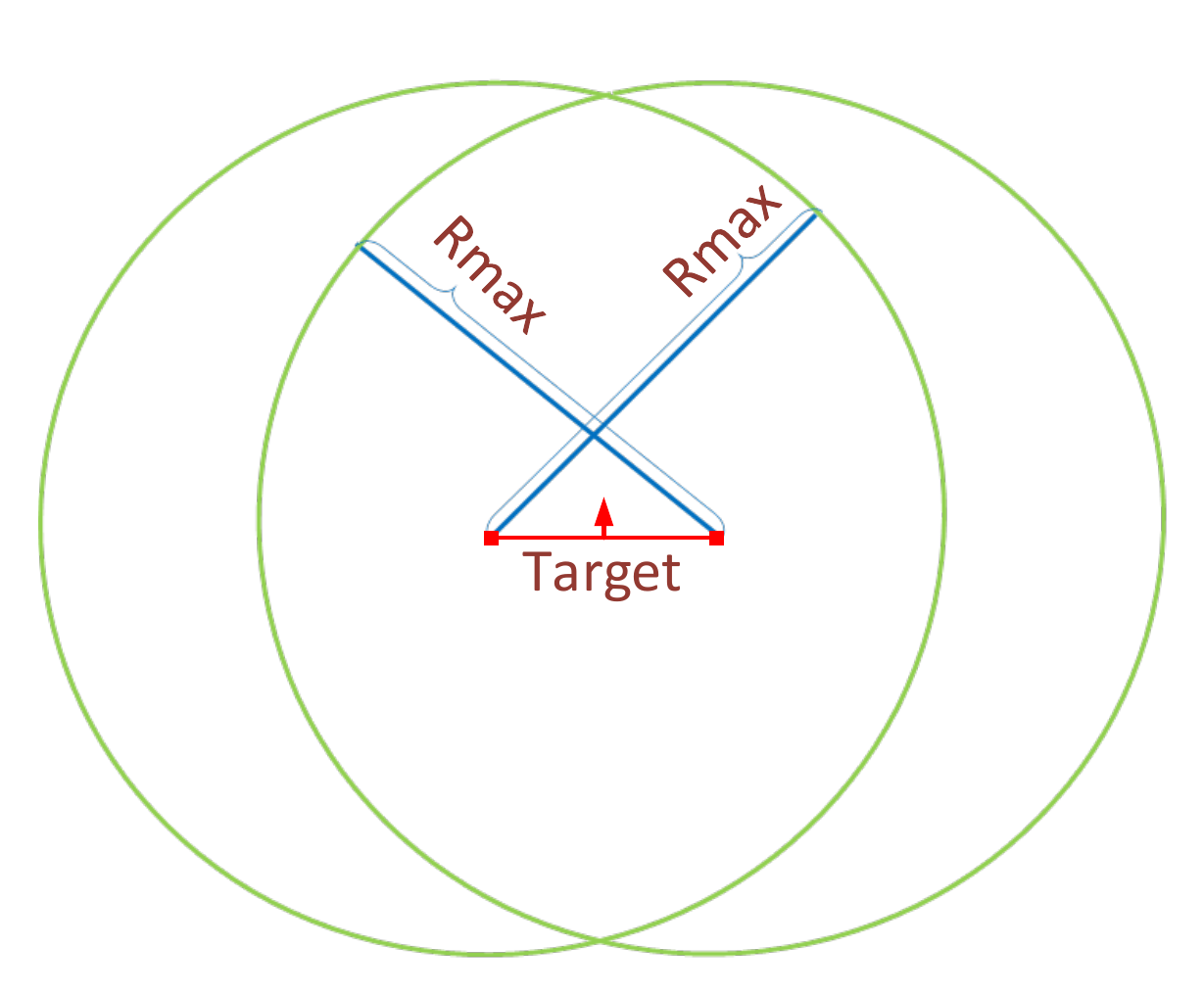}}
\hfill
\subfigure[BCPF for $R_{max}$, AOV and target pose constraints. \label{fig:bcpf01}]{\includegraphics[width=.325\linewidth ]{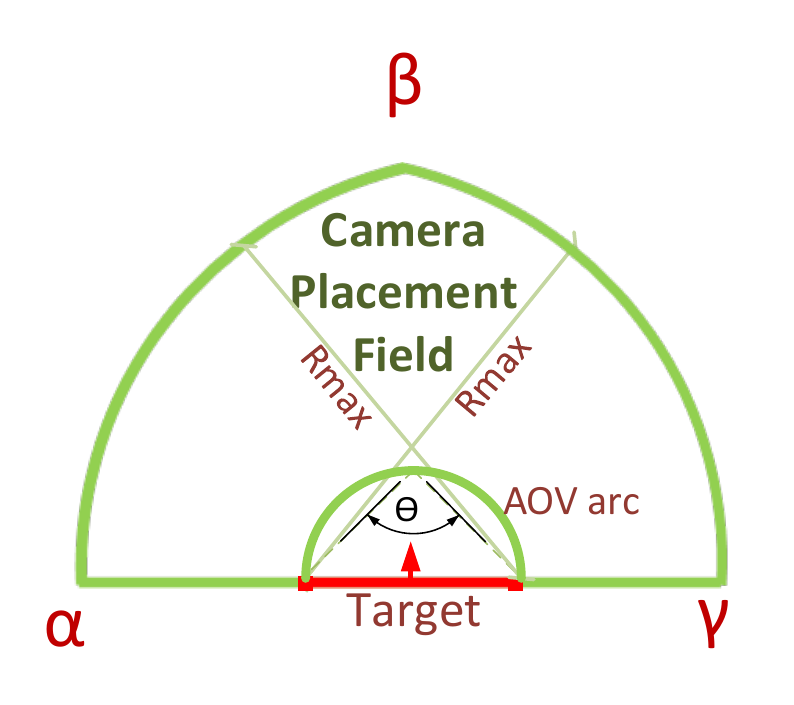}}
\hfill
\subfigure[BCPF parametrized by viewing angle $\phi$ for quality control. \label{fig:bcpf2}]
{\includegraphics[width=.325\linewidth ]{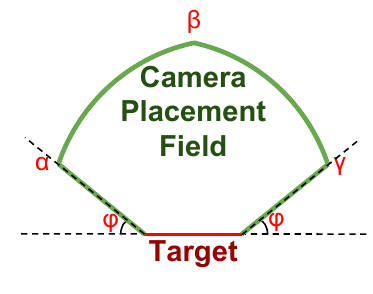}}
%\vspace{-0.16in}
\caption{Basic Camera Placement Field (BCPF).}
%\vspace{-0.27in}
\label{fig:cpf_def}
\end{figure}

\begin{figure}[!t]
\captionsetup[subfigure]{labelformat=empty}
\centering
\subfigure{\includegraphics[width=.24\linewidth ]{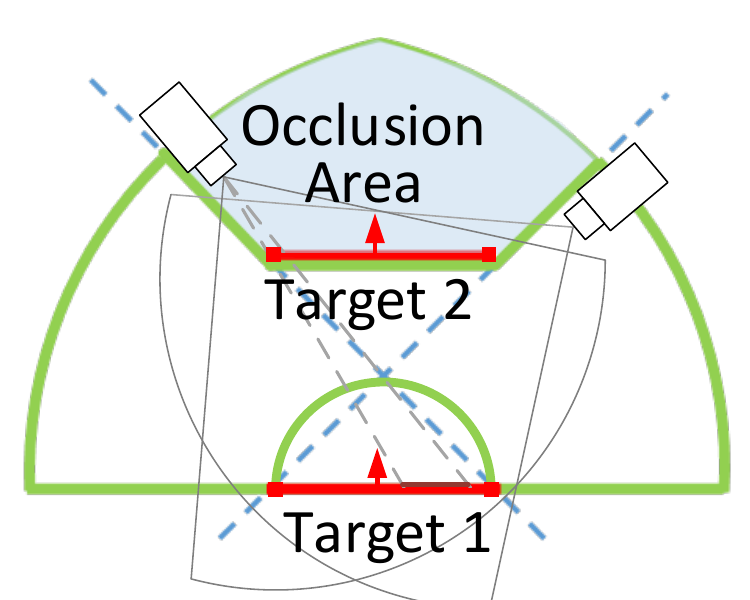}}
\hspace{0.2in}
\subfigure{\includegraphics[width=.24\linewidth ]{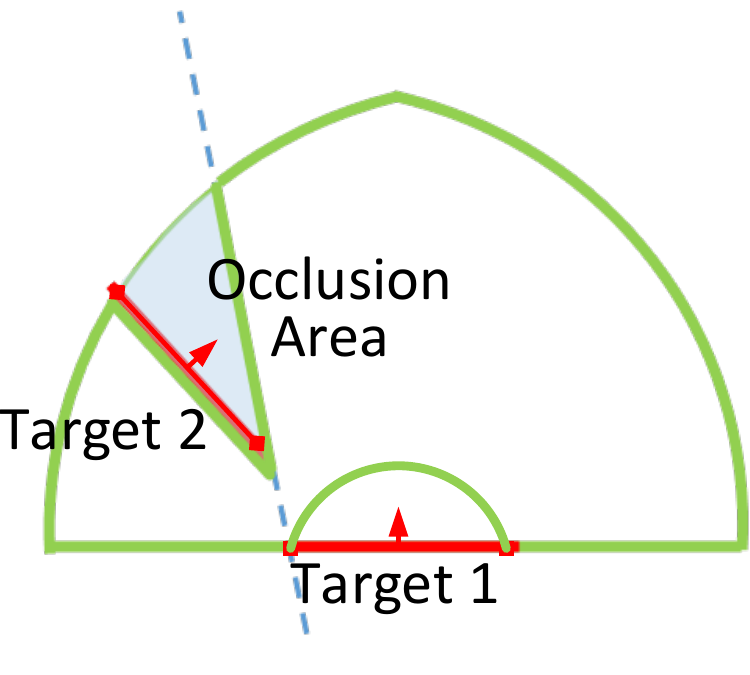}}
%\vspace{-0.2in}
\caption{Camera Placement Field (CPF) of Target 1 after excluding areas blocked by some occluder (labeled as Target 2).}
%\vspace{-0.1in}
\label{fig:occlusion}
\end{figure}

\subsection{Covering a single target by a single camera}
We aim to determine the region around a target where a camera can be placed and oriented to fully cover this target. We call this region the \emph{Camera Placement Field (CPF)}. It is more convenient to define the CPF by introducing one constraint at a time.

Starting with range constraints, Figure~\ref{fig:cpf_def1} shows how the space around a target is restricted by $R_{max}$ to the intersection of two circles each centered at one end point of the target segment, since target width is $\ll R_{max}$ ($R_{min} = 0$ was used to simplify the figure). Next, for the AOV constraint, we exclude locations too close to the target such that the angle required for full coverage would be larger than $\theta$. The area to exclude is bounded by an \textit{AOV arc} with the target segment as a chord at an inscribed angle of $\theta$. Then, we exclude everything behind the target to account for target pose.

Applying the first three constraints only results in an area we refer to as the \textit{Basic Camera Placement Field (BCPF)}. The BCPF is bounded by three arcs and two line segments as illustrated in Figure~\ref{fig:bcpf01}, which assumes that a camera can cover targets at $90^\circ$ rotations. While some tasks like face detection can still yield high accuracy at $90^\circ$ rotations \cite{chen2008hybrid}, the accuracy of object matching and point matching between two images drop significantly for rotations larger than $45^\circ$ \cite{bay2006surf}. To incorporate notions of quality in the coverage model, the BCPF can be restricted to only include locations within a certain rotation with respect to the target. This is achieved by a controllable parameter $\phi$ that constrains the range of acceptable rotations as illustrated in Figure~\ref{fig:bcpf2}.

Applying the last constraint, if other targets or obstacles intersect the BCPF of the target at hand, it is necessary to exclude the \textit{occlusion area} of all points within the BCPF where any camera cannot provide full coverage of this target. This is obtained by the lines connecting opposite ends of the occluding segment and the target segment as illustrated in Figure~\ref{fig:occlusion}.
%all points that are not visible to the target due to occlusion. Figure~\ref{fig:occlusion} shows the correct CPF for a given target in two scenarios. 
The vertices along the boundary of the CPF will be referred to as the \emph{critical points} of the CPF as they play a crucial role in our algorithms.

\subsection{Covering a pair of targets by a single camera}
For a single camera to cover two targets $T_a$ and $T_b$, it must fall in the CPF of each, meaning that camera placement is limited to the Intersection Area (IA) of their CPFs. This guarantees a placement that satisfies range, pose, and occlusion constraints for both targets as shown in Figure~\ref{fig:IA_with_aov}. The AOV constraint, on the other hand, requires for a candidate camera location $x$ and any choice of points $q_a \in T_a$ and $q_b \in T_b$ that $\angle q_a x q_b \leq \theta$. This constraint on camera locations can be conveniently encoded by a pair of \textit{AOV circles} that we define next.

\begin{figure}[!t]
\centering
%\subfigure[IA for two targets. \label{fig:sample_IA}]{\includegraphics[width=.4\linewidth ]{figures/approximation/IA_3_new.pdf}}
\subfigure[Construction of AOV circle pairs (solid) using helper circles (dashed). \label{fig:aov_construction}]{\includegraphics[width=.3\linewidth ]{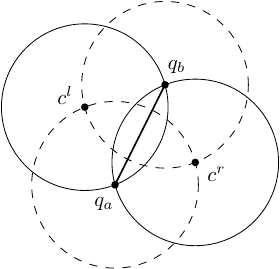}}
\hspace{0.2in}
\subfigure[AOV circles ($\theta < \frac{\pi}{2}$). \label{fig:aov_constraint}]{\includegraphics[width=.25\linewidth ]{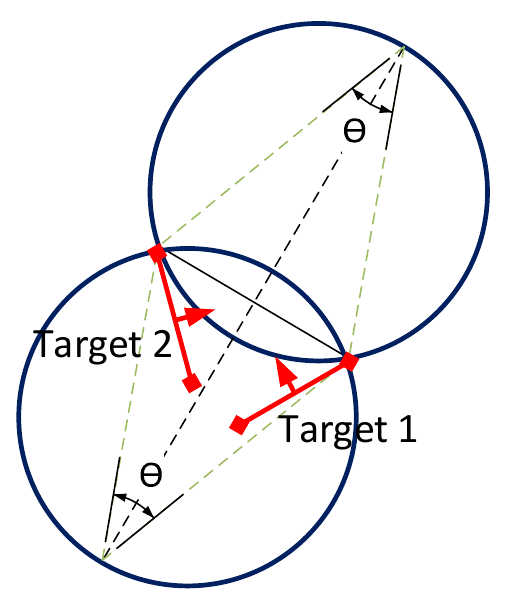}}
\hspace{0.2in}
\subfigure[Intersection Area (yellow) and its restriction by only one of the 4 AOV circle pairs ($\theta > \frac{\pi}{2}$).\label{fig:IA_with_aov}]{\includegraphics[width=.35\linewidth ]{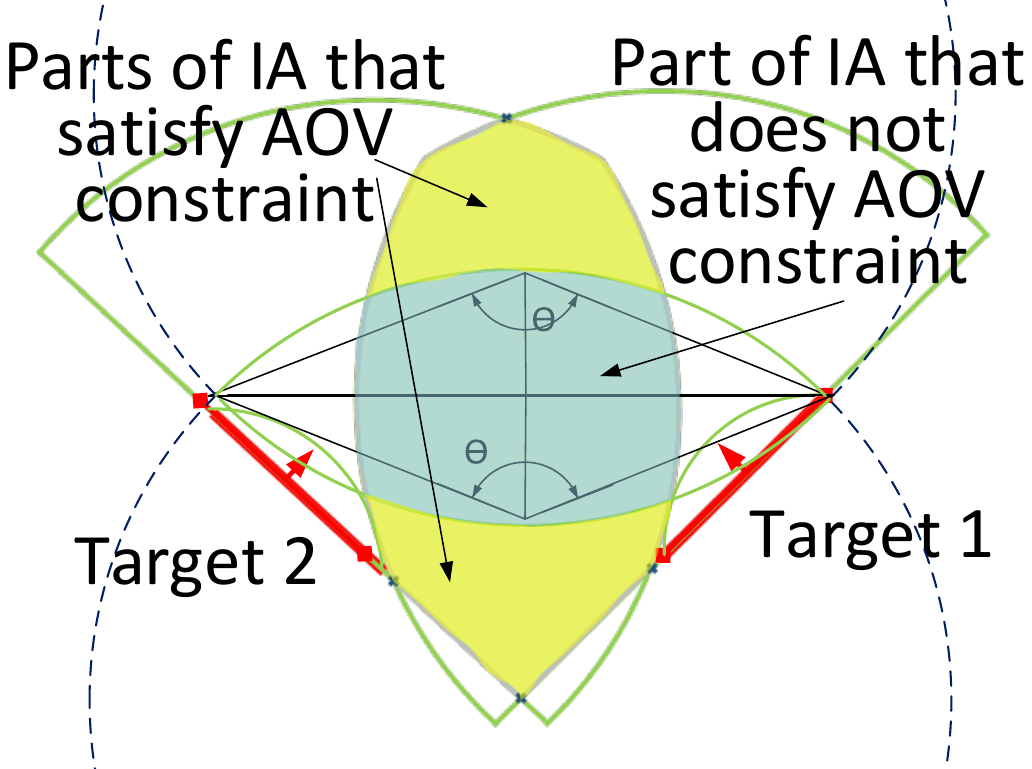}}
%\vspace{-0.15in}
\caption{AOV circle pairs and intersection area (IA).}
%\vspace{-0.25in}
\label{fig:AOV_and_IAs}
\end{figure}

\begin{definition}[AOV circle pair]
For any pair of points $(q_a, q_b)$, the \textit{AOV circle pair} is the two congruent circles sharing $\overline{q_a q_b}$ as a chord at an inscribed angle equal to the AOV $\theta$.
\end{definition}

For $\theta \leq \frac{\pi}{2}$, we exclude the union of the AOV circle pair while for $\theta > \frac{\pi}{2}$ we exclude their intersection. Note that the camera never lies inside both AOV circles as the intersection is always excluded. Hence, we can use individual AOV circles to enforce one constraint at a time. It is easy to verify that the circles having $\overline{q_a q_b}$ as a chord at an inscribed angle $\theta$ have radius $r_{q_a, q_b}(\theta) = \frac{|q_a q_b|}{2 \sin{\theta}}$. One way to construct the centers of these circles is to compute the two points of intersection for the two helper circles with radius $r_{q_a, q_b}(\theta)$ centered at $q_a$ and $q_b$ as shown in Figure~\ref{fig:aov_construction}.

For each pair of targets $(T_a, T_b)$ we need a set of \textit{AOV circles} to exclude all locations that cannot fully cover both targets simultaneously. We use the four diagonals connecting one end point from each target to generate four AOV circle pairs which can be shown to contain all AOV circles for all pairs of points $(q_a, q_b)$ on the two targets. We defer the formal proof to Appendix~\ref{sec:appendix}.

Intuitively, consider any pair of points $(q_a, q_b)$ on the two targets and a camera location $x$ where a camera is to be placed to cover both $q_a$ and $q_b$ simultaneously. Fixing $x$, observe that each of $q_a$ and $q_b$ can be moved to one of the end points on their respective target segments to make $\angle q_a x q_b$ larger. It follows that if a camera at $x$ can cover all diagonals then it can also cover any such pair of points and consequently the whole two targets. Hence, it suffices to exclude all camera locations that cannot cover any diagonal. Figure~\ref{fig:AOV_and_IAs} shows two examples of AOV circle pairs. {\color{black} In concurrent work, an alternative approach to covering pairs of targets was developed in~\cite{Lino_toric}; using Euler angles to parameterize the set of camera placements yielding a specific on-screen composition of the given pair of targets, the best viewpoint is found by an interval-based search in the parametric space defined by the Euler angles. In contrast, our approach works directly in the Cartesian space of camera placements yielding a more intuitive spatial decomposition which is easier to incorporate with the other constraints we consider for the purposes of coverage as required in surveillance and similar applications.}

\section{Drone Placement: the {\ProblemNameAcro} Solver}
\label{sec:approx}

Deploying drones requires configuring each with a location to move to and a direction to point its camera sensor. There are infinitely many possible configurations spanning every location where a drone can be positioned and every direction it can be covering. In order to get a handle on the problem of drone placement, the key step is to reduce the space of configurations into a small finite subset. The goal of the \textit{\ProblemNameAcro} \textit{Solver} is to compute a set of configurations that covers a given set of \emph{\ModelNameAcro} targets using the minimum number of drones. The \textit{\ProblemNameAcro} \textit{Solver} can be broken down into three modules: 1) A \textit{Spatial Discretizer} responsible for finding a small subset of points to work with, 2) An \textit{Angular Discretizer} that determines the relevant directions to consider at each of the points selected by the \textit{Spatial Discretizer}, and 3) A \textit{Configuration Selector} to pick a subset of the configurations generated by the \textit{Angular Discretizer}.

% In this section, we examine each of the three modules. In this design, the spatial discretizer employed determines whether we obtain a provable approximation algorithm or only a heuristic.

%For the rest of this section, we present our approach to dividing the search space and extracting a finite set of candidate camera placements and discuss the computational cost of each step. Then, we show the mapping of the set of candidates to the \verb|SET-COVER| setting and the application of the \verb+greedy+ selection scheme to solving \emph{\ProblemNameAcro}.

\subsection{Spatial Discretizer}

The goal of the \textit{Spatial Discretizer} is to generate the candidate locations for camera placement. Each candidate location can be used to view a subset of targets under the coverage model. A key characteristic of the \textit{Spatial Discretizer} is the nature of the set of candidate locations it generates. We define two types of candidate sets: 1) comprehensive and 2) heuristic, denoted by $\mathcal{P}$ and $\hat{\mathcal{P}}$, respectively. \textit{Comprehensive representation of the search space means that the set of candidate locations is guaranteed to include all optimal configurations, up to an equivalence. Two configurations are equivalent with respect to a subset of targets if both configurations can cover these targets under the same constraints.} Heuristic sets are not guaranteed to be comprehensive but are an effective alternative which is also practical as they include fewer locations allowing faster computation of drone configurations at the expense of a potential increase in the number of drones.

Formally, given a comprehensive set of candidate locations $\mathcal{P}$ we are able to obtain an $O(\log{n})$-approximation algorithm. However, generating the $O(N^4)$ candidates required for a comprehensive set can be an overkill and incurs much higher overhead. This, in turn, slows down both the \textit{Angular Discretizer} and \textit{Configuration Selector} as they would have to go through too many candidates. To remedy this, we develop a heuristic spatial discretizer that generates $O(N)$ candidates $\hat{\mathcal{P}}$, enabling the \emph{\ProblemNameAcro} \textit{Solver} to handle larger numbers of targets.

%This step is key in the performance of the algorithm in both minimizing the number of drones used and the run time of the algorithm. We present an exact approach that generats all possible placement locations allows. This allows for reducing \emph{\ProblemNameAcro} to \verb|SET-COVER| which a greedy algorithm achieves $O(\log{n})$-approximation, where $n = |\mathcal{T}|$ (Section~\ref{sec:exact}). However, the exact algorithm generates $O(N^4)$ possible locations. We develop a heuristic that relies on a key construct in the exact approach while having $O(N\log N)$ locations which can be processed much faster by later steps.

%\vspace{-0.1in}
\subsubsection{Comprehensive Spatial Discretization}
\label{sec:exact}
%\vspace{-0.05in}

%We develop a polynomial-time $O(\log{n})$-approximation algorithm for \emph{\ProblemNameAcro}, where $n = |\mathcal{T}|$. 

Our objective is to identify candidate locations that comprehensively represent the search space through spatial subdivisions based on target, obstacle, and camera constraints. %Similar approaches have been proposed to pick candidates from the search space by sampling \cite{gonzalez2001randomized}, triangulation and division \cite{deshpande2007pseudopolynomial}, or rule-based subdivision \cite{han2008deploying}, which our approach is close to.

Per Section~\ref{sec:cov}, for a single camera to cover three or more targets, the camera must fall in the IA of all of their CPFs and outside some of their AOV circles. It is clear that any computation on the power set of $\mathcal{T}$, examining all subsets to generate all possible IAs, would take an exponential number of steps. We avoid this paradigm of enumerating IAs explicitly, and only compute discrete representatives for them.

The representatives we compute are the intersection points of the geometric coverage constraints. Note that the vertices along the boundary of any potential region for camera placement to cover a given subset of targets are either critical points of a CPF, intersection points between CPFs, or intersection points between CPFs and AOV circles; we use $\mathcal{P}$ to denote the set of all such vertices. We prove that $\mathcal{P}$ is a comprehensive representation.

\begin{theorem} \label{th:rep}
Given an \emph{\ProblemNameAcro} instance $\langle \mathcal{T}, \theta, R_{min}, R_{max} \rangle$, the set $\mathcal{P}$ contains at least one representative for each feasible coverage configuration for all subsets of $\mathcal{T}$.
\end{theorem}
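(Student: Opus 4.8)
The plan is to exhibit, for every subset $\mathcal{T}' \subseteq \mathcal{T}$ coverable by a single camera, an explicit point of $\mathcal{P}$ lying in the region of camera positions from which all of $\mathcal{T}'$ can be covered; such a point is then a valid representative, and since every single-camera configuration covers some feasible subset, this proves the statement. First I would describe that feasibility region precisely. For $\mathcal{T}' \subseteq \mathcal{T}$, let $F(\mathcal{T}')$ be the set of positions $q$ from which, for a suitable viewing direction, every $T_j \in \mathcal{T}'$ is fully covered. By the discussion preceding the theorem, $q \in F(\mathcal{T}')$ iff $q \in \mathrm{CPF}(T_j)$ for all $T_j \in \mathcal{T}'$ and the angular span of $\bigcup_{T_j \in \mathcal{T}'} T_j$ seen from $q$ is at most $\theta$; the latter amounts to $q$ lying outside at least one circle of the AOV circle pair of every segment joining two endpoints of targets in $\mathcal{T}'$. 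Hence $F(\mathcal{T}')$ is a closed bounded region whose boundary consists of arcs of a single fixed finite family $\mathcal{C}$ of curves: the boundary pieces of the $O(n)$ CPFs ($R_{max}$-arcs, single-target AOV arcs, $\phi$-rotation and behind-target segments, occlusion segments) together with the $O(n^2)$ AOV circles built on endpoint-joining segments. Crucially $\mathcal{C}$ does not depend on $\mathcal{T}'$, and $\mathcal{P}$ is assembled precisely from the pairwise intersection points of curves in $\mathcal{C}$ (critical points of a single CPF, CPF--CPF intersections, CPF--AOV intersections).

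Next I would show that a nonempty $F(\mathcal{T}')$ always contains such a pairwise intersection point. Pick any $q \in F(\mathcal{T}')$. If $q$ lies in a $2$-dimensional component, slide it along a generic ray until it first meets $\partial F(\mathcal{T}')$; by closedness the resulting point still covers $\mathcal{T}'$ and lies on some curve $\gamma_1 \in \mathcal{C}$. Then slide along $\gamma_1$, staying on $\partial F(\mathcal{T}')$, until reaching the end of the current boundary arc, i.e.\ a point $p$ lying also on a second curve $\gamma_2 \in \mathcal{C}$; then $p \in F(\mathcal{T}') \cap \gamma_1 \cap \gamma_2$. (When $F(\mathcal{T}')$ is at most $1$-dimensional the same conclusion follows by taking an endpoint of the arc it forms, or $p = q$ if it is a single point.) Since $p$ lies on two curves of $\mathcal{C}$ it is --- depending on the types of $\gamma_1,\gamma_2$ --- a critical point of one CPF, an intersection of two CPF boundaries, an intersection of a CPF boundary with an AOV circle, or an intersection of two AOV circles; the first three are by definition in $\mathcal{P}$, and I treat the last case below. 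In every resolved case $p \in F(\mathcal{T}') \cap \mathcal{P}$, so a camera at $p$ with the appropriate direction fully covers all of $\mathcal{T}'$ and $p$ is the required representative; ranging over all feasible $\mathcal{T}'$ completes the argument.

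The part I expect to require the most care is making the first step airtight: verifying that $\partial F(\mathcal{T}')$ contains no boundary pieces other than arcs of $\mathcal{C}$, and that the non-convexity of the AOV constraints (each is the exterior of a disk) does not spoil the ``slide to a vertex'' argument --- in particular that every boundary arc genuinely terminates at an intersection with another curve of $\mathcal{C}$, and that disconnectedness of $F(\mathcal{T}')$ is harmless (it is, since we only ever work inside the component containing $q$). Tightly coupled to this is pinning down $\mathcal{P}$ so that it really does contain every vertex type appearing on $\partial F(\mathcal{T}')$ --- the delicate case being a vertex where two AOV circles meet. Here I would argue that the extreme endpoint pair defining the binding AOV constraint can only change as $q$ crosses a line through two target endpoints, and that every such line is already one of the occlusion segments (hence a CPF-boundary curve), so that an AOV/AOV vertex of $F(\mathcal{T}')$ necessarily coincides with a CPF/AOV vertex and thus lies in $\mathcal{P}$; alternatively, one may simply take $\mathcal{P}$ to be all $O(n^4)$ pairwise intersections of curves in $\mathcal{C}$, which is consistent with the stated $O(N^4)$ bound and makes the classification step immediate.
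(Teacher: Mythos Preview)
Your proposal is correct and follows essentially the same approach as the paper: describe the feasibility region $F(\mathcal{T}')$ (the paper calls it $A_k$) as an intersection of CPFs with AOV-circle exteriors removed, observe that its boundary is assembled from pieces of the fixed curve family $\mathcal{C}$, and argue that some vertex of this boundary lies in $\mathcal{P}$.

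The one noteworthy difference is how the AOV/AOV vertex case is handled. You slide to an arbitrary boundary curve, then to an arbitrary vertex, and must then confront the possibility that both incident curves are AOV circles; you resolve this either by a change-of-binding-pair argument or by enlarging $\mathcal{P}$ to all pairwise intersections in $\mathcal{C}$. The paper sidesteps this entirely with a cleaner observation: since $A_k$ is contained in the (bounded) intersection of CPFs, at least one arc of $\partial A_k$ must come from some CPF boundary, and the endpoints of that arc are therefore CPF/CPF or CPF/AOV intersections (or critical points of a single CPF), all of which lie in $\mathcal{P}$ by construction. This avoids any need to analyze AOV/AOV vertices. Your argument is more explicit about the sliding mechanics and the degenerate lower-dimensional cases, which the paper leaves implicit; the paper's is shorter because it targets a specific CPF-incident vertex rather than an arbitrary one.
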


\begin{proof}
Let $S \subseteq \mathcal{T}$ be a subset of $k$ targets that can be covered simultaneously by a single camera $c$ placed at point $x$. The case where $k=1$ is trivial, since any critical point on the CPF of a single target can be used as a representative for covering that target. Since $\mathcal{P}$ contains all critical points of all CPFs, we are done. For $k \geq 2$, let $A_k$ be the region around $x$ to which $c$ can be moved and rotated accordingly while still being able to cover all $k$ targets in $S$. Clearly, $A_k$ must lie in the intersection of CPFs of all targets in $S$. Otherwise, by the definition of a CPF, at least one of the pose, range ($R_{max}$ and $R_{min}$) or occlusion constraints would be violated for at least one target in $S$, a contradiction. Moreover, $A_k$ must lie outside at least one of the AOV circles generated by all pairs of targets in $S$. Otherwise, by the definition of an AOV circle pair, $c$ would not be able to simultaneously cover at least two of the targets in $S$ by an AOV $\theta$, again a contradiction. We may therefore think of $A_k$ as a region enclosed in a set of CPFs with some parts taken out by a set of AOV circles. This implies that $A_k$ is bounded by at least one CPF and possibly some AOV circles. This allows us to describe $A_k$ by the curves outlining its boundary and their intersection points. Regarding $A_k$ as the equivalence class of points where a camera can be placed to cover $S$, any of these intersection points can serve as a representative. As there is at least one CPF boundary for $A_k$, these intersection points must contain either an intersection point of two CPFs or an intersection point of a CPF and an AOV circle. By construction, $\mathcal{P}$ contains all such intersection points.
\end{proof}

We consider the complexity of generating $\mathcal{P}$. Letting $N = n + u$, each CPF can be represented by up to $O(N)$ pieces as all other $n$ targets and $u$ obstacles can split the BCPF into several parts. Thus, the operation of intersecting two CPFs is $O(N^2)$ and performing this operation pairwise for all targets is $O(n^2N^2)$ (See Figure~\ref{fig:occlusion}). The operation of intersecting a CPF with an AOV circle is $O(N)$, and is repeated $O(n^2)$ times for all AOV circles resulting in an $O(n^2N)$ operation per target. Hence, repeating this operation $O(n)$ times takes $O(n^3N)$. This amounts to a total of $O(n^2(n^2 + nu + u^2))$. We relax this expression to $O(N^4)$ and loosely bound $|\mathcal{P}| = O(N^4)$.

%\vspace{-0.1in}
\subsubsection{Heuristic Spatial Discretization}
\label{sec:heuristic}
The $O(N^4)$ candidates generated by the approximation algorithm are too demanding for real-time applications. On top of that, we can still produce good solutions using far fewer candidates at the cost of missing tightly packed configurations corresponding to small intersection areas. The reason is that each additional target further restricts the region of space where cameras can be placed to cover the set of targets simultaneously. In practice, such configurations are neither robust to errors in target localization and drone navigation nor stable enough to capture the anticipated views before targets move apart. 
%The approximation algorithm has the advantage of producing a complete representation using intersection points, but we gain very little by considering tiny regions. 
This motivates a more efficient and robust approach to the generation of candidates. We propose the \textit{Basic Camera Placement Field (BCPF) sampling}. 

%XXXXMENTION $\mathcal{P}$ HERE!
An intuitive approach to yield $O(n)$ candidate locations is to sample a constant number of points per target taking occlusion into account. However, an easy first order relaxation is that any camera placement covering a given target must fall in its BCPF of that target (Figure~\ref{fig:bcpf01}). The advantage of using the BCPF instead of the actual CPF, is that BCPF can be computed in $O(1)$ per target compared to $O(N)$ for the CPF. Once the BCPF is known, uniformly sampling its interior should capture most of the useful configurations. Note that the intersection of multiple BCPFs gets sampled proportionally which favorably reduces the probability of missing good candidate points. However, as suggested by our simulations with uniformly random target where adversarial arrangements are unlikely, it suffices to sample points along the boundary of the BCPF. Letting $\rho$ be the sum of the central angles of the two BCPF arcs and the apex angle of the triangle in between, we can fix suitable BCPF sampling steps $\epsilon_a$ and $\epsilon_r$ for the angular and radial axes, respectively. With that, we generate $O(\frac{\rho \cdot R_{max}}{\epsilon_a \cdot  \epsilon_r} \cdot n)$ candidate locations that we call $\hat{\mathcal{P}}$. Our experiments show the promise of this almost agnostic approach to candidate generation as it is able to match the quality of the approximation algorithm while being much faster.
%\vspace{-0.1in}

\subsection{Angular Discretizer}
\label{sec:sweeping}
%\vspace{-0.05in}

%Each representative point in $\mathcal{P}$ can be regarded as a candidate location for camera placement.
Once a camera is placed at a given location $x$ from either $\mathcal{P}$ or $\hat{\mathcal{P}}$, we need to determine the relevant viewing directions (VDs) to consider. We achieve this in two stages: First, we perform an angular sweep to identify one representative VD for each subset of targets that can be covered simultaneously from the location in question. Then, we optimize representative VDs for better footage quality.

\textbf{Angular sweep:} This step identifies a set of representative VDs $sweep(x) = \{\hat{\alpha}_1, \hat{\alpha}_2, \dots\}$ for each maximal subset of targets that can be covered simultaneously by a camera placed at $x$. Each such maximal subset can be covered by a range of viewing directions $[\alpha^l_i, \alpha^h_i]$. The application may specify a criteria for selecting the best direction from this range. As a default setting, we use $\hat{\alpha}_i = (\alpha^l_i + \alpha^h_i) / 2$. Let $cov(x, \alpha)$ denote the maximal subset of targets covered by a camera at $x$ when its VD is set to $\alpha$. Observe that if we perform a radial sort around $x$ of the end points of all target segments visible from $x$, no two targets overlap. Given the radial sort of all end points, we can easily determine which targets are visible by discarding segments interrupted by a closer point and enumerate $sweep$ in $O(N)$. The radial sort can easily be found in $O(N \log{N})$. Alternatively, a \textit{visibility diagram} for the set of segments can be constructed in $O(N^2)$ \cite{Vegter1990}. Using the diagram, $sweep$ queries take $O(N)$.

\textbf{Viewing direction optimization:} Ideally, surveillance footage should provide clear frontal views by an assignment of cameras to targets with each camera-target pair nearly facing one another. This easily breaks down when the camera's viewing direction is not directly towards the target. Given a candidate location for camera placement $x$, each maximal subset of targets $cov(x, \hat{\alpha}_i)$ may be covered by any viewing direction $\alpha \in [\alpha^l_i, \alpha^h_i]$. Within this range, one extreme might favor certain targets placing them right at the center of the FOV, while other targets barely fit at the side. Depending on the spread of these targets and the direction each of them is facing, a camera positioned at $x$ can adjust its VD to obtain the best views possible. A natural objective is to minimize the \textit{deviation}, defined as the angle between the camera's VD and the line-of-sight from $x$ to the target's midpoint. Let $d(x, \alpha, T_j)$ denote the deviation for target $T_j$ when viewed by a camera at $x$ with VD $\alpha$. With that, we seek to minimize the total deviation over all targets $f_1(x, \hat{\alpha}, \alpha) =  \sum_{T_j \in cov(x, \hat{\alpha})}d(x, \alpha, T_j)$. The optimal VD $\alpha^\ast$ can then be chosen as $\argmin_{\alpha \in [\alpha^l, \alpha^h]} f_1(x, \hat{\alpha}, \alpha)$. Alternatively, we may choose to minimize the worst deviation for any one target $f_\infty(x, \hat{\alpha}, \alpha) =  \max_{T_j \in cov(x, \hat{\alpha})}d(x, \alpha, T_j)$.

{\color{black}
\textbf{Remark:} By design, the \textit{Angular Discretizer} is restricted to the candidate locations returned by the \textit{Spatial Discretizer}. However, recall that such candidate locations are merely suggested as witnesses that certain subsets of targets can be covered by a single camera. It is possible to choose better locations to cover a given subset of targets than the representative location provided by the \textit{Spatial Discretizer}. This is further discussed in Section \ref{sec:open}.}

%\vspace{-0.1in}
\subsection{Configuration Selector}
With the output of the \textit{Angular Discretizer} as the set of configurations $\mathcal{R} = \{ cov(x, \hat{\alpha}) \:|\: x \in \mathcal{P}, \hat{\alpha} \in sweep(x) \}$, our goal is to find a \textit{minimum set cover} which is a subset $\mathcal{R}_{opt} \subseteq \mathcal{R}$ whose union is $\mathcal{T}$ with $|\mathcal{R}_{opt}|$ minimized. Using the standard \verb+greedy+ approximation scheme, we compute a cover $\mathcal{R}_{greedy}$ with a guaranteed bound $\frac{|\mathcal{R}_{greedy}|}{|\mathcal{R}_{opt}|} = O(\log{|\mathcal{T}|})$ \cite{chvatal1979greedy}. In each round, the algorithm greedily picks the set that covers the largest number of uncovered targets, updates the sets and repeats until all targets are covered. Using the notion of \textit{deviation} we used for optimizing the viewing direction per candidate location, we can also rank different candidate locations according to the quality of coverage they can offer. At iteration $i$, among all candidates $\{(x, \hat{\alpha})\}$ that can cover the maximum number of targets, we favor the one achieving the minimum $f_1(x, \hat{\alpha}, \alpha^\ast)$. The greedy algorithm will then return a coverage scheme providing better views while still approximating the minimum number of cameras needed.

% \Minor{If we obtain an $O(\log OPT)$-approximation, briefly describe it here.}
To obtain an $O(\log{n})$-approximation, the comprehensive set of candidates $\mathcal{P}$ is used. As sweeping over $\mathcal{P}$ to generate $\mathcal{R}$ takes $O(N^5)$ steps, we loosely bound the time complexity of the proposed approximation algorithm by $O(nN^5)$. Similarly, defining a set of configurations $\hat{\mathcal{R}}$ using $\hat{\mathcal{P}}$ from the heuristic spatial discretizer results in an $O(\frac{\rho \cdot R_{max}}{\epsilon_a \cdot  \epsilon_r} \cdot n^2N)$ algorithm. %XXXXNOT SURE ABOUT THAT!

%\emph{Restricted Angular Sweeping:} note that any given candidate location $x$ was generated for a particular target $t_x(1)$ or pair of targets $(t_x(1), t_x(2))$. It is therefore only useful to consider VDs that include the targets for which the location was generated in the first place. With that, we only sweep the FOV such that those targets are included instead of a full $360^\circ$ scan. For a single target, however, sweeping is not even necessary. Therefore, we just replace $sweep(x)$ to become $restrictedSweep(x, t_x(1), t_x(2))$. %save this to the journal version isA, i don't want to bother getting it right for now.

%\subsection{Coverage Quality Optimization}
%\label{sec:practical}

%Ideally, surveillance footage should provide clear frontal views by an assignment of cameras to targets with each camera-target pair nearly facing one another. This easily breaks down in one of two ways: \textit{rotation} where the target is facing away from the camera or \textit{deviation} where the camera's Viewing Direction (VD) is not directly towards the target. We discuss three approaches we follow to improve the quality of coverage during our experiments.%We discuss how the OLSC model can be parameterized to generate camera configuration that provide bounded rotations for any one target, and how to further balance the deviation among the targets being covered by a single camera.

%\textit{1) Optimizing viewing directions:} 

%\textit{2) Optimizing camera placements:}

\section{Implementing Argus}
\label{sec:implementation}

\begin{figure}[!t]
\centering
	\includegraphics[width=0.6\linewidth]{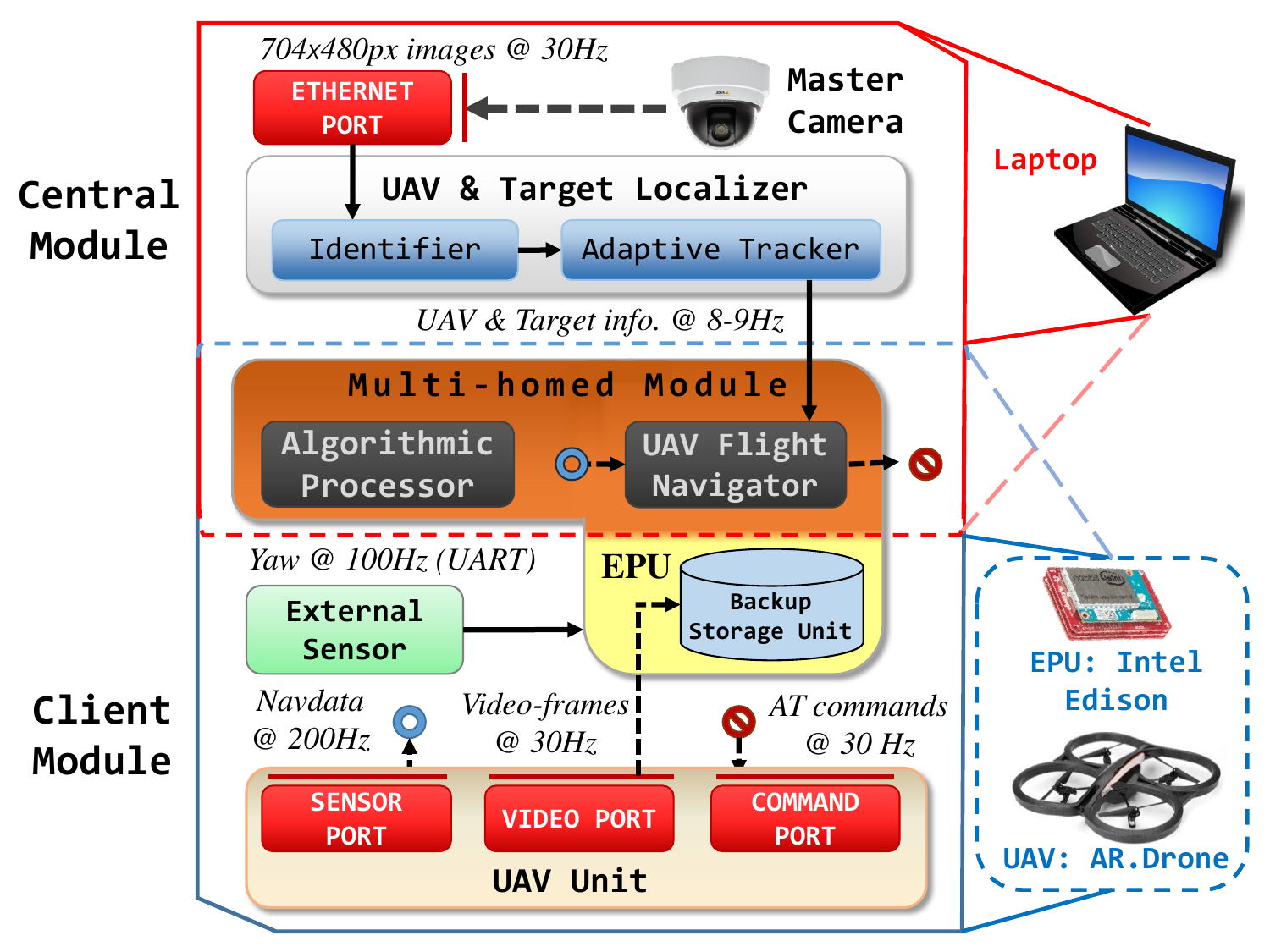}
		%\vspace{-0.2in}
	\caption{Architecture of the Argus prototype.}
	\vspace{-0.15in}
	\label{fig:system_arch}
\end{figure}

Our goal is to develop a fully autonomous instance of Argus to measure the overhead of the \textit{\ProblemNameAcro} \textit{Solver} under realistic conditions. We build upon our earlier work on developing an autonomous testbed for multi-drone experiments \cite{una,khan2016simulating}. Figure \ref{fig:system_arch} depicts the architecture of the Argus prototype that we fully implement as three modules: \textit{Central}, \textit{Client}, and \textit{Multi-homed}.

\textit{The Central Module} is responsible for localizing quadcopters and targets in 2D and running the \textit{\ProblemNameAcro} \textit{Solver}. The \textit{\ProblemNameAcro} \textit{Solver} runs only the \textit{BCPF Sampling} algorithm as it is more efficient while being competitive to the approximation algorithm.  In our setup, the \textit{Central Module} is run on a Lenovo ThinkPad Y50. The \textit{Central Module} uses a master camera to obtain the input for its \textit{UAV and Target Localizer} component. We use an Axis 213 PTZ network camera located directly above the testbed area. The master camera is connected to the \textit{Central Module} through an Ethernet cable and provides images at a frequency of 30~$Hz$. The \textit{UAV and Target Localizer} filters the noise and locates all quadcopters and targets in the image. Each image is then passed to the \textit{Adaptive Tracker} which makes use of the last known location of each drone or target to localize it in the scene. This approach reduces the processing time of the localization step by performing local searches in the image for drones and targets.

\textit{A Client Module} is the mobile camera component of the system. We choose a quadcopter platform for its low-cost, small size, and maneuverability even in small spaces. In particular, we use the Parrot AR.Drone 2.0 \cite{krajnik2011ar} which is equipped with an ARM processor running an embedded Linux 2.6.32 BusyBox. The Parrot AR. Drone 2.0 is also equipped with two cameras: a front 720p camera with a 93$^\circ$ lens and a vertical QVGA camera with a 64$^\circ$ lens. We mainly use the front camera in our experiments. We allow the client to add as many sensors as needed which can help obtain more surveillance information (e.g., depth sensors) or better navigate the drone (e.g., accelerometers). To this end, we use an External Processing Unit (EPU) which collects recorded video from the camera and sensory readings from the external sensors. Communication between the drone and the EPU is performed over Wi-Fi.

%The role of the EPU can be broadly classified into three categories: sensory information acquisition, sensory information preprocessing, and drone navigation.

For the EPU, we use Intel Edison which is an ultra-small computing device powered by an Atom system-on-chip dual-core CPU at 500~$MHz$ and 1~$GB$ RAM. Intel Edison has integrated Wi-Fi, Bluetooth, 50 multiplexed GPIO interfaces, and runs Yocto Linux. The EPU is powered by a Battery Block. Additional sensors are hardwired into the EPU using an Arduino block. We use an Inertial Measurement Unit (IMU) as the external sensor in this setup. The IMU improves the autonomous navigation of drones by providing finer grain yaw angles to help with drone orientation. Another benefit of mounting EPUs on quadcopters is the extra processing power and added flexibility they offer. We can install our own drivers, operating systems, integrated sensors, and overcome the typical closed-nature restriction of off-the-shelf quadcopters. We attach the EPU on top and close to the center of gravity of the drone to avoid disturbing the balance and stability of the vehicle. The EPU setup is shown in Figure \ref{fig:edi-setup}.

%Moreover, researchers have already exploited the use of ATMEL Microcontroller, Gumstix Overo Fireboard, Ordroid, BeagleBone and Raspberry Pi as on-board processors for UAVs \cite{jimenez2013framework,michael2010grasp, bekmezci2015flying}. This makes us the first to exploit and leverage Intel Edison's capabilities on our testbed. The total weight of the quadcopter with $1500mAh$ battery, indoor hull, and Intel Edison is $501g$.

%External processing unit (EPU): It is responsible for control algorithms and/or algorithmic processes as well as data manipulation and processing of sensory information coming from AR.Drone or IMU.a (yaw angles from IMU sensor and video-frames from AR.Drone's front camera) and transmission of yaw angles to UAV Flight Navigator. We implement \textit{Intel Edison} as the EPU and its specifications can be found in \cite{EDI}.:

\textit{The Multi-Homed Module} is a special set of sub-modules that can belong to either the \textit{Central} or \textit{Client} module. The flexibility of housing its sub-modules allows easy migration between a centralized and a distributed platform. We use two such sub-modules: \textit{UAV Flight Navigator} and \textit{Algorithmic Processor}. The \textit{UAV Flight Navigator} receives a set of parameters from the \textit{UAV Localizer} (i.e., 2D coordinates) and the \textit{IMU} (i.e., yaw angles) and controls the drone through its navigation parameters to properly fly to the desired coordinates. The \textit{Algorithmic Processor} handles any sensory information processing (i.e., \textit{Fine Grain Context Detector} functionalities).		

\begin{figure}[!t]
\centering
	\includegraphics[width=0.3\linewidth]{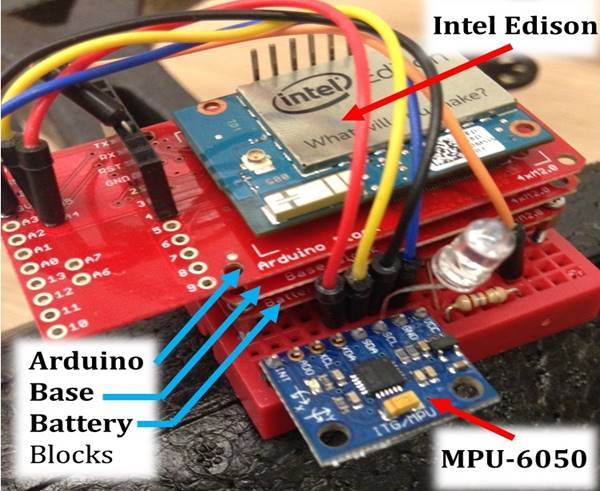}		
	%\vspace{-0.24in}
	\caption{EPU setup.}
	%\vspace{-0.22in}
	\label{fig:edi-setup}
	
\end{figure}

\begin{figure}[!t]
\centering
\includegraphics[width=0.6\linewidth ]{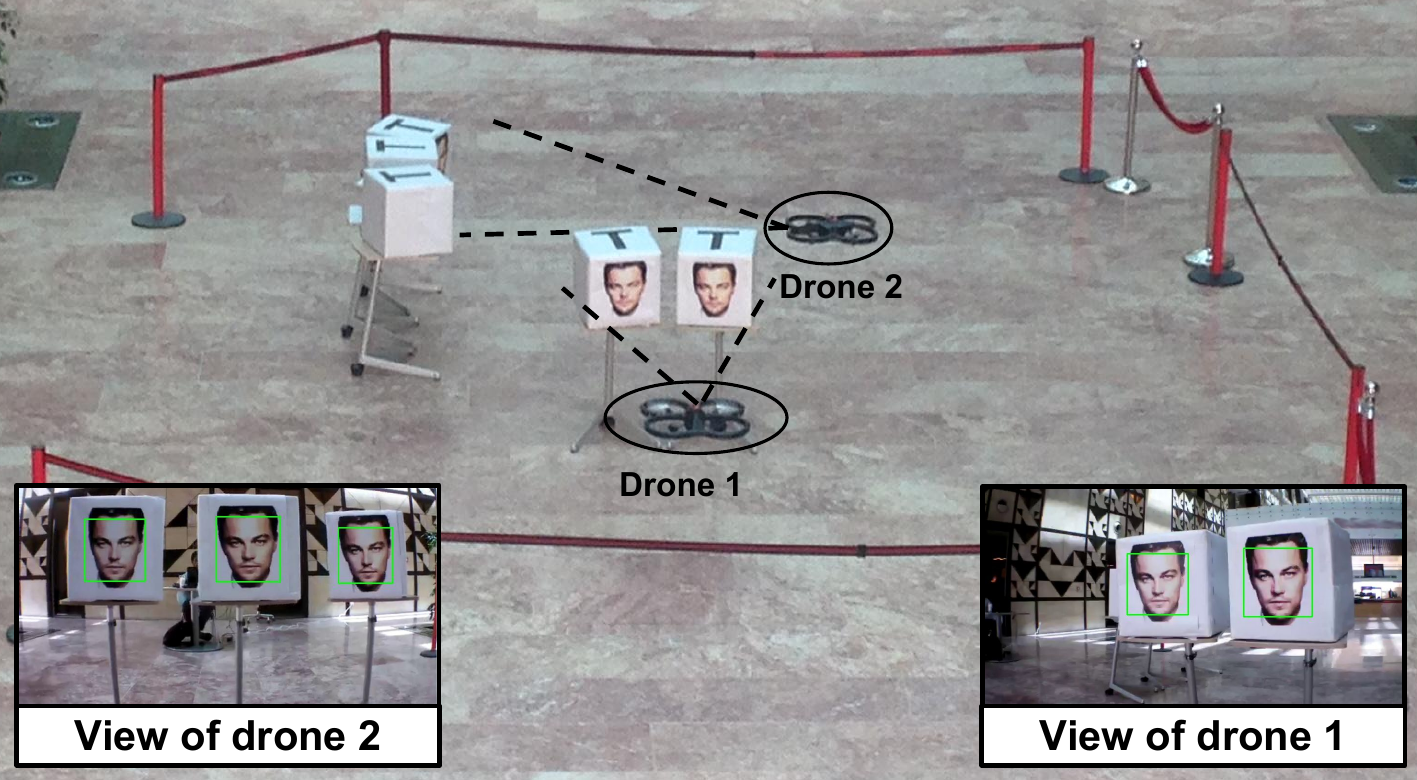}
%\vspace{-0.28in}
\caption{Experimental setup: target layout, drone configurations and captured images.}
%\vspace{-0.25in}
\label{fig:testbed}
\end{figure}

\textbf{Experimental setup:} The testbed covers an area of 30~$m^2$ where we place one to five synthetic targets positioned in configurations that require a maximum of two drones (Figure~\ref{fig:testbed}); for scenarios with one to three targets, we only need one drone for coverage, and for scenarios with four or five targets, we need two drones. Our target apparatus is a white box mounted on top of a podium with a printed face attached to one of its vertical sides to represent the significant perspective. A letter ``T'' on the top side of the box helps simplify location and pose estimation.

Noting that drone control and sensory information analysis are processing-intensive operations, we aim to achieve real-time processing with minimal latency. To this end, we handle the autonomous control of drones on the \textit{Central Module} and distribute the processing of video feeds from each drone under the \textit{Client Module} running on the drone's EPU to detect faces on the sides of targets. We set $R_{max} = 2\:m$ and $\theta = 75^\circ$, which is slightly smaller than the camera's actual AOV, to avoid cases where covered targets barely fit in the captured frame.

\begin{table}%[!t]
\renewcommand{\arraystretch}{1.1}
\centering
{
\begin{tabular}{c|c|c|c}
 & Horizontal Motion & Rotation & Hovering \\
\hline
Power (Watt) & $65.625$ & $68.750$ & $61.125$\\
\hline
Energy per meter (Joule/m) & $65.630$ & $68.750$ & N/A\\
\end{tabular}
}
%%\vspace{-0.1in}
\caption{Energy consumption of typical drone maneuvers measured over time (i.e., power) and distance traveled.}
\label{table:energy}
\vspace{-0.25in}
\end{table}

\textbf{Real-time adaptation to target mobility:} Argus needs to repeatedly invoke the \textit{\ProblemNameAcro} \textit{Solver} to respond to updates in the locations of either targets or obstacles. As shown in Section~\ref{sec:evaluation}, the algorithm can take up to a few seconds based on the number of targets. Until a drone is assigned a new configuration, decisions have to be made locally by each drone to respond to target mobility in real-time. Argus allows drones to hover in place or move horizontally for short distances to maintain target coverage using standard tracking algorithms \cite{kim2013unmanned}. Local decisions, based on the energy footprint of each maneuver, are computed on the EPU to minimize the cost of the proposed strategy. Table~\ref{table:energy} summarizes the power consumption of typical drone maneuvers. To avoid large rotations or displacements, drones cooperate to keep targets in view \cite{hausman2015cooperative}. This autonomous behavior also serves as a fallback strategy if the communication link between the \textit{Central Module} and the drone is broken.

%\vspace{-0.1in}
\section{Evaluation}
\label{sec:evaluation}
%\vspace{-0.1in}

%something like "we present a experiment that is comparable in size to the state of the art [1,2,3,4]""which is enough to demonstrate points x, y, and z""as presented in figures 1, 2,3,4"maybe we can add "A larger setup will introduce further challenges that are out of scope of this paper including coordination, ..."

{\color{black}
We aim to assess the performance of a working instance of Argus in real-time and verify the efficiency of the proposed algorithms. To this end, we demonstrate the advantages of the \ModelNameAcro{} model, compared to the traditional model of targets as mere points, through the prototype we implement per Section~\ref{sec:implementation}. In particular, we analyze the overhead of the \textit{\ProblemNameAcro} \textit{Solver} within the system and establish the feasibility of adopting this enhanced model in a real surveillance system. The prototype we employ for this evaluation leverages typical hardware that can be found at most research labs and is comparable in scale to the experiments reported on closely related systems~\cite{ETH, cin_director}. In addition, we present a set of large scale simulations that compare the performance of the proposed algorithms against a baseline and establish the sampling heuristic as the method of choice, which we employ on the prototype.
}

%\textbf{{\ProblemNameAcro} calculation timing:} Our proposed algorithm does not optimize for target mobility. Hence, as targets move the algorithm is repeatedly invoked to calculate new configurations of mobile camera placement. This process is expensive for two reasons. First, as shown in Section~\ref{sec:evaluation}, the algorithm can take up to few seconds based on the number of targets. This limits the frequency at which we can calculate new configurations. Second, consider Table~\ref{table:energy} which summarizes power consumption of different drone motions. Hovering and horizontal motion are not as expensive as rotation, in terms of drone energy consumption. Taking these factors into account, we allow drones to hover or move short distances horizontally to keep targets within their views which are decisions can be taken locally \cite{kim2013unmanned}. If rotation or moving large distances is required or if one or more of the covered targets go out of view, the algorithm is triggered. Such scenarios can be  anticipated and drones can cooperate to keep targets in view while a new configuration is being calculated \cite{chen2008adaptive,hausman2015cooperative}.

\subsection{Argus Evaluation}
\label{sec:system_evaluation_by_numbers}

We demonstrate the pitfalls of traditional target coverage algorithms, where target size and pose are not taken into account \cite{ours_mobiwac}, by comparing them to Argus in a realistic setting. Then, we break down the delays in the presented system and compare against the delay introduced by the \textit{\ProblemNameAcro} \textit{Solver}.

\textbf{{\ModelNameAcro} vs. blips on the radar:} To demonstrate the advantages of the proposed model, we take for example the surveillance footage in Figures~\ref{fig:drone1}, \ref{fig:drone2}, and \ref{fig:top_camera}. Recall that these images are captured by the master camera and the front cameras on each drone. We choose this particular target configuration to put the quality of coverage of a typical target coverage algorithm in contrast with \textit{\ModelNameAcro}. Figure~\ref{fig:cs_direction_f} shows two targets covered from the opposite direction of their significant perspective because typical coverage algorithms do not take target pose into account. Moreover, typical target coverage algorithms do not take target size and potential occlusions between targets into account, which is demonstrated in Figure~\ref{fig:cs_occlusion_f} where one target occludes two other targets. When \textit{\ModelNameAcro} is employed, these issues are resolved and cameras are positioned to properly cover the targets as shown in Figures~\ref{fig:cs_direction_t} and \ref{fig:cs_occlusion_t}. Note that the generated configurations are based on target width and camera constraints (e.g., $R_{max}$ of 2~$m$) which represents a constraint on the quality of images used for face detection. %Figures~\ref{fig:fail_top} and \ref{fig:correct_top} show markers representing the location of targets (i.e., green boxes) and their orientation (i.e., blue and red markers).

%Figure~\ref{fig:experiment_overall_figure} shows two camera configurations. The first configuration (i.e., the first row of figures) is the outcome of a coverage algorithm that does not take target orientation into account (Figure~\ref{fig:cs_direction_f}) nor mutual occlusion between targets (Figure~\ref{fig:cs_occlusion_f}).

\begin{figure*}[!t]
\centering
\begin{minipage}[b]{0.31\linewidth}
\centering
\subfigure[Targets may be covered from behind.\label{fig:cs_direction_f}]{\includegraphics[width=1\linewidth ]{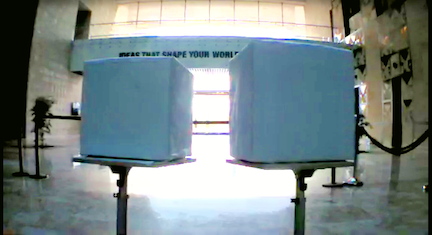}}
\subfigure[Target pose is taken into account.\label{fig:cs_direction_t}]{\includegraphics[width=1\linewidth ]{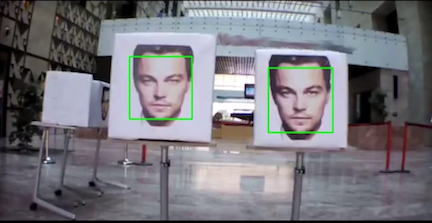}}
%%\vspace{-0.15in}
\caption{Comparing the view from Drone 1 under a typical target coverage algorithm (top) and {\ProblemNameAcro} (bottom).}
%\vspace{-0.25in}
\label{fig:drone1}
\end{minipage}
\quad
\begin{minipage}[b]{0.31\linewidth}
\centering
\subfigure[Targets may occlude one another.\label{fig:cs_occlusion_f}]{\includegraphics[width=1\linewidth ]{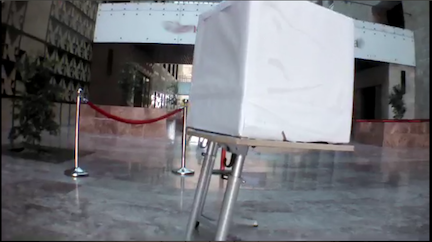}}
\subfigure[Potential occlusions are taken into account.\label{fig:cs_occlusion_t}]{\includegraphics[width=1\linewidth ]{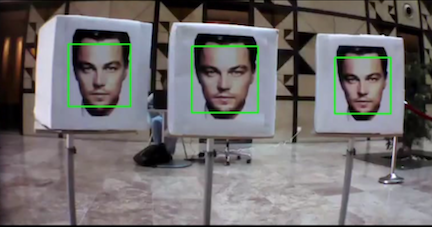}}
%%\vspace{-0.15in}
\caption{Comparing the view from Drone 2 under a typical target coverage algorithm (top) and {\ProblemNameAcro} (bottom).}
%\vspace{-0.25in}
\label{fig:drone2}
\end{minipage}
\quad
\begin{minipage}[b]{0.31\linewidth}
\centering
\subfigure[Drones cover targets from wrong angles. \label{fig:fail_top}]{\includegraphics[width=0.8\linewidth ]{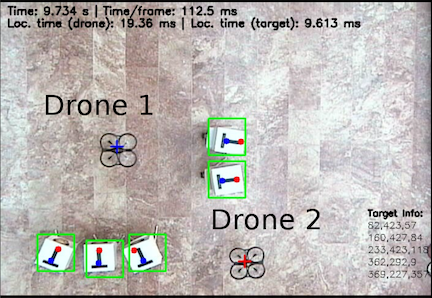}}
\subfigure[Drones properly cover all targets.\label{fig:correct_top}]{\includegraphics[width=0.8\linewidth ]{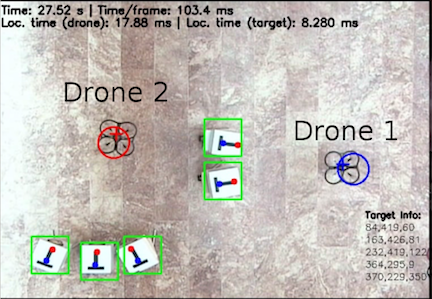}}
%%\vspace{-0.15in}
\caption{Top views from the master camera showing drone configurations corresponding to Figures \ref{fig:drone1} and \ref{fig:drone2}.}
%\vspace{-0.25in}
\label{fig:top_camera}
\end{minipage}
%%\vspace{0.15in}
%\caption{View from two drones and top camera for covering five targets using two placement configurations. First configuration (top row) is generated using a typical coverage algorithm where orientation (a) and occlusion (c) are ignored. Second configuration (bottom row) is generated using {\ProblemNameAcro} where targets are covered and processed correctly.}
%%\vspace{-0.25in}
%\label{fig:experiment_overall_figure}
\end{figure*}

\begin{figure}[!t]
\centering
\begin{minipage}[b]{0.45\linewidth}
\centering
\includegraphics[width=1\linewidth ]{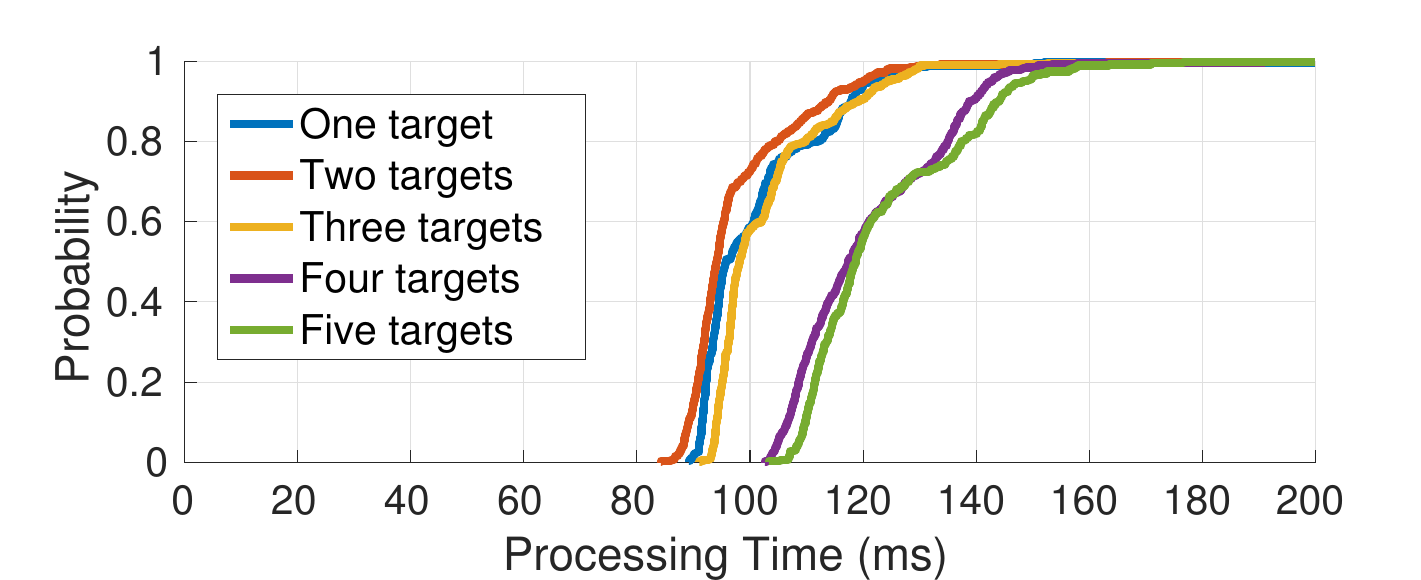}
%\vspace{-0.1in}
\caption{CDF of processing time per frame including image fetching, target and drone localization, and drone communication.}
%\vspace{-0.2in}
\label{fig:cdf_all_proc}
\end{minipage}
\hfill
\begin{minipage}[b]{0.24\linewidth}
\centering
\includegraphics[width=1\linewidth ]{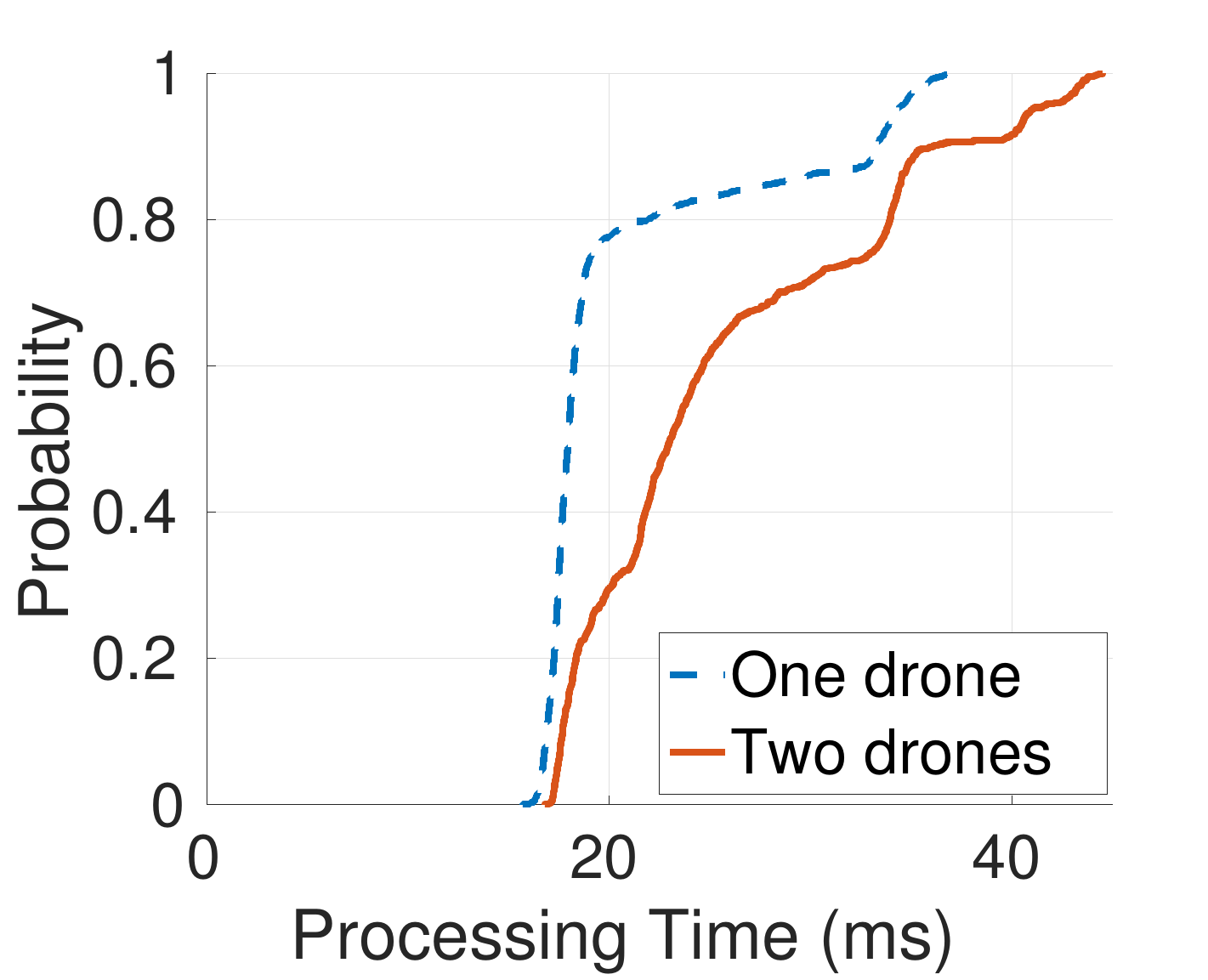}%\vspace{-0.15in}
\caption{CDF of processing time of the \textit{UAV Localizer}.}
%\vspace{-0.25in}
\label{fig:cdf_drone}
\end{minipage}
\hfill
\begin{minipage}[b]{0.24\linewidth}
\centering
\includegraphics[width=1\linewidth ]{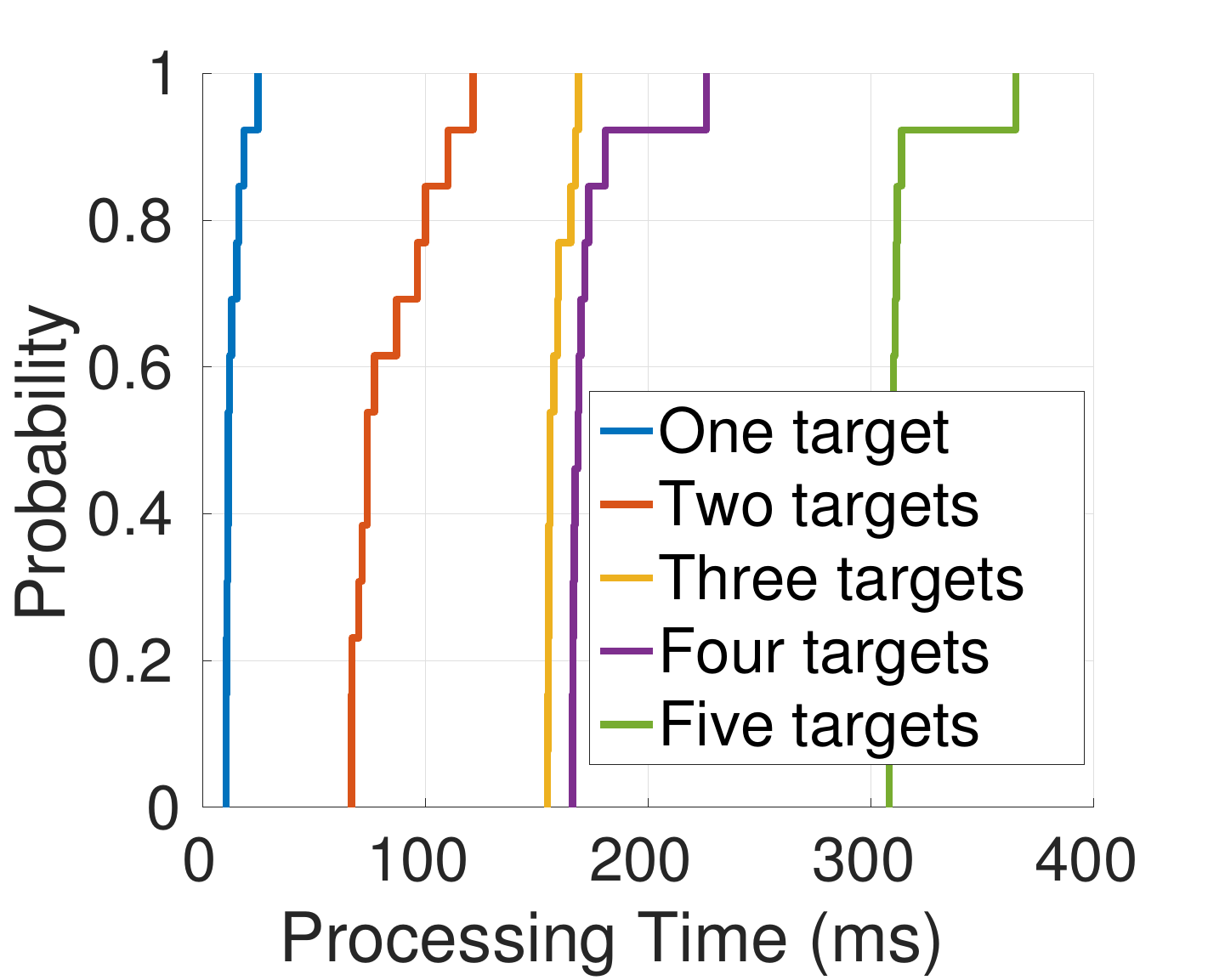}%\vspace{-0.15in}
\caption{CDF of processing time of the \textit{\ProblemNameAcro} \textit{Solver}.}
%\vspace{-0.25in}
\label{fig:cdf_alg}
\end{minipage}
\end{figure}

\textbf{Implementation delay breakdown:} Figure~\ref{fig:cdf_all_proc} shows the CDF of the processing time per frame, which captures the overall processing performed by the \textit{Central Module} apart from the \textit{\ProblemNameAcro} \textit{Solver}. This processing includes image fetching, decoding and preprocessing, target localization, drone localization and communication. Our target apparatus can be detected efficiently within a few milliseconds. This reduces the processing time per frame as complex targets would take longer to detect (e.g., 120~$ms$ per frame for human body pose estimation \cite{flohr2015probabilistic}).

The difference in processing time per frame for one to three targets and four and five targets is dominated by the overhead of handling the extra drone. This added overhead can be seen in the CDF of localization time in Figure~\ref{fig:cdf_drone}. Recall that when the drone makes large displacements, locality over consecutive frames is lost. This occasionally forces the algorithm to search the entire frame, resulting in the skewed shape of the CDF observed in Figure~\ref{fig:cdf_drone}. In our experiments, the \textit{UAV Localizer} has to be invoked at a minimum frequency of 8~$Hz$ for smooth control of the drone.

\textbf{{\ProblemNameAcro} as a component of a surveillance system:} We compare the processing time per frame, which corresponds to the overhead of the \textit{Coarse Grain Context Detector}, to the overhead of the \textit{\ProblemNameAcro} \textit{Solver} (Figure~\ref{fig:arch}). Figure~\ref{fig:cdf_alg} shows the CDF of the processing time of the \textit{\ProblemNameAcro} \textit{Solver} for the number of targets in our tests. The solver is implemented in \verb|MATLAB| and we expect it can be significantly optimized. Still, with five targets, the solver can be invoked once for every three processed frames. As mentioned in the previous section, several techniques can be exploited to maintain target coverage while the solver is running. This task is made easier by the ability to invoke the solver at a relatively high frequency (i.e., one third the frequency of updates in the input parameters).

%\vspace{-0.1in}
%\subsection{\ProblemNameAcro\text{ }Solver Microbenchmark}

\subsection{Argus at Scale}\label{sec:scale}

%%\subsection{Simulation}
We evaluate, through \verb|MATLAB| simulations, the performance of the proposed coverage algorithms under large scale conditions that we cannot test on the prototype. We compare the performance of the approximation algorithm to the BCPF sampling heuristic with two levels of granularity for angular sampling using an $\epsilon_a$ of 0.01 and 0.1 $rad$ and an $\epsilon_r$ of $R_{max}$.

As a baseline for comparison, we present a \textbf{grid sampling heuristic}. We use a simple discretization of the search space: a uniform grid of $\epsilon \times \epsilon$ cells. As $\epsilon \to 0$, grid points would hit all possible intersection areas of target CPFs. If $w \times h$ are the dimensions of the bounding box of $\mathcal{T}$, the number of grid points will be $O(\frac{w \cdot h}{\epsilon^2})$, but is otherwise independent of $|\mathcal{T}|$. Treating these points as candidate locations, we generate representative coverage configurations at each point by an angular sweep before running the greedy selection scheme, which amounts to a runtime of $O(\frac{w \cdot h}{\epsilon^2} \cdot nN)$. We use this naive approach to verify the effectiveness of our proposed method in finding appropriate candidate points to minimize the number of cameras needed. To do so, we use relatively small instances of \emph{\ProblemNameAcro} such that $\epsilon$ need not be too small and the runtime and memory requirements of the grid heuristic are feasible.

\begin{figure}[!t]
\centering
\subfigure{\includegraphics[width=0.35\linewidth ]{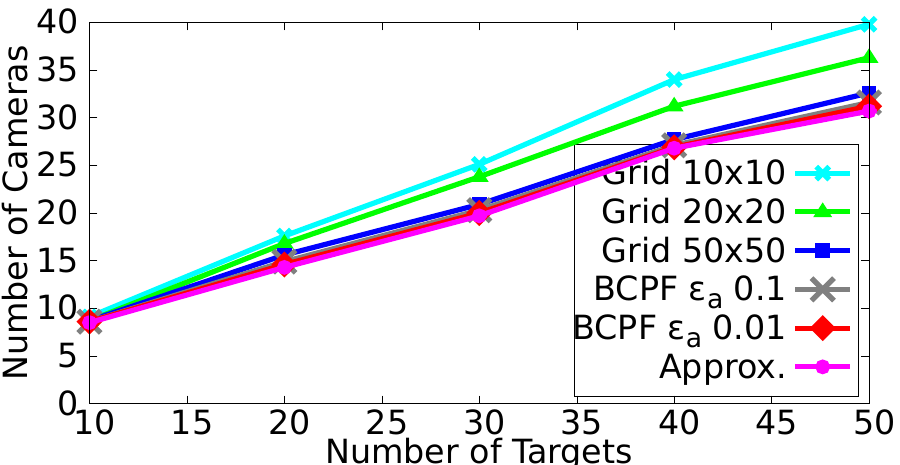}}
\subfigure{\includegraphics[width=0.35\linewidth ]{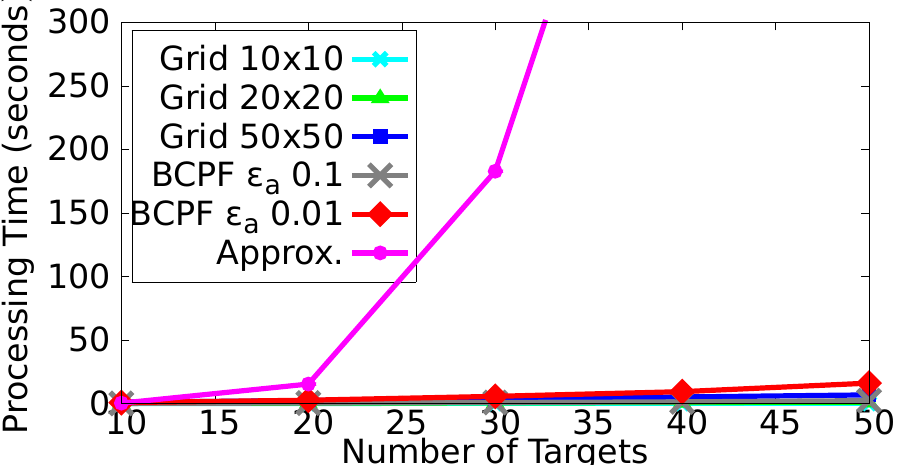}}
%\vspace{-0.15in}
\caption{Comparing the performance of all algorithms for increasing numbers of targets.}
%\vspace{-0.15in}
\label{fig:target_perf}
\end{figure}

We evaluate the algorithms in the extreme case where all present objects are targets (i.e., no obstacles). Since both targets and obstacles act as occluders while only targets need to be covered, this setup requires maximal computations for the chosen number of objects. The goal of this evaluation is to show the effect of changing the number of targets, range, and AOV on the number of drones and processing time required to perform the coverage task under the proposed model. Targets are placed at random locations with random poses over the area of interest such that they do not overlap. The default values of simulation parameters are shown in Table~\ref{table:pars} for both small and large scenarios. We use small scenarios to evaluate the approximation algorithm, which suffices to show the advantages of sampling, and use larger scenarios to compare the different heuristics. We use three resolutions of grid sampling: Grid 10x10 (sparse), Grid 20x20 (medium) and Grid 50x50 (dense) for $\epsilon$ set to 10~$m$, 5~$m$ and 2~$m$, respectively.

\begin{table}[!t]
\renewcommand{\arraystretch}{1.1}
\centering
{
\begin{tabular}{c|c|c}
Parameter & Range & Nominal value\\
\hline
Dimensions & $100\:m\times 100\:m$ & $100\:m\times 100\:m$\\
\hline
Target Width & 1~$m$ & 1~$m$\\
\hline
AOV & $40^{\circ}$ - $140^{\circ}$ & $100^{\circ}$\\
\hline
Target count & 10 - 140 & 30 (small), 80 (large)\\
\hline
$R_{\max}$ & 10~$m$ - 50~$m$ & 20~$m$ (small), 30~$m$ (large)\\
\end{tabular}
}
%%\vspace{-0.15in}
\caption{Simulation parameters.}
\label{table:pars}
\vspace{-0.25in}
\end{table}

%as the number of targets and $R_{max}$ affect the number of candidates generated by the approximation algorithm which increases the overall running time of the algorithm.

\begin{figure*}[!t]
\centering
%\hfill
\begin{minipage}[b]{0.32\linewidth}
\centering
\subfigure{\includegraphics[width=1\linewidth ]{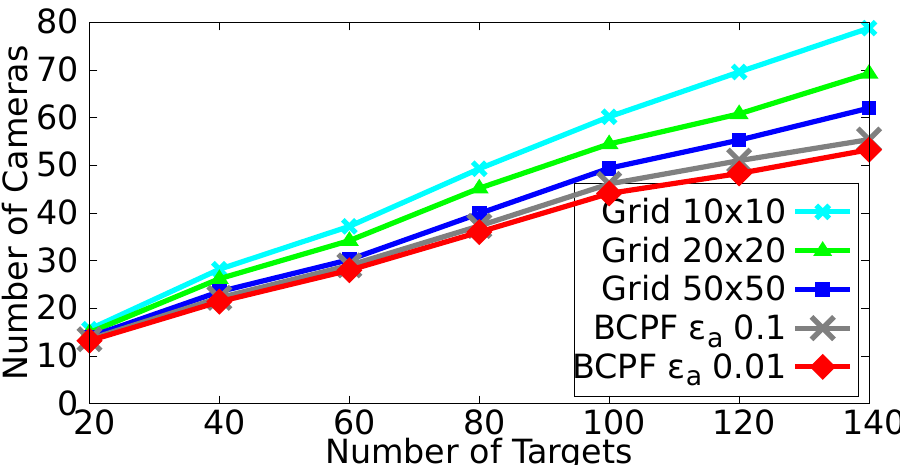}}
\subfigure{\includegraphics[width=1\linewidth ]{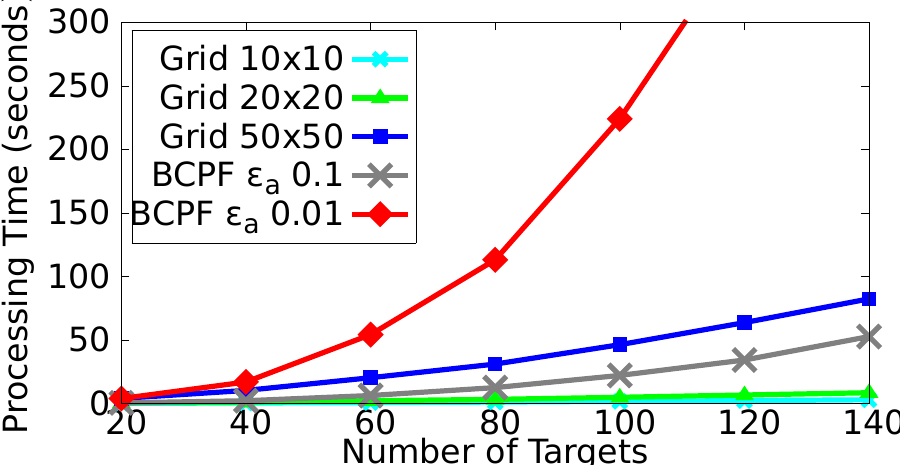}}
%\vspace{-0.15in}
\caption{Comparing all heuristics for increasing numbers of targets.}
%\vspace{-0.15in}
\label{fig:target_perf_more}
\end{minipage}
\hfill
\begin{minipage}[b]{0.32\linewidth}
\centering
\subfigure{\includegraphics[width=1\linewidth ]{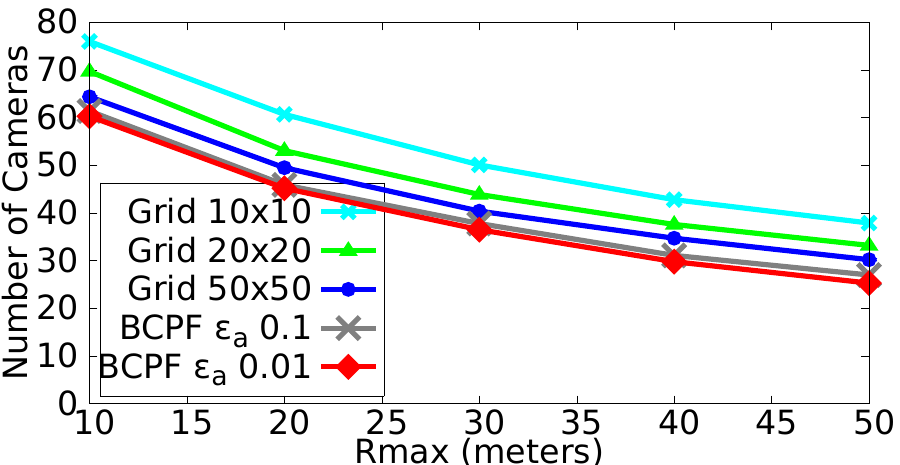}}
\subfigure{\includegraphics[width=1\linewidth ]{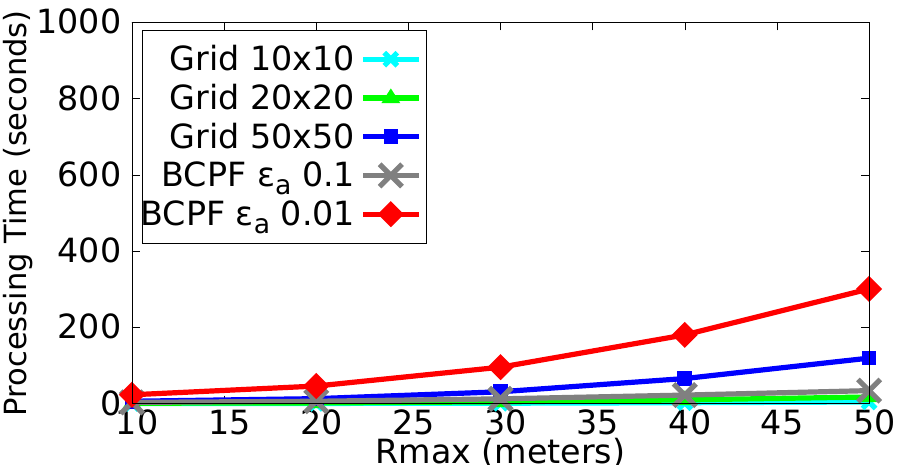}}
%\vspace{-0.15in}
\caption{Effect of varying $R_{max}$ for a fixed number of targets.}
%\vspace{-0.15in}
\label{fig:rmax_perf_more}
\end{minipage}
\hfill
\begin{minipage}[b]{0.32\linewidth}
\centering
\subfigure{\includegraphics[width=1\textwidth ]{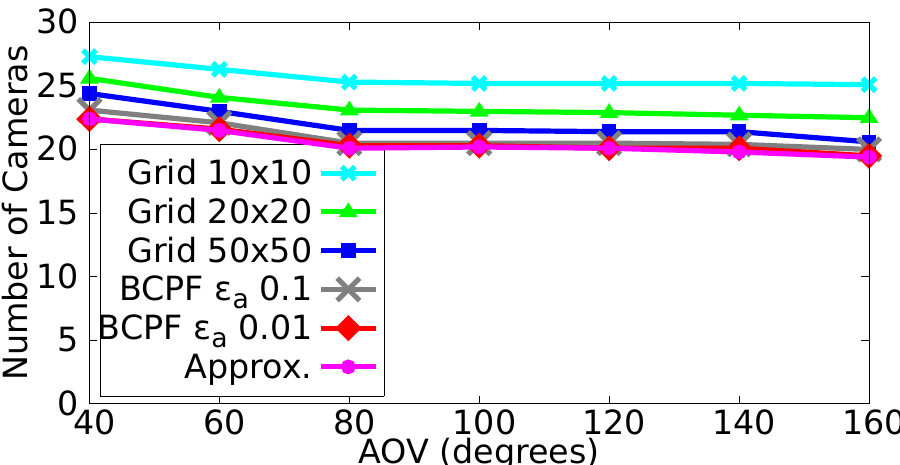}}
\subfigure{\includegraphics[width=1\textwidth ]{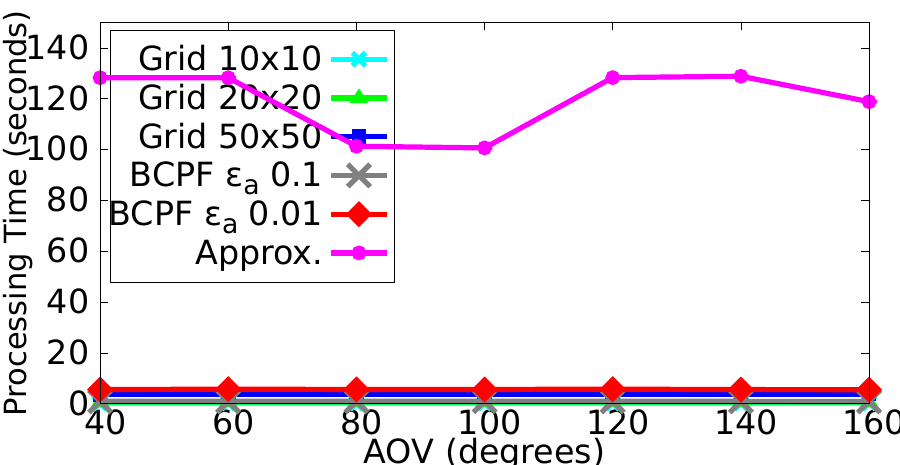}}
%\vspace{-0.15in}
\caption{Effect of varying angle of view (AOV) for a fixed number of targets.}
%\vspace{-0.15in}
\label{fig:fov_perf}
\end{minipage}
\end{figure*}

Figure~\ref{fig:target_perf} shows the effect of increasing the number of targets. The approximation algorithm produces the best performance in terms of the number of cameras required while taking orders of magnitude more time than sampling approaches due to its higher complexity. The approximation algorithm exceeds a minute per calculation for less than 25 targets while BCPF sampling computes a coverage for 140 targets in around a minute with $\epsilon_a=0.1 \: rad$.

Figure~\ref{fig:target_perf_more} contrasts the performance of sampling approaches in large scale scenarios.
%The runtime of BCPF sampling grows quadratically in the number of targets while grid sampling is almost linear.
Grid sampling provides a comparable number of cameras for small numbers of targets where it is unlikely to have compact configurations of CPF intersections that grid sampling might miss. However, it is clear that BCPF sampling is superior in terms of the number of cameras. Moreover, for a moderate $\epsilon_a$ of 0.1~$rad$, BCPF sampling outperforms grid sampling requiring 12\% less cameras and running up to 2x faster on 140 targets.

Figure \ref{fig:rmax_perf_more} shows the effect of increasing $R_{max}$ which increases the area covered by each CPF. Having larger CPFs increases the number of regions to be considered in the approximation algorithm and the number of CPFs that a sample point can belong to in the sampling approaches. On the other hand, changing the value of the AOV, shown in Figure~\ref{fig:fov_perf}, does not impact the processing time by much as it does not increase the area of the CPF considerably. However, it increases the number of targets included at each step of sweeping which reduces the number of cameras needed for coverage.

Based on our simulation results, BCPF sampling is the method of choice for a wide range of scenarios as it combines time and resource efficiency especially for large numbers of targets. %The noticeably high performance of grid sampling can be explained by the relatively large intersection areas due to the parameters used (i.e. large $R_{max}$ and large AOV values). This allows grid samples to easily fall into IAs even with low densities.

\section{Extensions and Open Problems}
\label{sec:extend}

\textit{\ProblemNameAcro} presents a new powerful model that can be harnessed to capture many scenarios with little to no modifications. In this section, we discuss a few of those scenarios with two goals in mind: 1) facilitate the porting of this model to be used in other domains, and 2) illustrate future research directions where this model can be extended or applied to improve target models in smart surveillance and visual sensing systems.

\subsection{Coverage in 3D}

{\color{black}
The coverage model adopted in this paper, per Section~\ref{sec:model}, is aimed to enhance traditional coverage models used primarily in the surveillance literature. We developed the \ModelNameAcro{} model as a convenient alternative to modeling targets by mere points, which can be incorporated with little overhead. On the other hand, the \ModelNameAcro{} model does not fully capture the 3D nature of targets and their coverage constraints. We propose a simple adaptation that accounts for the changes in visibility and allows the placement of drones at different altitudes.
}

In scenarios where cameras are mounted on flying robots, there are many more configurations available for covering any given set of targets. In particular, for ground targets it is possible to mitigate occlusion effects by flying at a higher altitude. To make our {\ModelNameAcro} model even more realistic, information about the 3D shape of the target should be taken into account to calculate the best camera locations. The main parameter we consider here is target height. Although we could extend our coverage constraints to 3D and attempt a similar approach to what we have done in 2D, we propose a sampling strategy that builds on the heuristics we developed and tested in 2D.

We propose a heuristic solution to the problem in this new setting, by defining a new 2D problem at each discretization of flying altitudes $h$. The 2D problems we define are almost identical to the situation we had before, except in the following: 1) We get occlusion constraints by intersecting the 3D shadow prism with the horizontal plane at height $h$ as shown in Figure \ref{fig:3d}; 2) We adjust $R^h_{max}$ and $R^h_{min}$ at each height, the new ranges are calculated as a function of $h$ such that the calculated ranges do not violate the original range restrictions; 3) Angular sweeping will not be performed in 2D as mentioned in Section~\ref{sec:sweeping}, but rather in 3D to cover both possible camera pans as well as camera tilts.

\begin{figure}[!t]
\centering
\subfigure[Occlusion volume for a 3D model (green).\label{fig:3d_3d}]{\includegraphics[width=0.35\linewidth ]{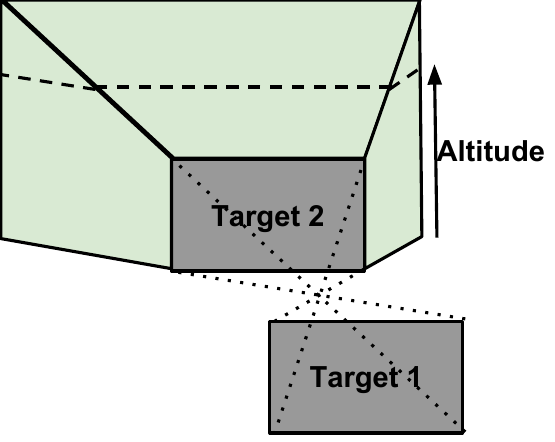}}
\hspace{0.2in}
\subfigure[Occlusion area in 2D at altitude $h$ (green).\label{fig:2d_3d}]{\includegraphics[width=0.35\linewidth ]{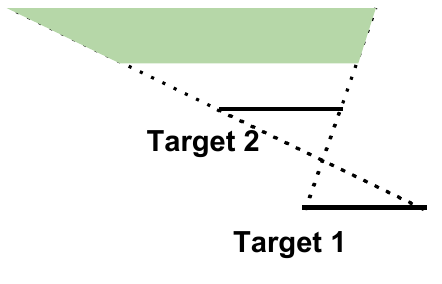}}
  \caption{3D shadow prism and occlusion region at a given height mapped to a 2D setting.}
  \label{fig:3d}
\end{figure}

The candidates sampled at each height $h$ are all added to the set of candidates and we map to \verb|SET-COVER| as before. For every given grouping of targets, we may get redundant candidates at different heights that all cover the same group. Before solving the \verb|SET-COVER| instance, we may filter these redundant candidates by only keeping ones with preferred altitudes or any other criteria. After we select the set of candidates to use for coverage, we may follow with a pan-tilt optimization to get the best quality of coverage. A 3D version of this algorithm will require target heights, in addition to target locations, poses, and widths.

For applications dealing with relatively large objects (e.g. buildings and statues) this may require breaking big targets into multiple smaller targets. Several endeavors in surveillance and computer visions are directed towards extracting 3D information from 2D pictures or videos \cite{ramakrishna2012reconstructing,aubry2014seeing} or depth sensors \cite{henry2012rgb} to capture the relevant features of objects. Such algorithms and sensors can provide a 3D version of \textit{\ProblemNameAcro} with the necessary input for its operation.

\subsection{\ProblemNameAcro{} using a fixed number of cameras} Availability of a limited number of mobile cameras is to be expected in most scenarios. We propose a 2-phase algorithm that first solves {\ProblemNameAcro} and suggests a minimal number of camera configurations. The second phase is to solve an instance of the Multiple Traveling Salesmen Problem (\verb|mTSP|) where mobile cameras try to visit all the identified locations and take the required shots through the most efficient (e.g., shortest) trajectories \cite{bektas2006multiple}. While this does not guarantee optimal coverage, it provides a nice extension for the proposed model to handle a common scenario. Solving the actual problem requires an extended formulation that captures both \verb|SET-COVER| and \verb|TSP|, which we consider as a future research direction in improving the current model. A closely related problem was studied in \cite{wang2007metric}.

\subsection{Target Quality of View Requirements}

Several metrics of quality of coverage can be considered when evaluating a coverage algorithm. On top of that, different quality constraints might be required for each type of targets. Quality requirements include: angle of view, target-camera distance, tracking performance, and accounting for device constraints (i.e., pan, tilt, and zoom limitations) \cite{krahnstoever2008collaborative}. Those requirements vary from one application to the other, and even from one target to the other. 

\textit{\ProblemNameAcro} captures most of those quality requirements by having a flexible representation of the camera placement field, which encodes range, viewing-angle and occlusion constraints. In addition, the target can basically have two effective widths: a total width used to evaluate occlusion constraints for other targets and a \textit{feature width} used to ensure the target itself is adequately covered. Tracking performance, however, remains a challenge which we further discuss in the next section.

\subsection{Open Problems}
\label{sec:open}

In this section, we discuss some open problems that are not treated in our study of Argus.

%\textbf{Obstacles and Non-Mutually-Occluding Targets Representation}
{\color{black}\textbf{Continuously updating camera configurations:} Recall that Argus relies on the top tier to perform target tracking as part of its persistent coverage functionality; this is the responsibility of the \emph{Target Localizer} module. For drone navigation, we also assume that the top tier provides suitable positioning information to help plan efficient and safe paths to position each drone at its assigned location; which is implemented in the \emph{UAV Localizer} module. In addition, to cope with positioning errors, each drone can factor in the readings from its on-board sensors to better position itself with respect to the targets it is assigned to cover; this is handled by the \emph{Adaptive Tracker} module.}

{\color{black} When it comes to dynamically updating camera configurations, target handover and mobility management are essential issues \cite{foresti2005active}}. Argus relies on mobile cameras which means that as targets move, cameras need to handover targets among themselves and possibly move to maintain all targets in view, or the target configuration will necessitate adding new cameras due to new occlusion or pose conditions. Beyond updating camera locations and assignments, the system is also required to optimize the energy used to move the cameras and the quality of coverage of targets during handover.

While handling such scenarios can simply be achieved by running the algorithm every $\delta t$ seconds, and updating the locations of cameras to adopt the new solution, this would result in considerable overhead and delays as no relationship is assumed between the camera configuration $U_t$ and the configuration $U_{t+1}$, for a time instant $t$. Several algorithms were studied to address such scenarios. {\color{black} We highlight some of those approaches} and leave it to future work to select the approach most suitable for Argus and the specific application at hand. 

One approach relies on computer vision algorithms to detect targets as they move between camera views without requiring cameras to move \cite{chen2008adaptive}. Cameras can then move slightly in order to better cover the new targets that entered its field of view to satisfy {\ProblemNameAcro} constraints. Another approach relies on cameras bidding on which of them will be required to cover a target that just moved, based on a coverage utility function \cite{esterle2014socio}. An {\ProblemNameAcro} implementation of this approach can introduce as a utility the amount of energy consumed by the camera-mounted robot to move in order to better cover the target.

%One of the main challenges of coverage is maintaining all targets in view of cameras as they move.  For a system using \emph{\ProblemNameAcro} to maintain its efficiency, targets must remain in view at all time. 

\textbf{Positioning tolerance and camera calibration:}
As the provable approximation algorithm used candidate locations defined by the intersection points of coverage constraints, such locations may not be the best to use in practice. In particular, we have shown that such locations will lie on the boundary of a region where a single camera may cover the same set of targets. Attempting to place cameras at the boundary means that any inaccuracy in target localization and camera control or the expected movement of targets can all lead to missing some of the targets. Hence, the solution returned by {\ProblemNameAcro} only suggests a feasible partitioning of targets, where each partition can be covered by a single camera. This suggests that once {\ProblemNameAcro} optimization identifies a set of locations for camera placement, each camera should follow by optimizing its own configuration for the purpose of covering the set of targets it has been assigned. This becomes more relevant when planning the trajectory each camera should execute as it transitions from one configuration to another, possibly in response to target mobility.

\section{Related Work}
\label{sec:related_work}
%\vspace{-0.05in}

%In this section, we discuss research on area and target coverage in wireless communication and sensor networks surveyed in \cite{Ghosh2008303}, followed by work on the art gallery and illumination problems surveyed in \cite{O'Rourke:1987:AGT:40599, urrutia2000art}.

\textbf{Area and target coverage:} The goal of area coverage algorithms is to detect any activity of interest within a certain area in a sensor network deployment, or to guarantee quality communication over a wireless network among clients in that area. Several approaches to area coverage have been studied including static randomly deployed sensors \cite{carmi2007covering} and strategically placed mobile sensors \cite{1200627} using either isotropic \cite{hexsel2011coverage} or anisotropic sensors \cite{6471967}. The related problem of barrier coverage was studied in \cite{kumar2005barrier}, where the objective is to detect any targets crossing the barrier into an area of interest. To cover a set of targets within an area, target coverage algorithms were studied in randomly deployed sensors \cite{ai2006coverage,johnson2011pan, AVSS2012}, or strategically placed directional sensors or antennas \cite{berman2007packing,ours_mass,ours_mobiwac}. Target coverage using static randomly deployed Pan-Tilt-Zoom (PTZ) cameras, that possibly zoom in to obtain better views, was shown to be NP-hard and a 2-approximation algorithm was presented \cite{johnson2011pan}. For antenna placement to serve a set of static targets with bounds on the bandwidth demand per antenna, a 3-approximation algorithm was presented in \cite{berman2007packing}. In order to satisfy connectivity requirements between antennas, \cite{han2008deploying} gave a 9-approximation algorithm.
\textit{We propose a more realistic model for target coverage by visual sensors that greatly enhances the point model typically used by earlier algorithms \cite{amac2011coverage}. Our approach requires fewer sensors compared to area coverage techniques as it only attempts to cover the present targets rather than the whole area of interest. We establish lower bounds on minimizing the number of sensors required by the new model and develop a matching approximation algorithm in Section~3.}

\textbf{Full-view coverage:} Full-view coverage is a variant of area coverage with the extra objective of ensuring that any target is covered from all angles \cite{wang2011full}. \cite{wu2012achieving} studies the necessary conditions for full-view coverage in static camera deployments and \cite{hu2014critical} studies full-view coverage using heterogeneous mobile cameras. Full view barrier coverage was then introduced \cite{wang2011barrier} and further extended to accommodate stochastic deployments in \cite{7218437}. Taking self-occlusions into account, ensuring all sides of a convex target are always visible was studied in \cite{Tokekar2014}. \textit{Our proposed approach is different in two aspects: 1) It overcomes occlusion scenarios and takes target size into account in addition to target pose. 2) It is concerned with target coverage rather than area coverage which is the main concern of full-view coverage.}

{\color{black}
\textbf{Persistent Coverage:} In a seminal paper \cite{Eyes_in_the_sky} a decentralized control strategy was introduced for the deployment of heterogeneous cameras for coverage tasks, which significantly improved upon earlier methods~\cite{1284411}. In follow-up works, the control strategy was further enhanced to robustly learn and adapt to changes in the environment \cite{robust_cov,pers_dist_TR16}, while emphasizing decentralized control over a communication networks. More recently, the control strategy was endowed with optimal path planning while avoiding obstacles in the environment~\cite{multi_tracking_IROS14,pers_ICRA17,Best2017} and target unpredictability~\cite{dynamic_cov_IROS16}. Several challenging aspects of persistent coverage have also been studied: energy-awareness~\cite{docking}, connectivity~\cite{relay_networks}, adaptive streaming \cite{video_streaming} and dynamic priorities~\cite{8001408}. For a more thorough survey, we refer the reader to~\cite{machines2010013, coop_cov_review}. \textit{The proposed system leverages established results in persistent coverage through a two tier architecture. We make critical use of the information provided by the top tier providing persistent coverage over the environment to plan the placement of mobile cameras in the lower tier to obtain high quality views of the targets of interest.}}

\textbf{Video capture using drones:} There has been a growing interest in using drones and drone swarms for surveillance and video capture \cite{burkle2009collaborating} {\color{black}, e.g., for sport streaming~\cite{sports}}. In such applications, several challenges including target mobility and low quality footage (e.g., due to distance) were studied in \cite{dornetchallenges}. For mobile target tracking, using either a single drone \cite{naseer2013followme} or multiple drones \cite{mueller2016persistent} can be used for persistent tracking. Such applications focus on target coverage without restricting the angles from which targets are viewed. Autonomous cinematography is another application for drones, beyond coverage and tracking, the aesthetic quality plays a key role in viewpoint planning \cite{joubert2016towards}.  {\color{black} Building upon earlier work in virtual cinematography~\cite{director_ICMR11}, this exciting line of work has recently been witnessing very interesting developments~\cite{cin_TOG15,ETH,cin_director}.} In earlier work, we developed several target coverage algorithms for targets represented as points and deployed them on our testbed \cite{ours_mass,ours_mobiwac,una,khan2016simulating}. \textit{In this paper, our work leverages recent advances in drone technologies to develop an autonomous system that utilizes our enhanced target model and demonstrate the feasibility of running the proposed coverage algorithms on a real system. We envision extensions of the proposed model to accommodate specific aesthetic or gesture capture requirements to allow more control over the quality of coverage as required for persistent tracking and cinematography.}

\textbf{Art Gallery Problem:} A classical problem in discrete and computational geometry asks for the minimum number of guards required to see every point in an art gallery represented by a polygon with or without holes \cite{urrutia2000art}. Several variants were introduced constraining guard placement (e.g. point, vertex, or edge) and coverage (i.e convex, star shaped, or spiral shaped) \cite{culberson1988covering,lee1986computational}. In particular, the art gallery illumination problem considered guards having a limited angle of view \cite{bagga1996complexity,bose1997floodlight}. Visibility algorithms have found many applications in wireless communications, sensor networks, surveillance, and robotics. However, several variants were shown to be NP-hard \cite{o1983some} {\color{black} , and more recently even $\exists \mathbb{R}$-complete~\cite{AGP_ETR}}. In addition, inapproximability results for art gallery coverage with and without holes were shown in \cite{eidenbenz2001inapproximability} and also for the illumination of art galleries without holes \cite{abdelkaderinapproximability}. On the approximation side, the works in \cite{deshpande2007pseudopolynomial,gonzalez2001randomized} presented algorithms for the coverage of art galleries with and without holes, respectively. \textit{We settle the hardness and approximability of art gallery illumination for polygons with holes and use this to prove the hardness of {\ProblemNameAcro}. We also present a best-possible approximation algorithm for {\ProblemNameAcro} based on a spatial subdivision derived from the coverage constraints. The novelty of our algorithm lies in the incorporation of a limited angle of view camera model with our newly proposed target model.} Earlier approximation algorithms relied on triangulations \cite{deshpande2007pseudopolynomial} or sampling \cite{gonzalez2001randomized} while assuming omnidirectional cameras.

%However, most approaches to area coverage have been proven to be NP-hard (e.g. art gallery guarding \cite{lee1986computational}).
%However, all the aforementioned approaches focus on point targets with no consideration of obstacles or occlusion.

%\vspace{-0.12in}

\section{Conclusion}
\label{sec:conclusion}
%%\vspace{-0.05in}

We presented Argus, an autonomous system that utilizes drones to provide better coverage of targets taking into account their size, pose, and potential occlusions. We started by introducing \emph{\ModelNameAcro}, a novel geometric model that captures wide oriented targets and the conditions necessary for their coverage. Then, we formulated the \emph{\ProblemName} \emph{(\ProblemNameAcro)} that aims at minimizing the number of cameras required to cover a set of targets represented by this new model.
%, and showed that {\ProblemNameAcro} is NP-hard and hard to approximate.
%there exists a constant $\epsilon > 0$ such that no polynomial time algorithm can achieve an approximation ratio of $\frac{1-\epsilon}{12} \ln n$, unless $NP \subseteq TIME(n^{O(\log{\log{n}})})$. 
We devised a best-possible $O(\log{n})$-approximation algorithm and a sampling heuristic that runs up to 100x faster while performing favorably compared to the provably-bounded approximation algorithm. Finally, we developed a fully autonomous prototype that uses quadcopters to monitor synthetic targets in order to measure the overhead of the proposed algorithms in realistic scenarios and show the improved quality of coverage provided by the new model.

\appendix
\section{A Technical Lemma for Handling AOV Constraints}
\label{sec:appendix}

Recall that for any pair of points $(q_a, q_b)$, an \textit{AOV circle pair} was defined in \ref{sec:exact} as the two congruent circles sharing $\overline{q_a q_b}$ as a chord at an inscribed angle equal to the AOV $\theta$. Now, for two targets $T_a$ and $T_b$ represented by the line segments $(P^s_a, P^e_a)$ and $(P^s_b, P^e_b)$, respectively, we define the set of diagonals as $$T_a \otimes T_b = \{ (P_a^s, P_b^s), (P_a^s, P_b^e), (P_a^e, P_b^s), (P_a^e, P_b^e) \}.$$ The next lemma shows that the four AOV circle pairs corresponding to the diagonals $T_a \otimes T_b$ suffice to capture the AOV constraints for a pair of targets $T_a$ and $T_b$, as needed for the comprehensive discretization of the search space for camera placement per \ref{sec:exact}.

\begin{lemma}\label{aov_lemma}
For a fixed AOV angle $\theta$ and any pair of points $(q_a, q_b) \in T_a \times T_b$, the corresponding pair of AOV circles is completely contained in the union of AOV circles corresponding to the diagonals $T_a \otimes T_b$.
\end{lemma}

\begin{proof}
For any pair of points $(q_1, q_2)$ let $\{ \mathcal{C}^r_{q_1, q_2}, \mathcal{C}^l_{q_1, q_2} \}$ be the corresponding pair of AOV circles for an AOV angle $\theta$ and let $\{ c^r_{q_1, q_2}, c^l_{q_1, q_2} \}$ be their centers with $c^r_{q_1, q_2}$ to the right of $\overrightarrow{q_1 q_2}$ and $c^l_{q_1, q_2}$ to its left. Without loss of generality, we will show that $$\mathcal{C}^r_{q_a, q_b}
\subseteq
\mathcal{C}^r_{q_a, P_b^s} \cup \mathcal{C}^r_{q_a, P_b^e} \subseteq (\mathcal{C}^r_{P_a^s, P_b^s} \cup \mathcal{C}^r_{P_a^e, P_b^s}) \cup (\mathcal{C}^r_{P_a^s, P_b^e} \cup \mathcal{C}^r_{P_a^e, P_b^e}).$$

The key claim is that for any pair of points $(q_a, q_b)$, the circle $\mathcal{C}^r_{q_a, q_b}$ is contained in the union of the two circles obtained by replacing one of the points with the two end points on its segment, i.e., $\mathcal{C}^r_{q_a, P_b^s} \cup \mathcal{C}^r_{q_a, P_b^e}$. Observe that the point we do not replace, i.e., $q_a$, will be shared by all three circles.

Recall that the circle $\mathcal{C}^r_{q_a, q_b}$ has radius $\frac{|q_a q_b|}{2 \sin{\theta}}$, which is linear in $|q_a q_b|$. It follows that fixing $q_a$ and moving $q_b$ to any point $q_{b'}$ on the closed line segment $\overline{P_b^s P_b^e}$, the radius of the intermediate circle $C^r_{q_a, q_{b'}}$ varies linearly as $\frac{|q_a q_{b'}|}{2 \sin{\theta}}$. Consequently, the centers of all intermediate circles $c^r_{q_a, q_{b'}}$ lie on the line segment between $c^r_{q_a, P_b^s}$ and $c^r_{q_a, P_b^e}$; denote this line segment by $l_{s,e}$. It follows that all intermediate circles pass through not only $q_a$, but both points of intersection between $\mathcal{C}^r_{q_a, P_b^s}$ and $\mathcal{C}^r_{q_a, P_b^e}$; we denote the other point by $q_{\hat{a}}$ as it is the mirror image of $q_a$ about $l_{s,e}$. See Figure~\ref{fig:aov_proof} for an example.

We argue that $\mathcal{C}^r_{q_a, q_{b'}} \subseteq \mathcal{C}^r_{q_a, P_b^s} \cup \mathcal{C}^r_{q_a, P_b^e}$ for all points $q_{b'} \in \overline{P_b^s P_b^e}$, such as $q_b$. Using the common chord $\overline{q_a q_{\hat{a}}}$, we cut the circle $\mathcal{C}^r_{q_a, q_{b'}}$ into two arcs $\arc{q_a q_{\hat{a}}}^s$ and $\arc{q_a q_{\hat{a}}}^e$ where the first intersects $l_{s,e}$ closer to $c^r_{q_a, P_b^s}$ and the latter intersects $l_{s,e}$ closer to $c^r_{q_a, P_b^e}$. Since any two circles intersect in at most two points and $\{q_a, q_{\hat{a}}\}$ are the two points of intersection between $\mathcal{C}^r_{q_a, q_{b'}}$ and $\mathcal{C}^r_{q_a, P_b^s}$, which may also coincide if $q_{b'} = P^s_b$, it follows that $\arc{q_a q_{\hat{a}}}^s$ cannot exit $\mathcal{C}^r_{q_a, P^s_b}$. Hence, $\arc{q_a q_{\hat{a}}}^s \subseteq \mathcal{C}^r_{q_a, P_b^s}$ and similarly $\arc{q_a q_{\hat{a}}}^e \subseteq \mathcal{C}^r_{q_a, P_b^e}$, as shown in Figure~\ref{fig:aov_proof_arc}.

This shows that indeed $\mathcal{C}^r_{q_a, q_{b'}} \subseteq \mathcal{C}^r_{q_a, P_b^s} \cup \mathcal{C}^r_{q_a, P_b^e}$, as required. By symmetry, we also obtain $\mathcal{C}^r_{q_a, q_{b'}} \subseteq \mathcal{C}^r_{P_a^s, q_{b'}} \cup \mathcal{C}^r_{P_a^e, q_{b'}}$. Now, setting $q_{b'} = P^s_b$ yields
$\mathcal{C}^r_{q_a, P_b^s} \subseteq
\mathcal{C}^r_{P_a^s, P_b^s} \cup \mathcal{C}^r_{P_a^e, P_b^s}$ and $q_{b'} = P^e_b$ yields
$\mathcal{C}^r_{q_a, P_b^e} \subseteq
\mathcal{C}^r_{P_a^s, P_b^e} \cup \mathcal{C}^r_{P_a^e, P_b^e}$, which completes the proof.
\end{proof}

\begin{figure}[t]
\centering
%\subfigure[IA for two targets. \label{fig:sample_IA}]{\includegraphics[width=.4\linewidth ]{figures/approximation/IA_3_new.pdf}}
\subfigure[Illustrating the two properties of AOV circles that we use in the proof. \label{fig:aov_proof}]{\includegraphics[width=.25\linewidth ]{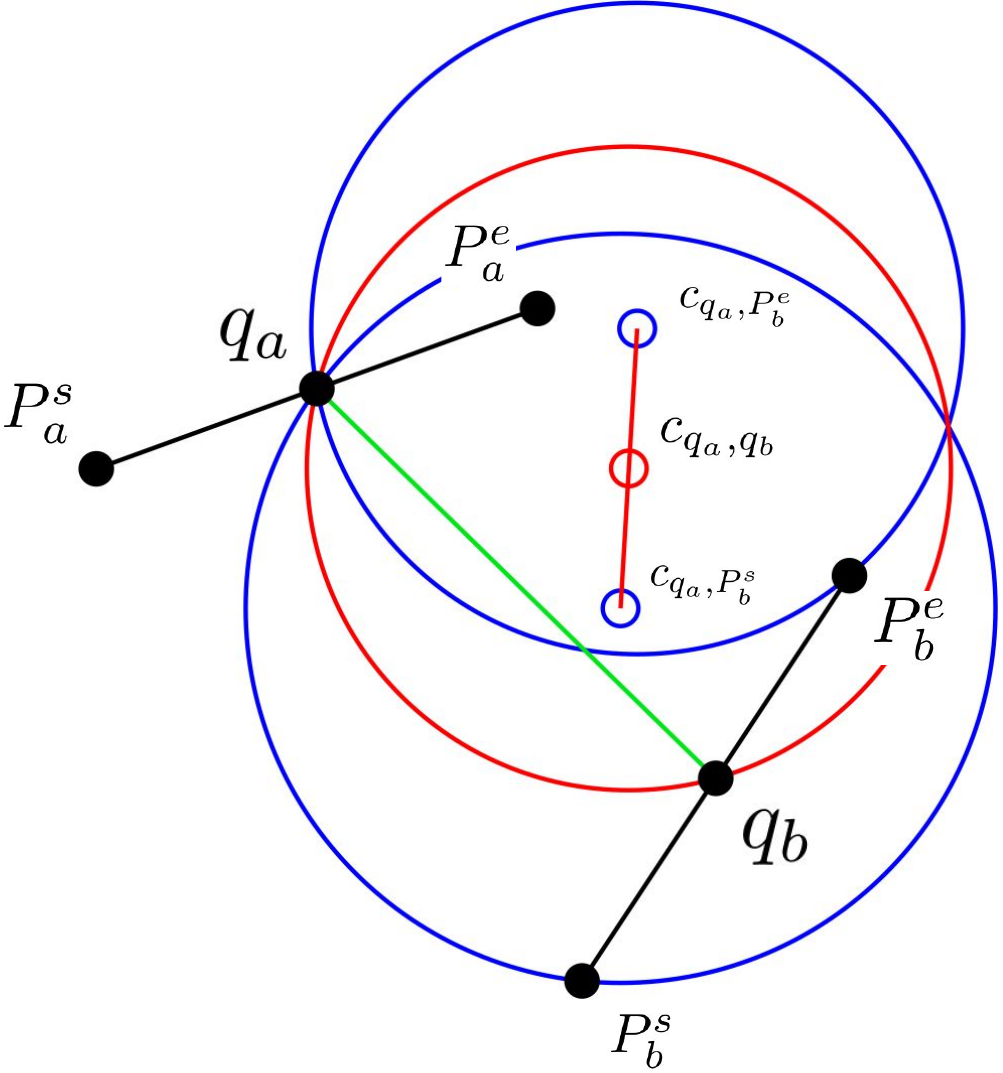}}
\hspace{0.2in}
\subfigure[Arcs of a common chord are fully contained in the other circle.\label{fig:aov_proof_arc}]{\includegraphics[width=.35\linewidth ]{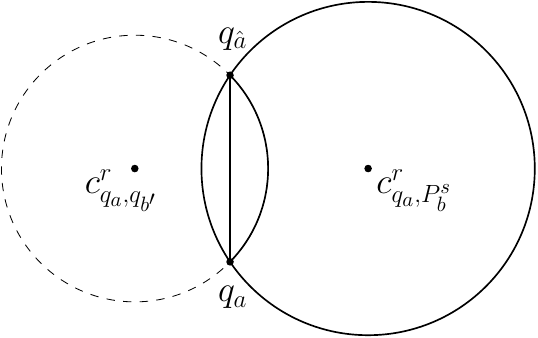}}
%\vspace{-0.15in}
\caption{The key elements in the proof of Lemma~\ref{aov_lemma}.}
%\vspace{-0.25in}
\label{fig:AOV_proof}
\end{figure}

\balance

%{
%\scriptsize
\bibliographystyle{ACM-Reference-Format}
\bibliography{main}

%%% -*-BibTeX-*-
%%% Do NOT edit. File created by BibTeX with style
%%% ACM-Reference-Format-Journals [18-Jan-2012].

\begin{thebibliography}{00}

%%% ====================================================================
%%% NOTE TO THE USER: you can override these defaults by providing
%%% customized versions of any of these macros before the \bibliography
%%% command.  Each of them MUST provide its own final punctuation,
%%% except for \shownote{}, \showDOI{}, and \showURL{}.  The latter two
%%% do not use final punctuation, in order to avoid confusing it with
%%% the Web address.
%%%
%%% To suppress output of a particular field, define its macro to expand
%%% to an empty string, or better, \unskip, like this:
%%%
%%% \newcommand{\showDOI}[1]{\unskip}   % LaTeX syntax
%%%
%%% \def \showDOI #1{\unskip}           % plain TeX syntax
%%%
%%% ====================================================================

\ifx \showCODEN    \undefined \def \showCODEN     #1{\unskip}     \fi
\ifx \showDOI      \undefined \def \showDOI       #1{{\tt DOI:}\penalty0{#1}\ }
  \fi
\ifx \showISBNx    \undefined \def \showISBNx     #1{\unskip}     \fi
\ifx \showISBNxiii \undefined \def \showISBNxiii  #1{\unskip}     \fi
\ifx \showISSN     \undefined \def \showISSN      #1{\unskip}     \fi
\ifx \showLCCN     \undefined \def \showLCCN      #1{\unskip}     \fi
\ifx \shownote     \undefined \def \shownote      #1{#1}          \fi
\ifx \showarticletitle \undefined \def \showarticletitle #1{#1}   \fi
\ifx \showURL      \undefined \def \showURL       #1{#1}          \fi
% The following commands are used for tagged output and should be
% invisible to TeX
\providecommand\bibfield[2]{#2}
\providecommand\bibinfo[2]{#2}
\providecommand\natexlab[1]{#1}
\providecommand\showeprint[2][]{arXiv:#2}

\bibitem[\protect\citeauthoryear{Abdelkader, Mokhtar, and El-Alfy}{Abdelkader
  et~al\mbox{.}}{2012}]%
        {AVSS2012}
\bibfield{author}{\bibinfo{person}{Ahmed Abdelkader}, \bibinfo{person}{Moamen
  Mokhtar}, {and} \bibinfo{person}{Hazem El-Alfy}.}
  \bibinfo{year}{2012}\natexlab{}.
\newblock \showarticletitle{Angular Heuristics for Coverage Maximization in
  Multi-camera Surveillance}. In \bibinfo{booktitle}{{\em AVSS'12}}.
\newblock


\bibitem[\protect\citeauthoryear{Abdelkader, Saeed, Harras, and
  Mohamed}{Abdelkader et~al\mbox{.}}{2015}]%
        {abdelkaderinapproximability}
\bibfield{author}{\bibinfo{person}{Ahmed Abdelkader}, \bibinfo{person}{Ahmed
  Saeed}, \bibinfo{person}{Khaled Harras}, {and} \bibinfo{person}{Amr
  Mohamed}.} \bibinfo{year}{2015}\natexlab{}.
\newblock \showarticletitle{The Inapproximability of Illuminating Polygons by
  $\alpha$-Floodlights}. In \bibinfo{booktitle}{{\em CCCG'15}}.
\newblock


\bibitem[\protect\citeauthoryear{Abrahamsen, Adamaszek, and Miltzow}{Abrahamsen
  et~al\mbox{.}}{2017}]%
        {AGP_ETR}
\bibfield{author}{\bibinfo{person}{Mikkel Abrahamsen}, \bibinfo{person}{Anna
  Adamaszek}, {and} \bibinfo{person}{Tillmann Miltzow}.}
  \bibinfo{year}{2017}\natexlab{}.
\newblock \showarticletitle{The Art Gallery Problem is
  {\(\exists\)}{\(\mathbb{R}\)}-complete}.
\newblock \bibinfo{journal}{{\em CoRR\/}}  \bibinfo{volume}{abs/1704.06969}
  (\bibinfo{year}{2017}).
\newblock
\showeprint[arxiv]{1704.06969}
\showURL{%
\url{http://arxiv.org/abs/1704.06969}}


\bibitem[\protect\citeauthoryear{Adib, Kabelac, Katabi, and Miller}{Adib
  et~al\mbox{.}}{2014}]%
        {adib20143d}
\bibfield{author}{\bibinfo{person}{Fadel Adib}, \bibinfo{person}{Zach Kabelac},
  \bibinfo{person}{Dina Katabi}, {and} \bibinfo{person}{Robert~C. Miller}.}
  \bibinfo{year}{2014}\natexlab{}.
\newblock \showarticletitle{3D Tracking via Body Radio Reflections}. In
  \bibinfo{booktitle}{{\em NSDI'14}}.
\newblock


\bibitem[\protect\citeauthoryear{Ai and Abouzeid}{Ai and Abouzeid}{2006}]%
        {ai2006coverage}
\bibfield{author}{\bibinfo{person}{Jing Ai} {and} \bibinfo{person}{Alhussein~A
  Abouzeid}.} \bibinfo{year}{2006}\natexlab{}.
\newblock \showarticletitle{Coverage by directional sensors in randomly
  deployed wireless sensor networks}.
\newblock \bibinfo{journal}{{\em Journal of Combinatorial Optimization\/}}
  \bibinfo{volume}{11}, \bibinfo{number}{1} (\bibinfo{year}{2006}),
  \bibinfo{pages}{21--41}.
\newblock


\bibitem[\protect\citeauthoryear{Amac~Guvensan and Gokhan~Yavuz}{Amac~Guvensan
  and Gokhan~Yavuz}{2011}]%
        {amac2011coverage}
\bibfield{author}{\bibinfo{person}{M Amac~Guvensan} {and} \bibinfo{person}{A
  Gokhan~Yavuz}.} \bibinfo{year}{2011}\natexlab{}.
\newblock \showarticletitle{On coverage issues in directional sensor networks:
  A survey}.
\newblock \bibinfo{journal}{{\em Ad Hoc Networks\/}} \bibinfo{volume}{9},
  \bibinfo{number}{7} (\bibinfo{year}{2011}), \bibinfo{pages}{1238--1255}.
\newblock


\bibitem[\protect\citeauthoryear{Amazon.com}{Amazon.com}{2014}]%
        {Alexa}
\bibfield{author}{\bibinfo{person}{Inc. Amazon.com}.}
  \bibinfo{year}{2014}\natexlab{}.
\newblock \bibinfo{title}{Amazon Alexa}.
\newblock   (\bibinfo{year}{2014}).
\newblock
\showURL{%
\url{http://alexa.amazon.com/}}


\bibitem[\protect\citeauthoryear{Aubry, Maturana, Efros, Russell, and
  Sivic}{Aubry et~al\mbox{.}}{2014}]%
        {aubry2014seeing}
\bibfield{author}{\bibinfo{person}{Mathieu Aubry}, \bibinfo{person}{Daniel
  Maturana}, \bibinfo{person}{Alexei~A Efros}, \bibinfo{person}{Bryan~C
  Russell}, {and} \bibinfo{person}{Josef Sivic}.}
  \bibinfo{year}{2014}\natexlab{}.
\newblock \showarticletitle{Seeing 3D chairs: exemplar part-based 2D-3D
  alignment using a large dataset of CAD models}. In \bibinfo{booktitle}{{\em
  CVPR'14}}.
\newblock


\bibitem[\protect\citeauthoryear{Bagga, Gewali, and Glasser}{Bagga
  et~al\mbox{.}}{1996}]%
        {bagga1996complexity}
\bibfield{author}{\bibinfo{person}{Jay Bagga}, \bibinfo{person}{Laxmi Gewali},
  {and} \bibinfo{person}{David Glasser}.} \bibinfo{year}{1996}\natexlab{}.
\newblock \showarticletitle{The Complexity of Illuminating Polygons by
  alpha-flood-lights}. In \bibinfo{booktitle}{{\em CCCG'96}}.
\newblock


\bibitem[\protect\citeauthoryear{Bay, Tuytelaars, and Van~Gool}{Bay
  et~al\mbox{.}}{2006}]%
        {bay2006surf}
\bibfield{author}{\bibinfo{person}{Herbert Bay}, \bibinfo{person}{Tinne
  Tuytelaars}, {and} \bibinfo{person}{Luc Van~Gool}.}
  \bibinfo{year}{2006}\natexlab{}.
\newblock \showarticletitle{Surf: Speeded up robust features}.
\newblock In \bibinfo{booktitle}{{\em Computer vision--ECCV 2006}}.
\newblock


\bibitem[\protect\citeauthoryear{Bektas}{Bektas}{2006}]%
        {bektas2006multiple}
\bibfield{author}{\bibinfo{person}{Tolga Bektas}.}
  \bibinfo{year}{2006}\natexlab{}.
\newblock \showarticletitle{The multiple traveling salesman problem: an
  overview of formulations and solution procedures}.
\newblock \bibinfo{journal}{{\em Omega\/}} (\bibinfo{year}{2006}).
\newblock


\bibitem[\protect\citeauthoryear{Berman, Jeong, Kasiviswanathan, and
  Urgaonkar}{Berman et~al\mbox{.}}{2007}]%
        {berman2007packing}
\bibfield{author}{\bibinfo{person}{Piotr Berman}, \bibinfo{person}{Jieun
  Jeong}, \bibinfo{person}{Shiva~Prasad Kasiviswanathan}, {and}
  \bibinfo{person}{Bhuvan Urgaonkar}.} \bibinfo{year}{2007}\natexlab{}.
\newblock \showarticletitle{Packing to angles and sectors}. In
  \bibinfo{booktitle}{{\em SPAA'07}}.
\newblock


\bibitem[\protect\citeauthoryear{Best, Faigl, and Fitch}{Best
  et~al\mbox{.}}{2017}]%
        {Best2017}
\bibfield{author}{\bibinfo{person}{Graeme Best}, \bibinfo{person}{Jan Faigl},
  {and} \bibinfo{person}{Robert Fitch}.} \bibinfo{year}{2017}\natexlab{}.
\newblock \showarticletitle{Online planning for multi-robot active perception
  with self-organising maps}.
\newblock \bibinfo{journal}{{\em Autonomous Robots\/}} (\bibinfo{date}{13 Dec}
  \bibinfo{year}{2017}).
\newblock
\showISSN{1573-7527}
\showDOI{%
\url{http://dx.doi.org/10.1007/s10514-017-9691-4}}


\bibitem[\protect\citeauthoryear{Bircher, Alexis, Schwesinger, Omari, Burri,
  and Siegwart}{Bircher et~al\mbox{.}}{2017}]%
        {Bircher2017}
\bibfield{author}{\bibinfo{person}{Andreas Bircher}, \bibinfo{person}{Kostas
  Alexis}, \bibinfo{person}{Ulrich Schwesinger}, \bibinfo{person}{Sammy Omari},
  \bibinfo{person}{Michael Burri}, {and} \bibinfo{person}{Roland Siegwart}.}
  \bibinfo{year}{2017}\natexlab{}.
\newblock \showarticletitle{An incremental sampling-based approach to
  inspection planning: the rapidly exploring random tree of trees}.
\newblock \bibinfo{journal}{{\em Robotica\/}} \bibinfo{volume}{35},
  \bibinfo{number}{6} (\bibinfo{year}{2017}), \bibinfo{pages}{1327–1340}.
\newblock
\showDOI{%
\url{http://dx.doi.org/10.1017/S0263574716000084}}


\bibitem[\protect\citeauthoryear{Blanz, Grother, Phillips, and Vetter}{Blanz
  et~al\mbox{.}}{2005}]%
        {blanz2005face}
\bibfield{author}{\bibinfo{person}{Volker Blanz}, \bibinfo{person}{Patrick
  Grother}, \bibinfo{person}{P~Jonathon Phillips}, {and}
  \bibinfo{person}{Thomas Vetter}.} \bibinfo{year}{2005}\natexlab{}.
\newblock \showarticletitle{Face recognition based on frontal views generated
  from non-frontal images}. In \bibinfo{booktitle}{{\em CVPR'05}}.
\newblock


\bibitem[\protect\citeauthoryear{Bose, Guibas, Lubiw, Overmars, Souvaine, and
  Urrutia}{Bose et~al\mbox{.}}{1997}]%
        {bose1997floodlight}
\bibfield{author}{\bibinfo{person}{Prosenjit Bose}, \bibinfo{person}{Leonidas
  Guibas}, \bibinfo{person}{Anna Lubiw}, \bibinfo{person}{Mark Overmars},
  \bibinfo{person}{Diane Souvaine}, {and} \bibinfo{person}{Jorge Urrutia}.}
  \bibinfo{year}{1997}\natexlab{}.
\newblock \showarticletitle{The floodlight problem}.
\newblock \bibinfo{journal}{{\em International Journal of Computational
  Geometry \& Applications\/}} \bibinfo{volume}{7}, \bibinfo{number}{01n02}
  (\bibinfo{year}{1997}).
\newblock


\bibitem[\protect\citeauthoryear{B{\"u}rkle}{B{\"u}rkle}{2009}]%
        {burkle2009collaborating}
\bibfield{author}{\bibinfo{person}{Axel B{\"u}rkle}.}
  \bibinfo{year}{2009}\natexlab{}.
\newblock \showarticletitle{Collaborating miniature drones for surveillance and
  reconnaissance}. In \bibinfo{booktitle}{{\em Proc. of SPIE
  Unmanned/Unattended Sensors and Sensor Networks VI}},
  Vol.~\bibinfo{volume}{7480}.
\newblock


\bibitem[\protect\citeauthoryear{Carmi, Katz, and Lev-Tov}{Carmi
  et~al\mbox{.}}{2007}]%
        {carmi2007covering}
\bibfield{author}{\bibinfo{person}{Paz Carmi}, \bibinfo{person}{Matthew~J
  Katz}, {and} \bibinfo{person}{Nissan Lev-Tov}.}
  \bibinfo{year}{2007}\natexlab{}.
\newblock \showarticletitle{Covering points by unit disks of fixed location}.
\newblock In \bibinfo{booktitle}{{\em Algorithms and Computation}}.
  \bibinfo{pages}{644--655}.
\newblock


\bibitem[\protect\citeauthoryear{Chang, Chua, Leman, Wang, and Zhang}{Chang
  et~al\mbox{.}}{2013}]%
        {chang2013automatic}
\bibfield{author}{\bibinfo{person}{Ronald Chang}, \bibinfo{person}{Teck~Wee
  Chua}, \bibinfo{person}{Karianto Leman}, \bibinfo{person}{Hee~Lin Wang},
  {and} \bibinfo{person}{Jie Zhang}.} \bibinfo{year}{2013}\natexlab{}.
\newblock \showarticletitle{Automatic cooperative camera system for real-time
  bag detection in visual surveillance}. In \bibinfo{booktitle}{{\em
  ICDSC'13}}.
\newblock


\bibitem[\protect\citeauthoryear{Chen and Odobez}{Chen and Odobez}{2012}]%
        {chen2012we}
\bibfield{author}{\bibinfo{person}{Cheng Chen} {and} \bibinfo{person}{Jean-Marc
  Odobez}.} \bibinfo{year}{2012}\natexlab{}.
\newblock \showarticletitle{We are not contortionists: coupled adaptive
  learning for head and body orientation estimation in surveillance video}. In
  \bibinfo{booktitle}{{\em CVPR'12}}.
\newblock


\bibitem[\protect\citeauthoryear{Chen, Huang, and Fu}{Chen
  et~al\mbox{.}}{2008a}]%
        {chen2008hybrid}
\bibfield{author}{\bibinfo{person}{Hsiuao-Ying Chen},
  \bibinfo{person}{Chung-Lin Huang}, {and} \bibinfo{person}{Chih-Ming Fu}.}
  \bibinfo{year}{2008}\natexlab{a}.
\newblock \showarticletitle{Hybrid-boost learning for multi-pose face detection
  and facial expression recognition}.
\newblock \bibinfo{journal}{{\em Pattern Recognition\/}} \bibinfo{volume}{41},
  \bibinfo{number}{3} (\bibinfo{year}{2008}), \bibinfo{pages}{1173--1185}.
\newblock


\bibitem[\protect\citeauthoryear{Chen, Lai, Hung, and Chen}{Chen
  et~al\mbox{.}}{2008b}]%
        {chen2008adaptive}
\bibfield{author}{\bibinfo{person}{Kuan-Wen Chen}, \bibinfo{person}{Chih-Chuan
  Lai}, \bibinfo{person}{Yi-Ping Hung}, {and} \bibinfo{person}{Chu-Song Chen}.}
  \bibinfo{year}{2008}\natexlab{b}.
\newblock \showarticletitle{An adaptive learning method for target tracking
  across multiple cameras}. In \bibinfo{booktitle}{{\em CVPR'08}}.
\newblock


\bibitem[\protect\citeauthoryear{Chvatal}{Chvatal}{1979}]%
        {chvatal1979greedy}
\bibfield{author}{\bibinfo{person}{Vasek Chvatal}.}
  \bibinfo{year}{1979}\natexlab{}.
\newblock \showarticletitle{A greedy heuristic for the set-covering problem}.
\newblock \bibinfo{journal}{{\em Mathematics of operations research\/}}
  \bibinfo{volume}{4}, \bibinfo{number}{3} (\bibinfo{year}{1979}),
  \bibinfo{pages}{233--235}.
\newblock


\bibitem[\protect\citeauthoryear{Cortes, Martinez, Karatas, and Bullo}{Cortes
  et~al\mbox{.}}{2004}]%
        {1284411}
\bibfield{author}{\bibinfo{person}{J. Cortes}, \bibinfo{person}{S. Martinez},
  \bibinfo{person}{T. Karatas}, {and} \bibinfo{person}{F. Bullo}.}
  \bibinfo{year}{2004}\natexlab{}.
\newblock \showarticletitle{Coverage control for mobile sensing networks}.
\newblock \bibinfo{journal}{{\em IEEE Transactions on Robotics and
  Automation\/}} \bibinfo{volume}{20}, \bibinfo{number}{2}
  (\bibinfo{date}{April} \bibinfo{year}{2004}), \bibinfo{pages}{243--255}.
\newblock
\showISSN{1042-296X}
\showDOI{%
\url{http://dx.doi.org/10.1109/TRA.2004.824698}}


\bibitem[\protect\citeauthoryear{Culberson and Reckhow}{Culberson and
  Reckhow}{1988}]%
        {culberson1988covering}
\bibfield{author}{\bibinfo{person}{Joseph~C Culberson} {and}
  \bibinfo{person}{Robert~A Reckhow}.} \bibinfo{year}{1988}\natexlab{}.
\newblock \showarticletitle{Covering polygons is hard}. In
  \bibinfo{booktitle}{{\em FOCS'88}}.
\newblock


\bibitem[\protect\citeauthoryear{da~Silva, Bernardo, de~Oliveira, and
  Rosa}{da~Silva et~al\mbox{.}}{2017}]%
        {8001408}
\bibfield{author}{\bibinfo{person}{L.~C.~Batista da Silva},
  \bibinfo{person}{R.~M. Bernardo}, \bibinfo{person}{H.~A. de Oliveira}, {and}
  \bibinfo{person}{P.~F.~F. Rosa}.} \bibinfo{year}{2017}\natexlab{}.
\newblock \showarticletitle{Unmanned aircraft system coordination for
  persistent surveillance with different priorities}. In
  \bibinfo{booktitle}{{\em 2017 IEEE 26th International Symposium on Industrial
  Electronics (ISIE)}}. \bibinfo{pages}{1153--1158}.
\newblock
\showDOI{%
\url{http://dx.doi.org/10.1109/ISIE.2017.8001408}}


\bibitem[\protect\citeauthoryear{Denman, Kleinschmidt, Ryan, Barnes, Sridharan,
  and Fookes}{Denman et~al\mbox{.}}{2015}]%
        {denman2015automatic}
\bibfield{author}{\bibinfo{person}{Simon Denman}, \bibinfo{person}{Tristan
  Kleinschmidt}, \bibinfo{person}{David Ryan}, \bibinfo{person}{Paul Barnes},
  \bibinfo{person}{Sridha Sridharan}, {and} \bibinfo{person}{Clinton Fookes}.}
  \bibinfo{year}{2015}\natexlab{}.
\newblock \showarticletitle{Automatic surveillance in transportation hubs: No
  longer just about catching the bad guy}.
\newblock \bibinfo{journal}{{\em Expert Systems with Applications\/}}
  \bibinfo{volume}{42}, \bibinfo{number}{24} (\bibinfo{year}{2015}).
\newblock


\bibitem[\protect\citeauthoryear{Derenick, Michael, and Kumar}{Derenick
  et~al\mbox{.}}{2011}]%
        {docking}
\bibfield{author}{\bibinfo{person}{J. Derenick}, \bibinfo{person}{N. Michael},
  {and} \bibinfo{person}{V. Kumar}.} \bibinfo{year}{2011}\natexlab{}.
\newblock \showarticletitle{Energy-aware coverage control with docking for
  robot teams}. In \bibinfo{booktitle}{{\em 2011 IEEE/RSJ International
  Conference on Intelligent Robots and Systems}}. \bibinfo{pages}{3667--3672}.
\newblock
\showISSN{2153-0858}
\showDOI{%
\url{http://dx.doi.org/10.1109/IROS.2011.6094977}}


\bibitem[\protect\citeauthoryear{Deshpande, Kim, Demaine, and Sarma}{Deshpande
  et~al\mbox{.}}{2007}]%
        {deshpande2007pseudopolynomial}
\bibfield{author}{\bibinfo{person}{Ajay Deshpande}, \bibinfo{person}{Taejung
  Kim}, \bibinfo{person}{Erik Demaine}, {and} \bibinfo{person}{Sanjay Sarma}.}
  \bibinfo{year}{2007}\natexlab{}.
\newblock \showarticletitle{A Pseudopolynomial Time
  {$O(\log{n})$}-Approximation Algorithm for Art Gallery Problems}.
\newblock \bibinfo{journal}{{\em Algorithms and Data Structures\/}}
  (\bibinfo{year}{2007}), \bibinfo{pages}{163--174}.
\newblock


\bibitem[\protect\citeauthoryear{Dhillon and Chakrabarty}{Dhillon and
  Chakrabarty}{2003}]%
        {1200627}
\bibfield{author}{\bibinfo{person}{Santpal~Singh Dhillon} {and}
  \bibinfo{person}{Krishnendu Chakrabarty}.} \bibinfo{year}{2003}\natexlab{}.
\newblock \showarticletitle{Sensor placement for effective coverage and
  surveillance in distributed sensor networks}. In \bibinfo{booktitle}{{\em
  WCNC'03}}.
\newblock


\bibitem[\protect\citeauthoryear{Eidenbenz, Stamm, and Widmayer}{Eidenbenz
  et~al\mbox{.}}{2001}]%
        {eidenbenz2001inapproximability}
\bibfield{author}{\bibinfo{person}{Stephan Eidenbenz},
  \bibinfo{person}{Christoph Stamm}, {and} \bibinfo{person}{Peter Widmayer}.}
  \bibinfo{year}{2001}\natexlab{}.
\newblock \showarticletitle{Inapproximability results for guarding polygons and
  terrains}.
\newblock \bibinfo{journal}{{\em Algorithmica\/}} \bibinfo{volume}{31},
  \bibinfo{number}{1} (\bibinfo{year}{2001}), \bibinfo{pages}{79--113}.
\newblock


\bibitem[\protect\citeauthoryear{Erdelj, Natalizio, Chowdhury, and
  Akyildiz}{Erdelj et~al\mbox{.}}{2017}]%
        {disaster}
\bibfield{author}{\bibinfo{person}{M. Erdelj}, \bibinfo{person}{E. Natalizio},
  \bibinfo{person}{K.~R. Chowdhury}, {and} \bibinfo{person}{I.~F. Akyildiz}.}
  \bibinfo{year}{2017}\natexlab{}.
\newblock \showarticletitle{Help from the Sky: Leveraging UAVs for Disaster
  Management}.
\newblock \bibinfo{journal}{{\em IEEE Pervasive Computing\/}}
  \bibinfo{volume}{16}, \bibinfo{number}{1} (\bibinfo{date}{Jan}
  \bibinfo{year}{2017}), \bibinfo{pages}{24--32}.
\newblock


\bibitem[\protect\citeauthoryear{Esterle, Lewis, Yao, and Rinner}{Esterle
  et~al\mbox{.}}{2014}]%
        {esterle2014socio}
\bibfield{author}{\bibinfo{person}{Lukas Esterle}, \bibinfo{person}{Peter~R
  Lewis}, \bibinfo{person}{Xin Yao}, {and} \bibinfo{person}{Bernhard Rinner}.}
  \bibinfo{year}{2014}\natexlab{}.
\newblock \showarticletitle{Socio-economic vision graph generation and handover
  in distributed smart camera networks}.
\newblock \bibinfo{journal}{{\em ACM Transactions on Sensor Networks (TOSN)\/}}
  \bibinfo{volume}{10}, \bibinfo{number}{2} (\bibinfo{year}{2014}),
  \bibinfo{pages}{20}.
\newblock


\bibitem[\protect\citeauthoryear{Finn and Wright}{Finn and Wright}{2012}]%
        {finn2012unmanned}
\bibfield{author}{\bibinfo{person}{Rachel~L Finn} {and} \bibinfo{person}{David
  Wright}.} \bibinfo{year}{2012}\natexlab{}.
\newblock \showarticletitle{Unmanned aircraft systems: Surveillance, ethics and
  privacy in civil applications}.
\newblock \bibinfo{journal}{{\em Computer Law \& Security Review\/}}
  \bibinfo{volume}{28}, \bibinfo{number}{2} (\bibinfo{year}{2012}),
  \bibinfo{pages}{184--194}.
\newblock


\bibitem[\protect\citeauthoryear{Flohr, Dumitru-Guzu, Kooij, and Gavrila}{Flohr
  et~al\mbox{.}}{2015}]%
        {flohr2015probabilistic}
\bibfield{author}{\bibinfo{person}{Fabian Flohr}, \bibinfo{person}{Madalin
  Dumitru-Guzu}, \bibinfo{person}{Julian~FP Kooij}, {and}
  \bibinfo{person}{Dariu~M Gavrila}.} \bibinfo{year}{2015}\natexlab{}.
\newblock \showarticletitle{A probabilistic framework for joint pedestrian head
  and body orientation estimation}.
\newblock \bibinfo{journal}{{\em IEEE Transactions on Intelligent
  Transportation Systems\/}} \bibinfo{volume}{16}, \bibinfo{number}{4}
  (\bibinfo{year}{2015}), \bibinfo{pages}{1872--1882}.
\newblock


\bibitem[\protect\citeauthoryear{Foresti, Micheloni, Snidaro, Remagnino, and
  Ellis}{Foresti et~al\mbox{.}}{2005}]%
        {foresti2005active}
\bibfield{author}{\bibinfo{person}{Gian~Luca Foresti},
  \bibinfo{person}{Christian Micheloni}, \bibinfo{person}{Lauro Snidaro},
  \bibinfo{person}{Paolo Remagnino}, {and} \bibinfo{person}{Tim Ellis}.}
  \bibinfo{year}{2005}\natexlab{}.
\newblock \showarticletitle{Active video-based surveillance system: the
  low-level image and video processing techniques needed for implementation}.
\newblock \bibinfo{journal}{{\em Signal Processing Magazine, IEEE\/}}
  \bibinfo{volume}{22}, \bibinfo{number}{2} (\bibinfo{year}{2005}),
  \bibinfo{pages}{25--37}.
\newblock


\bibitem[\protect\citeauthoryear{Fraga-Lamas, Fernández-Caramés,
  Suárez-Albela, Castedo, and González-López}{Fraga-Lamas
  et~al\mbox{.}}{2016}]%
        {DPS}
\bibfield{author}{\bibinfo{person}{Paula Fraga-Lamas},
  \bibinfo{person}{Tiago~M. Fernández-Caramés}, \bibinfo{person}{Manuel
  Suárez-Albela}, \bibinfo{person}{Luis Castedo}, {and}
  \bibinfo{person}{Miguel González-López}.} \bibinfo{year}{2016}\natexlab{}.
\newblock \showarticletitle{A Review on Internet of Things for Defense and
  Public Safety}.
\newblock \bibinfo{journal}{{\em Sensors\/}} \bibinfo{volume}{16},
  \bibinfo{number}{10} (\bibinfo{year}{2016}).
\newblock


\bibitem[\protect\citeauthoryear{G.~La~Vigne, S.~Lowry, Markman, and
  Dwyer}{G.~La~Vigne et~al\mbox{.}}{2011}]%
        {vigne2011evaluating}
\bibfield{author}{\bibinfo{person}{Nancy G.~La~Vigne},
  \bibinfo{person}{Samantha S.~Lowry}, \bibinfo{person}{Joshwa Markman}, {and}
  \bibinfo{person}{Allison Dwyer}.} \bibinfo{year}{2011}\natexlab{}.
\newblock \bibinfo{booktitle}{{\em Evaluating the Use of Public Surveillance
  Cameras for Crime Control and Prevention}}.
\newblock \bibinfo{publisher}{Urban Institute, Justice Policy Center}.
\newblock


\bibitem[\protect\citeauthoryear{{Galvane}, {Lino}, {Christie}, {Fleureau},
  {Servant}, {Tariolle}, and {Guillotel}}{{Galvane} et~al\mbox{.}}{2017}]%
        {cin_director}
\bibfield{author}{\bibinfo{person}{Q. {Galvane}}, \bibinfo{person}{C. {Lino}},
  \bibinfo{person}{M. {Christie}}, \bibinfo{person}{J. {Fleureau}},
  \bibinfo{person}{F. {Servant}}, \bibinfo{person}{F.-L. {Tariolle}}, {and}
  \bibinfo{person}{P. {Guillotel}}.} \bibinfo{year}{2017}\natexlab{}.
\newblock \showarticletitle{{Directing Cinematographic Drones}}.
\newblock \bibinfo{journal}{{\em ArXiv e-prints\/}} (\bibinfo{date}{Dec.}
  \bibinfo{year}{2017}).
\newblock
\showeprint[arxiv]{1712.04216}


\bibitem[\protect\citeauthoryear{Gonz{\'a}lez-Ba{\~n}os}{Gonz{\'a}lez-Ba{\~n}os}{2001}]%
        {gonzalez2001randomized}
\bibfield{author}{\bibinfo{person}{H{\'e}ctor Gonz{\'a}lez-Ba{\~n}os}.}
  \bibinfo{year}{2001}\natexlab{}.
\newblock \showarticletitle{A randomized art-gallery algorithm for sensor
  placement}. In \bibinfo{booktitle}{{\em SoCG'01}}.
\newblock


\bibitem[\protect\citeauthoryear{Google}{Google}{2016}]%
        {GHome}
\bibfield{author}{\bibinfo{person}{LLC Google}.}
  \bibinfo{year}{2016}\natexlab{}.
\newblock \bibinfo{title}{Google Home}.
\newblock   (\bibinfo{year}{2016}).
\newblock
\showURL{%
\url{http://home.google.com/}}


\bibitem[\protect\citeauthoryear{Han, Cao, Lloyd, and Shen}{Han
  et~al\mbox{.}}{2008}]%
        {han2008deploying}
\bibfield{author}{\bibinfo{person}{Xiaofeng Han}, \bibinfo{person}{Xiang Cao},
  \bibinfo{person}{Errol~L Lloyd}, {and} \bibinfo{person}{Chien-Chung Shen}.}
  \bibinfo{year}{2008}\natexlab{}.
\newblock \showarticletitle{Deploying directional sensor networks with
  guaranteed connectivity and coverage}. In \bibinfo{booktitle}{{\em
  SECON'08}}. IEEE, \bibinfo{pages}{153--160}.
\newblock


\bibitem[\protect\citeauthoryear{Hassanalian and Abdelkefi}{Hassanalian and
  Abdelkefi}{2017}]%
        {HASSANALIAN201799}
\bibfield{author}{\bibinfo{person}{M. Hassanalian} {and} \bibinfo{person}{A.
  Abdelkefi}.} \bibinfo{year}{2017}\natexlab{}.
\newblock \showarticletitle{Classifications, applications, and design
  challenges of drones: A review}.
\newblock \bibinfo{journal}{{\em Progress in Aerospace Sciences\/}}
  \bibinfo{volume}{91}, \bibinfo{number}{Supplement C} (\bibinfo{year}{2017}),
  \bibinfo{pages}{99 -- 131}.
\newblock
\showISSN{0376-0421}
\showDOI{%
\url{http://dx.doi.org/https://doi.org/10.1016/j.paerosci.2017.04.003}}


\bibitem[\protect\citeauthoryear{Hausman, M{\"u}ller, Hariharan, Ayanian, and
  Sukhatme}{Hausman et~al\mbox{.}}{2015}]%
        {hausman2015cooperative}
\bibfield{author}{\bibinfo{person}{Karol Hausman}, \bibinfo{person}{J{\"o}rg
  M{\"u}ller}, \bibinfo{person}{Abishek Hariharan}, \bibinfo{person}{Nora
  Ayanian}, {and} \bibinfo{person}{Gaurav~S Sukhatme}.}
  \bibinfo{year}{2015}\natexlab{}.
\newblock \showarticletitle{Cooperative multi-robot control for target tracking
  with onboard sensing}.
\newblock \bibinfo{journal}{{\em The International Journal of Robotics
  Research\/}} \bibinfo{volume}{34}, \bibinfo{number}{13}
  (\bibinfo{year}{2015}), \bibinfo{pages}{1660--1677}.
\newblock


\bibitem[\protect\citeauthoryear{Helbling and Wood}{Helbling and Wood}{2017}]%
        {Helbling2017}
\bibfield{author}{\bibinfo{person}{Elizabeth~Farrell Helbling} {and}
  \bibinfo{person}{Robert Wood}.} \bibinfo{year}{2017}\natexlab{}.
\newblock \showarticletitle{A Review of Propulsion, Power, and Control
  Architectures for Insect-Scale Flapping-Wing Vehicles}.
\newblock \bibinfo{journal}{{\em Applied Mechanics Reviews\/}}
  (\bibinfo{date}{21 Dec} \bibinfo{year}{2017}).
\newblock
\showISSN{0003-6900}
\showDOI{%
\url{http://dx.doi.org/10.1115/1.4038795}}


\bibitem[\protect\citeauthoryear{Henry~et al.}{Henry~et al.}{2012}]%
        {henry2012rgb}
\bibfield{author}{\bibinfo{person}{Peter Henry~et al.}}
  \bibinfo{year}{2012}\natexlab{}.
\newblock \showarticletitle{RGB-D mapping: Using Kinect-style depth cameras for
  dense 3D modeling of indoor environments}.
\newblock \bibinfo{journal}{{\em The International Journal of Robotics
  Research\/}} (\bibinfo{year}{2012}).
\newblock


\bibitem[\protect\citeauthoryear{Hexsel, Chakraborty, and Sycara}{Hexsel
  et~al\mbox{.}}{2011}]%
        {hexsel2011coverage}
\bibfield{author}{\bibinfo{person}{Bruno Hexsel}, \bibinfo{person}{Nilanjan
  Chakraborty}, {and} \bibinfo{person}{K Sycara}.}
  \bibinfo{year}{2011}\natexlab{}.
\newblock \showarticletitle{Coverage control for mobile anisotropic sensor
  networks}. In \bibinfo{booktitle}{{\em ICRA'11}}.
\newblock


\bibitem[\protect\citeauthoryear{Hsu and Chen}{Hsu and Chen}{2015}]%
        {dornetchallenges}
\bibfield{author}{\bibinfo{person}{Hwai-Jung Hsu} {and}
  \bibinfo{person}{Kuan-Ta Chen}.} \bibinfo{year}{2015}\natexlab{}.
\newblock \showarticletitle{Face Recognition on Drones: Issues and
  Limitations}. In \bibinfo{booktitle}{{\em DroNet '15}}.
\newblock


\bibitem[\protect\citeauthoryear{Hu, Tan, Wang, and Maybank}{Hu
  et~al\mbox{.}}{2004}]%
        {hu2004survey}
\bibfield{author}{\bibinfo{person}{Weiming Hu}, \bibinfo{person}{Tieniu Tan},
  \bibinfo{person}{Liang Wang}, {and} \bibinfo{person}{Steve Maybank}.}
  \bibinfo{year}{2004}\natexlab{}.
\newblock \showarticletitle{A survey on visual surveillance of object motion
  and behaviors}.
\newblock \bibinfo{journal}{{\em IEEE Transactions on Systems, Man, and
  Cybernetics, Part C: Applications and Reviews\/}} \bibinfo{volume}{34},
  \bibinfo{number}{3} (\bibinfo{year}{2004}), \bibinfo{pages}{334--352}.
\newblock


\bibitem[\protect\citeauthoryear{Hu, Wang, and Gan}{Hu et~al\mbox{.}}{2014}]%
        {hu2014critical}
\bibfield{author}{\bibinfo{person}{Yitao Hu}, \bibinfo{person}{Xinbing Wang},
  {and} \bibinfo{person}{Xiaoying Gan}.} \bibinfo{year}{2014}\natexlab{}.
\newblock \showarticletitle{Critical Sensing Range for Mobile Heterogeneous
  Camera Sensor Networks}. In \bibinfo{booktitle}{{\em INFOCOM'14}}.
\newblock


\bibitem[\protect\citeauthoryear{Hönig and Ayanian}{Hönig and
  Ayanian}{2016}]%
        {dynamic_cov_IROS16}
\bibfield{author}{\bibinfo{person}{W. Hönig} {and} \bibinfo{person}{N.
  Ayanian}.} \bibinfo{year}{2016}\natexlab{}.
\newblock \showarticletitle{Dynamic multi-target coverage with robotic
  cameras}. In \bibinfo{booktitle}{{\em 2016 IEEE/RSJ International Conference
  on Intelligent Robots and Systems (IROS)}}. \bibinfo{pages}{1871--1878}.
\newblock
\showDOI{%
\url{http://dx.doi.org/10.1109/IROS.2016.7759297}}


\bibitem[\protect\citeauthoryear{Joubert, Goldman, Berthouzoz, Roberts, Landay,
  Hanrahan, et~al\mbox{.}}{Joubert et~al\mbox{.}}{2016}]%
        {joubert2016towards}
\bibfield{author}{\bibinfo{person}{Niels Joubert}, \bibinfo{person}{Dan~B
  Goldman}, \bibinfo{person}{Floraine Berthouzoz}, \bibinfo{person}{Mike
  Roberts}, \bibinfo{person}{James~A Landay}, \bibinfo{person}{Pat Hanrahan},
  {and} \bibinfo{person}{others}.} \bibinfo{year}{2016}\natexlab{}.
\newblock \showarticletitle{Towards a Drone Cinematographer: Guiding Quadrotor
  Cameras using Visual Composition Principles}.
\newblock \bibinfo{journal}{{\em arXiv preprint arXiv:1610.01691\/}}
  (\bibinfo{year}{2016}).
\newblock


\bibitem[\protect\citeauthoryear{Joubert, Roberts, Truong, Berthouzoz, and
  Hanrahan}{Joubert et~al\mbox{.}}{2015}]%
        {cin_TOG15}
\bibfield{author}{\bibinfo{person}{Niels Joubert}, \bibinfo{person}{Mike
  Roberts}, \bibinfo{person}{Anh Truong}, \bibinfo{person}{Floraine
  Berthouzoz}, {and} \bibinfo{person}{Pat Hanrahan}.}
  \bibinfo{year}{2015}\natexlab{}.
\newblock \showarticletitle{An Interactive Tool for Designing Quadrotor Camera
  Shots}.
\newblock \bibinfo{journal}{{\em ACM Trans. Graph.\/}} \bibinfo{volume}{34},
  \bibinfo{number}{6}, Article \bibinfo{articleno}{238} (\bibinfo{date}{Oct.}
  \bibinfo{year}{2015}), \bibinfo{numpages}{11}~pages.
\newblock
\showISSN{0730-0301}
\showDOI{%
\url{http://dx.doi.org/10.1145/2816795.2818106}}


\bibitem[\protect\citeauthoryear{Khan, Rinner, and Cavallaro}{Khan
  et~al\mbox{.}}{2018}]%
        {coop_cov_review}
\bibfield{author}{\bibinfo{person}{A. Khan}, \bibinfo{person}{B. Rinner}, {and}
  \bibinfo{person}{A. Cavallaro}.} \bibinfo{year}{2018}\natexlab{}.
\newblock \showarticletitle{Cooperative Robots to Observe Moving Targets:
  Review}.
\newblock \bibinfo{journal}{{\em IEEE Transactions on Cybernetics\/}}
  \bibinfo{volume}{48}, \bibinfo{number}{1} (\bibinfo{date}{Jan}
  \bibinfo{year}{2018}), \bibinfo{pages}{187--198}.
\newblock
\showISSN{2168-2267}
\showDOI{%
\url{http://dx.doi.org/10.1109/TCYB.2016.2628161}}


\bibitem[\protect\citeauthoryear{Khan, Alam, Mohamed, and Harras}{Khan
  et~al\mbox{.}}{2016}]%
        {khan2016simulating}
\bibfield{author}{\bibinfo{person}{Mouhyemen Khan}, \bibinfo{person}{Sidra
  Alam}, \bibinfo{person}{Amr Mohamed}, {and} \bibinfo{person}{Khaled~A
  Harras}.} \bibinfo{year}{2016}\natexlab{}.
\newblock \showarticletitle{Simulating Drone-be-Gone: Agile Low-Cost
  Cyber-Physical UAV Testbed (Demonstration)}. In \bibinfo{booktitle}{{\em
  AAMAS'16}}.
\newblock


\bibitem[\protect\citeauthoryear{Kim and Shim}{Kim and Shim}{2013}]%
        {kim2013unmanned}
\bibfield{author}{\bibinfo{person}{Jeong~Woon Kim} {and}
  \bibinfo{person}{David~Hyunchul Shim}.} \bibinfo{year}{2013}\natexlab{}.
\newblock \showarticletitle{A vision-based target tracking control system of a
  quadrotor by using a tablet computer}. In \bibinfo{booktitle}{{\em
  International Conference on Unmanned Aircraft Systems (ICUAS'13)}}.
  \bibinfo{pages}{1165--1172}.
\newblock


\bibitem[\protect\citeauthoryear{Koppal}{Koppal}{2016}]%
        {photography_small_SPM16}
\bibfield{author}{\bibinfo{person}{S.~J. Koppal}.}
  \bibinfo{year}{2016}\natexlab{}.
\newblock \showarticletitle{A Survey of Computational Photography in the Small:
  Creating intelligent cameras for the next wave of miniature devices}.
\newblock \bibinfo{journal}{{\em IEEE Signal Processing Magazine\/}}
  \bibinfo{volume}{33}, \bibinfo{number}{5} (\bibinfo{date}{Sept}
  \bibinfo{year}{2016}), \bibinfo{pages}{16--22}.
\newblock
\showISSN{1053-5888}
\showDOI{%
\url{http://dx.doi.org/10.1109/MSP.2016.2581418}}


\bibitem[\protect\citeauthoryear{Krahnstoever, Yu, Lim, Patwardhan, and
  Tu}{Krahnstoever et~al\mbox{.}}{2008}]%
        {krahnstoever2008collaborative}
\bibfield{author}{\bibinfo{person}{Nils Krahnstoever}, \bibinfo{person}{Ting
  Yu}, \bibinfo{person}{Ser-Nam Lim}, \bibinfo{person}{Kedar Patwardhan}, {and}
  \bibinfo{person}{Peter Tu}.} \bibinfo{year}{2008}\natexlab{}.
\newblock \showarticletitle{Collaborative real-time control of active cameras
  in large scale surveillance systems}. In \bibinfo{booktitle}{{\em Workshop on
  Multi-camera and Multi-modal Sensor Fusion Algorithms and Applications-M2SFA2
  2008}}.
\newblock


\bibitem[\protect\citeauthoryear{Krajn{\'\i}k, Von{\'a}sek, Fi{\v{s}}er, and
  Faigl}{Krajn{\'\i}k et~al\mbox{.}}{2011}]%
        {krajnik2011ar}
\bibfield{author}{\bibinfo{person}{Tom{\'a}{\v{s}} Krajn{\'\i}k},
  \bibinfo{person}{Vojt{\v{e}}ch Von{\'a}sek}, \bibinfo{person}{Daniel
  Fi{\v{s}}er}, {and} \bibinfo{person}{Jan Faigl}.}
  \bibinfo{year}{2011}\natexlab{}.
\newblock \showarticletitle{{AR-drone as a platform for robotic research and
  education}}.
\newblock In \bibinfo{booktitle}{{\em International ConferenceResearch and
  Education in Robotics}}.
\newblock


\bibitem[\protect\citeauthoryear{Kulkarni, Ganesan, Shenoy, and Lu}{Kulkarni
  et~al\mbox{.}}{2005}]%
        {kulkarni2005senseye}
\bibfield{author}{\bibinfo{person}{Purushottam Kulkarni},
  \bibinfo{person}{Deepak Ganesan}, \bibinfo{person}{Prashant Shenoy}, {and}
  \bibinfo{person}{Qifeng Lu}.} \bibinfo{year}{2005}\natexlab{}.
\newblock \showarticletitle{SensEye: a multi-tier camera sensor network}. In
  \bibinfo{booktitle}{{\em MM'05}}.
\newblock


\bibitem[\protect\citeauthoryear{Kumar, Lai, and Arora}{Kumar
  et~al\mbox{.}}{2005}]%
        {kumar2005barrier}
\bibfield{author}{\bibinfo{person}{Santosh Kumar}, \bibinfo{person}{Ten~H Lai},
  {and} \bibinfo{person}{Anish Arora}.} \bibinfo{year}{2005}\natexlab{}.
\newblock \showarticletitle{Barrier coverage with wireless sensors}. In
  \bibinfo{booktitle}{{\em MobiCom'05}}.
\newblock


\bibitem[\protect\citeauthoryear{Lee and Lin}{Lee and Lin}{1986}]%
        {lee1986computational}
\bibfield{author}{\bibinfo{person}{Der-Tsai Lee} {and} \bibinfo{person}{Arthurk
  Lin}.} \bibinfo{year}{1986}\natexlab{}.
\newblock \showarticletitle{Computational complexity of art gallery problems}.
\newblock \bibinfo{journal}{{\em IEEE Transactions on Information Theory\/}}
  \bibinfo{volume}{32}, \bibinfo{number}{2} (\bibinfo{year}{1986}),
  \bibinfo{pages}{276--282}.
\newblock


\bibitem[\protect\citeauthoryear{Lin, Pankanti, Natesan~Ramamurthy, and
  Aravkin}{Lin et~al\mbox{.}}{2015}]%
        {Lin_2015_CVPR}
\bibfield{author}{\bibinfo{person}{Chung-Ching Lin},
  \bibinfo{person}{Sharathchandra~U. Pankanti}, \bibinfo{person}{Karthikeyan
  Natesan~Ramamurthy}, {and} \bibinfo{person}{Aleksandr~Y. Aravkin}.}
  \bibinfo{year}{2015}\natexlab{}.
\newblock \showarticletitle{Adaptive As-Natural-As-Possible Image Stitching}.
  In \bibinfo{booktitle}{{\em CVPR'15}}.
\newblock


\bibitem[\protect\citeauthoryear{Lino and Christie}{Lino and Christie}{2015}]%
        {Lino_toric}
\bibfield{author}{\bibinfo{person}{Christophe Lino} {and} \bibinfo{person}{Marc
  Christie}.} \bibinfo{year}{2015}\natexlab{}.
\newblock \showarticletitle{Intuitive and Efficient Camera Control with the
  Toric Space}.
\newblock \bibinfo{journal}{{\em ACM Trans. Graph.\/}} \bibinfo{volume}{34},
  \bibinfo{number}{4}, Article \bibinfo{articleno}{82} (\bibinfo{date}{July}
  \bibinfo{year}{2015}), \bibinfo{numpages}{12}~pages.
\newblock
\showISSN{0730-0301}
\showDOI{%
\url{http://dx.doi.org/10.1145/2766965}}


\bibitem[\protect\citeauthoryear{Lino, Christie, Ranon, and Bares}{Lino
  et~al\mbox{.}}{2011}]%
        {director_ICMR11}
\bibfield{author}{\bibinfo{person}{Christophe Lino}, \bibinfo{person}{Marc
  Christie}, \bibinfo{person}{Roberto Ranon}, {and} \bibinfo{person}{William
  Bares}.} \bibinfo{year}{2011}\natexlab{}.
\newblock \showarticletitle{The Director's Lens: An Intelligent Assistant for
  Virtual Cinematography}. In \bibinfo{booktitle}{{\em Proceedings of the 19th
  ACM International Conference on Multimedia}} {\em (\bibinfo{series}{MM
  '11})}. \bibinfo{publisher}{ACM}, \bibinfo{address}{New York, NY, USA},
  \bibinfo{pages}{323--332}.
\newblock
\showISBNx{978-1-4503-0616-4}
\showDOI{%
\url{http://dx.doi.org/10.1145/2072298.2072341}}


\bibitem[\protect\citeauthoryear{Mueller, Sharma, Smith, and Ghanem}{Mueller
  et~al\mbox{.}}{2016}]%
        {mueller2016persistent}
\bibfield{author}{\bibinfo{person}{Matthias Mueller}, \bibinfo{person}{Gopal
  Sharma}, \bibinfo{person}{Neil Smith}, {and} \bibinfo{person}{Bernard
  Ghanem}.} \bibinfo{year}{2016}\natexlab{}.
\newblock \showarticletitle{Persistent Aerial Tracking system for UAVs}. In
  \bibinfo{booktitle}{{\em IROS'16}}.
\newblock


\bibitem[\protect\citeauthoryear{Mulgaonkar and Kumar}{Mulgaonkar and
  Kumar}{2014}]%
        {mulgaonkar2014towards}
\bibfield{author}{\bibinfo{person}{Yash Mulgaonkar} {and}
  \bibinfo{person}{Vijay Kumar}.} \bibinfo{year}{2014}\natexlab{}.
\newblock \showarticletitle{Towards Open-Source, Printable Pico-Quadrotors}. In
  \bibinfo{booktitle}{{\em Proc. Robot Makers Workshop, Robotics: Science
  Systems Conf}}.
\newblock


\bibitem[\protect\citeauthoryear{N\"{a}geli, Meier, Domahidi, Alonso-Mora, and
  Hilliges}{N\"{a}geli et~al\mbox{.}}{2017}]%
        {ETH}
\bibfield{author}{\bibinfo{person}{Tobias N\"{a}geli}, \bibinfo{person}{Lukas
  Meier}, \bibinfo{person}{Alexander Domahidi}, \bibinfo{person}{Javier
  Alonso-Mora}, {and} \bibinfo{person}{Otmar Hilliges}.}
  \bibinfo{year}{2017}\natexlab{}.
\newblock \showarticletitle{Real-time Planning for Automated Multi-view Drone
  Cinematography}.
\newblock \bibinfo{journal}{{\em ACM Trans. Graph.\/}} \bibinfo{volume}{36},
  \bibinfo{number}{4}, Article \bibinfo{articleno}{132} (\bibinfo{date}{July}
  \bibinfo{year}{2017}), \bibinfo{numpages}{10}~pages.
\newblock
\showISSN{0730-0301}
\showDOI{%
\url{http://dx.doi.org/10.1145/3072959.3073712}}


\bibitem[\protect\citeauthoryear{Naseer, Sturm, and Cremers}{Naseer
  et~al\mbox{.}}{2013}]%
        {naseer2013followme}
\bibfield{author}{\bibinfo{person}{Tayyab Naseer}, \bibinfo{person}{J{\"u}rgen
  Sturm}, {and} \bibinfo{person}{Daniel Cremers}.}
  \bibinfo{year}{2013}\natexlab{}.
\newblock \showarticletitle{Followme: Person following and gesture recognition
  with a quadrocopter}. In \bibinfo{booktitle}{{\em IROS'13}}.
\newblock


\bibitem[\protect\citeauthoryear{Natarajan, Atrey, and Kankanhalli}{Natarajan
  et~al\mbox{.}}{2015}]%
        {natarajan2015multi}
\bibfield{author}{\bibinfo{person}{Prabhu Natarajan},
  \bibinfo{person}{Pradeep~K. Atrey}, {and} \bibinfo{person}{Mohan
  Kankanhalli}.} \bibinfo{year}{2015}\natexlab{}.
\newblock \showarticletitle{Multi-Camera Coordination and Control in
  Surveillance Systems: A Survey}.
\newblock \bibinfo{journal}{{\em ACM Transactions on Multimedia Computing,
  Communications, and Applications\/}} \bibinfo{volume}{11},
  \bibinfo{number}{4} (\bibinfo{year}{2015}), \bibinfo{pages}{57:1--57:30}.
\newblock


\bibitem[\protect\citeauthoryear{Neishaboori, Saeed, Harras, and
  Mohamed}{Neishaboori et~al\mbox{.}}{2014a}]%
        {ours_mass}
\bibfield{author}{\bibinfo{person}{Azin Neishaboori}, \bibinfo{person}{Ahmed
  Saeed}, \bibinfo{person}{Khaled~A. Harras}, {and} \bibinfo{person}{Amar
  Mohamed}.} \bibinfo{year}{2014}\natexlab{a}.
\newblock \showarticletitle{Low Complexity Target Coverage Heuristics Using
  Mobile Cameras}. In \bibinfo{booktitle}{{\em MASS'14}}.
\newblock


\bibitem[\protect\citeauthoryear{Neishaboori, Saeed, Harras, and
  Mohamed}{Neishaboori et~al\mbox{.}}{2014b}]%
        {ours_mobiwac}
\bibfield{author}{\bibinfo{person}{Azin Neishaboori}, \bibinfo{person}{Ahmed
  Saeed}, \bibinfo{person}{Khaled~A. Harras}, {and} \bibinfo{person}{Amr
  Mohamed}.} \bibinfo{year}{2014}\natexlab{b}.
\newblock \showarticletitle{On Target Coverage in Mobile Visual Sensor
  Networks}. In \bibinfo{booktitle}{{\em MobiWac'14}}.
\newblock


\bibitem[\protect\citeauthoryear{Nigam}{Nigam}{2014}]%
        {machines2010013}
\bibfield{author}{\bibinfo{person}{Nikhil Nigam}.}
  \bibinfo{year}{2014}\natexlab{}.
\newblock \showarticletitle{The Multiple Unmanned Air Vehicle Persistent
  Surveillance Problem: A Review}.
\newblock \bibinfo{journal}{{\em Machines\/}} \bibinfo{volume}{2},
  \bibinfo{number}{1} (\bibinfo{year}{2014}), \bibinfo{pages}{13--72}.
\newblock
\showISSN{2075-1702}
\showDOI{%
\url{http://dx.doi.org/10.3390/machines2010013}}


\bibitem[\protect\citeauthoryear{Orfanus, de~Freitas, and Eliassen}{Orfanus
  et~al\mbox{.}}{2016}]%
        {relay_networks}
\bibfield{author}{\bibinfo{person}{D. Orfanus}, \bibinfo{person}{E.~P. de
  Freitas}, {and} \bibinfo{person}{F. Eliassen}.}
  \bibinfo{year}{2016}\natexlab{}.
\newblock \showarticletitle{Self-Organization as a Supporting Paradigm for
  Military UAV Relay Networks}.
\newblock \bibinfo{journal}{{\em IEEE Communications Letters\/}}
  \bibinfo{volume}{20}, \bibinfo{number}{4} (\bibinfo{date}{April}
  \bibinfo{year}{2016}), \bibinfo{pages}{804--807}.
\newblock
\showISSN{1089-7798}
\showDOI{%
\url{http://dx.doi.org/10.1109/LCOMM.2016.2524405}}


\bibitem[\protect\citeauthoryear{O'Rourke and Supowit}{O'Rourke and
  Supowit}{1983}]%
        {o1983some}
\bibfield{author}{\bibinfo{person}{Joseph O'Rourke} {and}
  \bibinfo{person}{Kenneth Supowit}.} \bibinfo{year}{1983}\natexlab{}.
\newblock \showarticletitle{Some {NP}-hard polygon decomposition problems}.
\newblock \bibinfo{journal}{{\em Transactions on Information Theory\/}}
  \bibinfo{volume}{29}, \bibinfo{number}{2} (\bibinfo{year}{1983}),
  \bibinfo{pages}{181--190}.
\newblock


\bibitem[\protect\citeauthoryear{P.~Johnson and Bar-Noy}{P.~Johnson and
  Bar-Noy}{2011}]%
        {johnson2011pan}
\bibfield{author}{\bibinfo{person}{Matthew P.~Johnson} {and}
  \bibinfo{person}{Amotz Bar-Noy}.} \bibinfo{year}{2011}\natexlab{}.
\newblock \showarticletitle{Pan and scan: Configuring cameras for coverage}. In
  \bibinfo{booktitle}{{\em INFOCOM'11}}.
\newblock


\bibitem[\protect\citeauthoryear{Palacios-Gasós, Montijano, Sagüés, and
  Llorente}{Palacios-Gasós et~al\mbox{.}}{2016}]%
        {pers_dist_TR16}
\bibfield{author}{\bibinfo{person}{J.~M. Palacios-Gasós}, \bibinfo{person}{E.
  Montijano}, \bibinfo{person}{C. Sagüés}, {and} \bibinfo{person}{S.
  Llorente}.} \bibinfo{year}{2016}\natexlab{}.
\newblock \showarticletitle{Distributed Coverage Estimation and Control for
  Multirobot Persistent Tasks}.
\newblock \bibinfo{journal}{{\em IEEE Transactions on Robotics\/}}
  \bibinfo{volume}{32}, \bibinfo{number}{6} (\bibinfo{date}{Dec}
  \bibinfo{year}{2016}), \bibinfo{pages}{1444--1460}.
\newblock
\showISSN{1552-3098}
\showDOI{%
\url{http://dx.doi.org/10.1109/TRO.2016.2602383}}


\bibitem[\protect\citeauthoryear{Palacios-Gasós, Talebpour, Montijano,
  Sagüés, and Martinoli}{Palacios-Gasós et~al\mbox{.}}{2017}]%
        {pers_ICRA17}
\bibfield{author}{\bibinfo{person}{J.~M. Palacios-Gasós}, \bibinfo{person}{Z.
  Talebpour}, \bibinfo{person}{E. Montijano}, \bibinfo{person}{C. Sagüés},
  {and} \bibinfo{person}{A. Martinoli}.} \bibinfo{year}{2017}\natexlab{}.
\newblock \showarticletitle{Optimal path planning and coverage control for
  multi-robot persistent coverage in environments with obstacles}. In
  \bibinfo{booktitle}{{\em 2017 IEEE International Conference on Robotics and
  Automation (ICRA)}}. \bibinfo{pages}{1321--1327}.
\newblock
\showDOI{%
\url{http://dx.doi.org/10.1109/ICRA.2017.7989156}}


\bibitem[\protect\citeauthoryear{Perera, Zaslavsky, Christen, and
  Georgakopoulos}{Perera et~al\mbox{.}}{2014}]%
        {IoT_context}
\bibfield{author}{\bibinfo{person}{C. Perera}, \bibinfo{person}{A. Zaslavsky},
  \bibinfo{person}{P. Christen}, {and} \bibinfo{person}{D. Georgakopoulos}.}
  \bibinfo{year}{2014}\natexlab{}.
\newblock \showarticletitle{Context Aware Computing for The Internet of Things:
  A Survey}.
\newblock \bibinfo{journal}{{\em IEEE Communications Surveys Tutorials\/}}
  \bibinfo{volume}{16}, \bibinfo{number}{1} (\bibinfo{date}{First}
  \bibinfo{year}{2014}), \bibinfo{pages}{414--454}.
\newblock
\showISSN{1553-877X}
\showDOI{%
\url{http://dx.doi.org/10.1109/SURV.2013.042313.00197}}


\bibitem[\protect\citeauthoryear{Ramakrishna, Kanade, and Sheikh}{Ramakrishna
  et~al\mbox{.}}{2012}]%
        {ramakrishna2012reconstructing}
\bibfield{author}{\bibinfo{person}{Varun Ramakrishna}, \bibinfo{person}{Takeo
  Kanade}, {and} \bibinfo{person}{Yaser Sheikh}.}
  \bibinfo{year}{2012}\natexlab{}.
\newblock \showarticletitle{Reconstructing 3D human pose from 2D image
  landmarks}.
\newblock In \bibinfo{booktitle}{{\em Computer Vision--ECCV 2012}}.
  \bibinfo{publisher}{Springer}, \bibinfo{pages}{573--586}.
\newblock


\bibitem[\protect\citeauthoryear{Saeed, Ammar, Harras, and Zegura}{Saeed
  et~al\mbox{.}}{2015}]%
        {SymbIoT}
\bibfield{author}{\bibinfo{person}{Ahmed Saeed}, \bibinfo{person}{Mostafa
  Ammar}, \bibinfo{person}{Khaled~A. Harras}, {and} \bibinfo{person}{Ellen
  Zegura}.} \bibinfo{year}{2015}\natexlab{}.
\newblock \showarticletitle{Vision: The Case for Symbiosis in the Internet of
  Things}. In \bibinfo{booktitle}{{\em Proceedings of the 6th International
  Workshop on Mobile Cloud Computing and Services}} {\em (\bibinfo{series}{MCS
  '15})}. \bibinfo{publisher}{ACM}, \bibinfo{address}{New York, NY, USA},
  \bibinfo{pages}{23--27}.
\newblock
\showISBNx{978-1-4503-3545-4}
\showDOI{%
\url{http://dx.doi.org/10.1145/2802130.2802133}}


\bibitem[\protect\citeauthoryear{Saeed, Neishaboori, Mohamed, and Harras}{Saeed
  et~al\mbox{.}}{2014}]%
        {una}
\bibfield{author}{\bibinfo{person}{Ahmed Saeed}, \bibinfo{person}{Azin
  Neishaboori}, \bibinfo{person}{Amr Mohamed}, {and} \bibinfo{person}{Khaled~A.
  Harras}.} \bibinfo{year}{2014}\natexlab{}.
\newblock \showarticletitle{Up and away: A visually-controlled easy-to-deploy
  wireless UAV Cyber-Physical testbed}. In \bibinfo{booktitle}{{\em WiMob'14}}.
\newblock


\bibitem[\protect\citeauthoryear{Sahai, Galloway, and Wood}{Sahai
  et~al\mbox{.}}{2013}]%
        {flapping_wing_image}
\bibfield{author}{\bibinfo{person}{R. Sahai}, \bibinfo{person}{K.~C. Galloway},
  {and} \bibinfo{person}{R.~J. Wood}.} \bibinfo{year}{2013}\natexlab{}.
\newblock \showarticletitle{Elastic Element Integration for Improved
  Flapping-Wing Micro Air Vehicle Performance}.
\newblock \bibinfo{journal}{{\em IEEE Transactions on Robotics\/}}
  \bibinfo{volume}{29}, \bibinfo{number}{1} (\bibinfo{date}{Feb}
  \bibinfo{year}{2013}), \bibinfo{pages}{32--41}.
\newblock
\showISSN{1552-3098}
\showDOI{%
\url{http://dx.doi.org/10.1109/TRO.2012.2218936}}


\bibitem[\protect\citeauthoryear{Schwager, Julian, Angermann, and Rus}{Schwager
  et~al\mbox{.}}{2011}]%
        {Eyes_in_the_sky}
\bibfield{author}{\bibinfo{person}{M. Schwager}, \bibinfo{person}{B.~J.
  Julian}, \bibinfo{person}{M. Angermann}, {and} \bibinfo{person}{D. Rus}.}
  \bibinfo{year}{2011}\natexlab{}.
\newblock \showarticletitle{Eyes in the Sky: Decentralized Control for the
  Deployment of Robotic Camera Networks}.
\newblock \bibinfo{journal}{{\it Proc. IEEE}} \bibinfo{volume}{99},
  \bibinfo{number}{9} (\bibinfo{date}{Sept} \bibinfo{year}{2011}),
  \bibinfo{pages}{1541--1561}.
\newblock
\showISSN{0018-9219}
\showDOI{%
\url{http://dx.doi.org/10.1109/JPROC.2011.2158377}}


\bibitem[\protect\citeauthoryear{Schwager, Vitus, Powers, Rus, and
  Tomlin}{Schwager et~al\mbox{.}}{2017}]%
        {robust_cov}
\bibfield{author}{\bibinfo{person}{M. Schwager}, \bibinfo{person}{M.~P. Vitus},
  \bibinfo{person}{S. Powers}, \bibinfo{person}{D. Rus}, {and}
  \bibinfo{person}{C.~J. Tomlin}.} \bibinfo{year}{2017}\natexlab{}.
\newblock \showarticletitle{Robust Adaptive Coverage Control for Robotic Sensor
  Networks}.
\newblock \bibinfo{journal}{{\em IEEE Transactions on Control of Network
  Systems\/}} \bibinfo{volume}{4}, \bibinfo{number}{3} (\bibinfo{date}{Sept}
  \bibinfo{year}{2017}), \bibinfo{pages}{462--476}.
\newblock
\showDOI{%
\url{http://dx.doi.org/10.1109/TCNS.2015.2512326}}


\bibitem[\protect\citeauthoryear{Tezcan and Wang}{Tezcan and Wang}{2008}]%
        {tezcan2008self}
\bibfield{author}{\bibinfo{person}{Nurcan Tezcan} {and} \bibinfo{person}{Wenye
  Wang}.} \bibinfo{year}{2008}\natexlab{}.
\newblock \showarticletitle{Self-orienting wireless multimedia sensor networks
  for occlusion-free viewpoints}.
\newblock \bibinfo{journal}{{\em Computer networks\/}} \bibinfo{volume}{52},
  \bibinfo{number}{13} (\bibinfo{year}{2008}), \bibinfo{pages}{2558--2567}.
\newblock


\bibitem[\protect\citeauthoryear{Tijmons, de~Croon, Remes, Wagter, and
  Mulder}{Tijmons et~al\mbox{.}}{2017}]%
        {MAV_stereo_TR17}
\bibfield{author}{\bibinfo{person}{S. Tijmons}, \bibinfo{person}{G.~C. H.~E. de
  Croon}, \bibinfo{person}{B.~D.~W. Remes}, \bibinfo{person}{C.~De Wagter},
  {and} \bibinfo{person}{M. Mulder}.} \bibinfo{year}{2017}\natexlab{}.
\newblock \showarticletitle{Obstacle Avoidance Strategy using Onboard Stereo
  Vision on a Flapping Wing MAV}.
\newblock \bibinfo{journal}{{\em IEEE Transactions on Robotics\/}}
  \bibinfo{volume}{33}, \bibinfo{number}{4} (\bibinfo{date}{Aug}
  \bibinfo{year}{2017}), \bibinfo{pages}{858--874}.
\newblock
\showISSN{1552-3098}
\showDOI{%
\url{http://dx.doi.org/10.1109/TRO.2017.2683530}}


\bibitem[\protect\citeauthoryear{Tokekar and Isler}{Tokekar and Isler}{2014}]%
        {Tokekar2014}
\bibfield{author}{\bibinfo{person}{P. Tokekar} {and} \bibinfo{person}{V.
  Isler}.} \bibinfo{year}{2014}\natexlab{}.
\newblock \showarticletitle{Polygon guarding with orientation}. In
  \bibinfo{booktitle}{{\em 2014 IEEE International Conference on Robotics and
  Automation (ICRA)}}. \bibinfo{pages}{1014--1019}.
\newblock


\bibitem[\protect\citeauthoryear{Tokekar, Isler, and Franchi}{Tokekar
  et~al\mbox{.}}{2014}]%
        {multi_tracking_IROS14}
\bibfield{author}{\bibinfo{person}{P. Tokekar}, \bibinfo{person}{V. Isler},
  {and} \bibinfo{person}{A. Franchi}.} \bibinfo{year}{2014}\natexlab{}.
\newblock \showarticletitle{Multi-target visual tracking with aerial robots}.
  In \bibinfo{booktitle}{{\em 2014 IEEE/RSJ International Conference on
  Intelligent Robots and Systems}}. \bibinfo{pages}{3067--3072}.
\newblock
\showISSN{2153-0858}
\showDOI{%
\url{http://dx.doi.org/10.1109/IROS.2014.6942986}}


\bibitem[\protect\citeauthoryear{Urrutia et~al\mbox{.}}{Urrutia
  et~al\mbox{.}}{2000}]%
        {urrutia2000art}
\bibfield{author}{\bibinfo{person}{Jorge Urrutia} {and}
  \bibinfo{person}{others}.} \bibinfo{year}{2000}\natexlab{}.
\newblock \showarticletitle{Art gallery and illumination problems}.
\newblock \bibinfo{journal}{{\em Handbook of computational geometry\/}}
  \bibinfo{volume}{1}, \bibinfo{number}{1} (\bibinfo{year}{2000}),
  \bibinfo{pages}{973--1027}.
\newblock


\bibitem[\protect\citeauthoryear{Vanhoutte, Ruffier, and Serres}{Vanhoutte
  et~al\mbox{.}}{2017}]%
        {open_hardware_Sensors17}
\bibfield{author}{\bibinfo{person}{E. Vanhoutte}, \bibinfo{person}{F. Ruffier},
  {and} \bibinfo{person}{J. Serres}.} \bibinfo{year}{2017}\natexlab{}.
\newblock \showarticletitle{A quasi-panoramic bio-inspired eye for flying
  parallel to walls}. In \bibinfo{booktitle}{{\em 2017 IEEE SENSORS}}.
  \bibinfo{pages}{1--3}.
\newblock
\showDOI{%
\url{http://dx.doi.org/10.1109/ICSENS.2017.8234110}}


\bibitem[\protect\citeauthoryear{Vegter}{Vegter}{1990}]%
        {Vegter1990}
\bibfield{author}{\bibinfo{person}{Gert Vegter}.}
  \bibinfo{year}{1990}\natexlab{}.
\newblock \showarticletitle{The visibility diagram: A data structure for
  visibility problems and motion planning}. In \bibinfo{booktitle}{{\em 2nd
  Scandinavian Workshop on Algorithm Theory}}.
\newblock


\bibitem[\protect\citeauthoryear{Wang, Krishnamurti, and Gupta}{Wang
  et~al\mbox{.}}{2007}]%
        {wang2007metric}
\bibfield{author}{\bibinfo{person}{Pengpeng Wang}, \bibinfo{person}{Ramesh
  Krishnamurti}, {and} \bibinfo{person}{Kamal Gupta}.}
  \bibinfo{year}{2007}\natexlab{}.
\newblock \showarticletitle{Metric view planning problem with traveling cost
  and visibility range}. In \bibinfo{booktitle}{{\em Robotics and Automation,
  2007 IEEE International Conference on}}. IEEE, \bibinfo{pages}{1292--1297}.
\newblock


\bibitem[\protect\citeauthoryear{Wang, Chowdhery, and Chiang}{Wang
  et~al\mbox{.}}{2016}]%
        {video_streaming}
\bibfield{author}{\bibinfo{person}{Xiaoli Wang}, \bibinfo{person}{Aakanksha
  Chowdhery}, {and} \bibinfo{person}{Mung Chiang}.}
  \bibinfo{year}{2016}\natexlab{}.
\newblock \showarticletitle{SkyEyes: Adaptive Video Streaming from UAVs}. In
  \bibinfo{booktitle}{{\em Proceedings of the 3rd Workshop on Hot Topics in
  Wireless}} {\em (\bibinfo{series}{HotWireless '16})}.
  \bibinfo{publisher}{ACM}, \bibinfo{address}{New York, NY, USA},
  \bibinfo{pages}{2--6}.
\newblock
\showISBNx{978-1-4503-4251-3}
\showDOI{%
\url{http://dx.doi.org/10.1145/2980115.2980119}}


\bibitem[\protect\citeauthoryear{Wang, Chowdhery, and Chiang}{Wang
  et~al\mbox{.}}{2017}]%
        {sports}
\bibfield{author}{\bibinfo{person}{X. Wang}, \bibinfo{person}{A. Chowdhery},
  {and} \bibinfo{person}{M. Chiang}.} \bibinfo{year}{2017}\natexlab{}.
\newblock \showarticletitle{Networked Drone Cameras for Sports Streaming}. In
  \bibinfo{booktitle}{{\em 2017 IEEE 37th International Conference on
  Distributed Computing Systems (ICDCS)}}. \bibinfo{pages}{308--318}.
\newblock
\showISSN{1063-6927}
\showDOI{%
\url{http://dx.doi.org/10.1109/ICDCS.2017.200}}


\bibitem[\protect\citeauthoryear{Wang and Cao}{Wang and Cao}{2011a}]%
        {wang2011barrier}
\bibfield{author}{\bibinfo{person}{Yi Wang} {and} \bibinfo{person}{Guohong
  Cao}.} \bibinfo{year}{2011}\natexlab{a}.
\newblock \showarticletitle{Barrier coverage in camera sensor networks}. In
  \bibinfo{booktitle}{{\em MobiHoc '11}}.
\newblock


\bibitem[\protect\citeauthoryear{Wang and Cao}{Wang and Cao}{2011b}]%
        {wang2011full}
\bibfield{author}{\bibinfo{person}{Yi Wang} {and} \bibinfo{person}{Guohong
  Cao}.} \bibinfo{year}{2011}\natexlab{b}.
\newblock \showarticletitle{On full-view coverage in camera sensor networks}.
  In \bibinfo{booktitle}{{\em INFOCOM'11}}.
\newblock


\bibitem[\protect\citeauthoryear{Weinland, {\"O}zuysal, and Fua}{Weinland
  et~al\mbox{.}}{2010}]%
        {weinland2010making}
\bibfield{author}{\bibinfo{person}{Daniel Weinland}, \bibinfo{person}{Mustafa
  {\"O}zuysal}, {and} \bibinfo{person}{Pascal Fua}.}
  \bibinfo{year}{2010}\natexlab{}.
\newblock \showarticletitle{Making action recognition robust to occlusions and
  viewpoint changes}.
\newblock In \bibinfo{booktitle}{{\em European Conference on Computer Vision}}.
  \bibinfo{pages}{635--648}.
\newblock


\bibitem[\protect\citeauthoryear{Wood}{Wood}{2008}]%
        {Wood_TR08}
\bibfield{author}{\bibinfo{person}{R.~J. Wood}.}
  \bibinfo{year}{2008}\natexlab{}.
\newblock \showarticletitle{The First Takeoff of a Biologically Inspired
  At-Scale Robotic Insect}.
\newblock \bibinfo{journal}{{\em IEEE Transactions on Robotics\/}}
  \bibinfo{volume}{24}, \bibinfo{number}{2} (\bibinfo{date}{April}
  \bibinfo{year}{2008}), \bibinfo{pages}{341--347}.
\newblock
\showISSN{1552-3098}
\showDOI{%
\url{http://dx.doi.org/10.1109/TRO.2008.916997}}


\bibitem[\protect\citeauthoryear{Wu and Wang}{Wu and Wang}{2012}]%
        {wu2012achieving}
\bibfield{author}{\bibinfo{person}{Yibo Wu} {and} \bibinfo{person}{Xinbing
  Wang}.} \bibinfo{year}{2012}\natexlab{}.
\newblock \showarticletitle{Achieving full view coverage with randomly-deployed
  heterogeneous camera sensors}. In \bibinfo{booktitle}{{\em ICDCS'12}}.
\newblock


\bibitem[\protect\citeauthoryear{Xu, He, and Li}{Xu et~al\mbox{.}}{2014}]%
        {IoT_industry}
\bibfield{author}{\bibinfo{person}{L.~D. Xu}, \bibinfo{person}{W. He}, {and}
  \bibinfo{person}{S. Li}.} \bibinfo{year}{2014}\natexlab{}.
\newblock \showarticletitle{Internet of Things in Industries: A Survey}.
\newblock \bibinfo{journal}{{\em IEEE Transactions on Industrial
  Informatics\/}} \bibinfo{volume}{10}, \bibinfo{number}{4}
  (\bibinfo{date}{Nov} \bibinfo{year}{2014}), \bibinfo{pages}{2233--2243}.
\newblock
\showISSN{1551-3203}
\showDOI{%
\url{http://dx.doi.org/10.1109/TII.2014.2300753}}


\bibitem[\protect\citeauthoryear{Yildiz, Akkaya, Sisikoglu, and Sir}{Yildiz
  et~al\mbox{.}}{2014}]%
        {6471967}
\bibfield{author}{\bibinfo{person}{Enes Yildiz}, \bibinfo{person}{Kemal
  Akkaya}, \bibinfo{person}{Esra Sisikoglu}, {and} \bibinfo{person}{Mustafa~Y
  Sir}.} \bibinfo{year}{2014}\natexlab{}.
\newblock \showarticletitle{Optimal Camera Placement for Providing Angular
  Coverage in Wireless Video Sensor Networks}.
\newblock \bibinfo{journal}{{\it IEEE Trans. Comput.}} \bibinfo{volume}{63},
  \bibinfo{number}{7} (\bibinfo{year}{2014}), \bibinfo{pages}{1812--1825}.
\newblock


\bibitem[\protect\citeauthoryear{Yu, Yang, Teng, Champion, and Xuan}{Yu
  et~al\mbox{.}}{2015}]%
        {7218437}
\bibfield{author}{\bibinfo{person}{Zuoming Yu}, \bibinfo{person}{Fan Yang},
  \bibinfo{person}{Jin Teng}, \bibinfo{person}{A.C. Champion}, {and}
  \bibinfo{person}{Dong Xuan}.} \bibinfo{year}{2015}\natexlab{}.
\newblock \showarticletitle{Local face-view barrier coverage in camera sensor
  networks}. In \bibinfo{booktitle}{{\em INFOCOM'15}}.
\newblock


\bibitem[\protect\citeauthoryear{Zanella, Bui, Castellani, Vangelista, and
  Zorzi}{Zanella et~al\mbox{.}}{2014}]%
        {IoT_cities}
\bibfield{author}{\bibinfo{person}{A. Zanella}, \bibinfo{person}{N. Bui},
  \bibinfo{person}{A. Castellani}, \bibinfo{person}{L. Vangelista}, {and}
  \bibinfo{person}{M. Zorzi}.} \bibinfo{year}{2014}\natexlab{}.
\newblock \showarticletitle{Internet of Things for Smart Cities}.
\newblock \bibinfo{journal}{{\em IEEE Internet of Things Journal\/}}
  \bibinfo{volume}{1}, \bibinfo{number}{1} (\bibinfo{date}{Feb}
  \bibinfo{year}{2014}), \bibinfo{pages}{22--32}.
\newblock
\showISSN{2327-4662}
\showDOI{%
\url{http://dx.doi.org/10.1109/JIOT.2014.2306328}}


\end{thebibliography}

\end{document}